\documentclass{article}

\usepackage{microtype}
\usepackage{graphicx}
\usepackage{subcaption}
\usepackage{booktabs} 
\usepackage{array}
\usepackage{float}

\usepackage{hyperref}

\usepackage[accepted]{icml2026}

\usepackage{amsmath}
\usepackage{amssymb}
\usepackage{mathtools}
\usepackage{amsthm}
\usepackage{booktabs}
\usepackage{threeparttable}
\usepackage{tabularx}

\usepackage[capitalize,noabbrev]{cleveref}

\theoremstyle{plain}
\newtheorem{theorem}{Theorem}[section]
\newtheorem{proposition}[theorem]{Proposition}
\newtheorem{lemma}[theorem]{Lemma}
\newtheorem{corollary}[theorem]{Corollary}
\theoremstyle{definition}

\theoremstyle{remark}

\newcommand{\td}{\text{d}}
\newcommand{\expmap}{\operatorname{Exp}}
\newcommand{\logmap}{\operatorname{Log}}
\newcommand{\sg}{\operatorname{\mathbf{sg}}}

\usepackage[textsize=tiny]{todonotes}

\icmltitlerunning{Riemannian MeanFlow for One-Step Generation on Manifolds}

\begin{document}

\twocolumn[
  \icmltitle{Riemannian MeanFlow for One-Step Generation on Manifolds}



  \icmlsetsymbol{equal}{*}

  \begin{icmlauthorlist}
    \icmlauthor{Zichen Zhong}{yyy}
    \icmlauthor{Haoliang Sun}{yyy}
    \icmlauthor{Yukun Zhao}{yyy}
    \icmlauthor{Yongshun Gong}{yyy}
    \icmlauthor{Yilong Yin}{yyy}
  \end{icmlauthorlist}

  \icmlaffiliation{yyy}{School of Software, Shandong University, Jinan, China}

  \icmlcorrespondingauthor{Haoliang Sun}{haolsun@sdu.edu.cn}

  \icmlkeywords{Machine Learning, ICML}

  \vskip 0.3in
]



\printAffiliationsAndNotice{}  

\begin{abstract}
Flow Matching enables simulation-free training of generative models on Riemannian manifolds, yet sampling typically still relies on numerically integrating a probability-flow ODE. We propose Riemannian MeanFlow (RMF), extending MeanFlow to manifold-valued generation where velocities lie in location-dependent tangent spaces. RMF defines an average-velocity field via parallel transport and derives a Riemannian MeanFlow identity that links average and instantaneous velocities for intrinsic supervision. We make this identity practical in a log-map tangent representation, avoiding trajectory simulation and heavy geometric computations. For stable optimization, we decompose the RMF objective into two terms and apply conflict-aware multi-task learning to mitigate gradient interference. RMF also supports conditional generation via classifier-free guidance. Experiments on spheres, tori, SO(3), and SE(3) demonstrate competitive one-step sampling with improved quality-efficiency trade-offs and substantially reduced sampling cost.

\end{abstract}

\section{Introduction}

Denoising generative models, including diffusion models \cite{ho2020denoising,song2021scorebased} and Flow Matching (FM) \cite{liu2023flow,lipman2023flow}, are increasingly extended beyond Euclidean spaces to address scientific and engineering tasks on structured, non-Euclidean domains \cite{uehara2025reward,wen2025diffusionvla,holderrieth2025generator}. Flow Matching learns a time-dependent velocity field whose induced probability-flow ODE transports a base distribution to the data distribution. On Riemannian manifolds, Riemannian Flow Matching (RFM) retains key advantages such as simulation-free training and favorable scalability \cite{rfm}. However, sampling typically still requires numerically integrating the learned ODE on the manifold, which can be slow and may require many solver steps for high-quality samples; analogous iterative bottlenecks also arise in diffusion models.

A long line of work accelerates Euclidean sampling by learning \emph{shortcuts} to multi-step samplers. Progressive distillation \cite{ProgressiveDistillation} distills pretrained samplers into fewer steps. Consistency models \cite{CMs} enable one-stage training but often rely on carefully chosen schedules. Shortcut models \cite{ShortcutModels}, and IMM \cite{IMM} move toward end-to-end training and can achieve one-step denoising. MeanFlow \cite{geng2025MeanFlow} further parameterizes \emph{long-range average velocity} through the MeanFlow identity, avoiding additional two-time self-consistency constraints and improving training stability. Recent unifications, such as Flow Map Matching \cite{boffi2025build} and $\alpha$-Flow \cite{zhang2025alphaflow}, connect these approaches under a common framework.

Despite this progress, extending MeanFlow-style generation to Riemannian manifolds is non-trivial. On a manifold, instantaneous velocities lie in point-dependent tangent spaces and must be compared under the Riemannian metric; as a result, even defining an ``average velocity'' requires mapping vectors to a common space, and naively reusing Euclidean identities breaks geometric consistency (Fig.~\ref{fig:intro}).

\begin{figure}[]
  \begin{center}
    \centerline{\includegraphics[width=0.87\columnwidth]{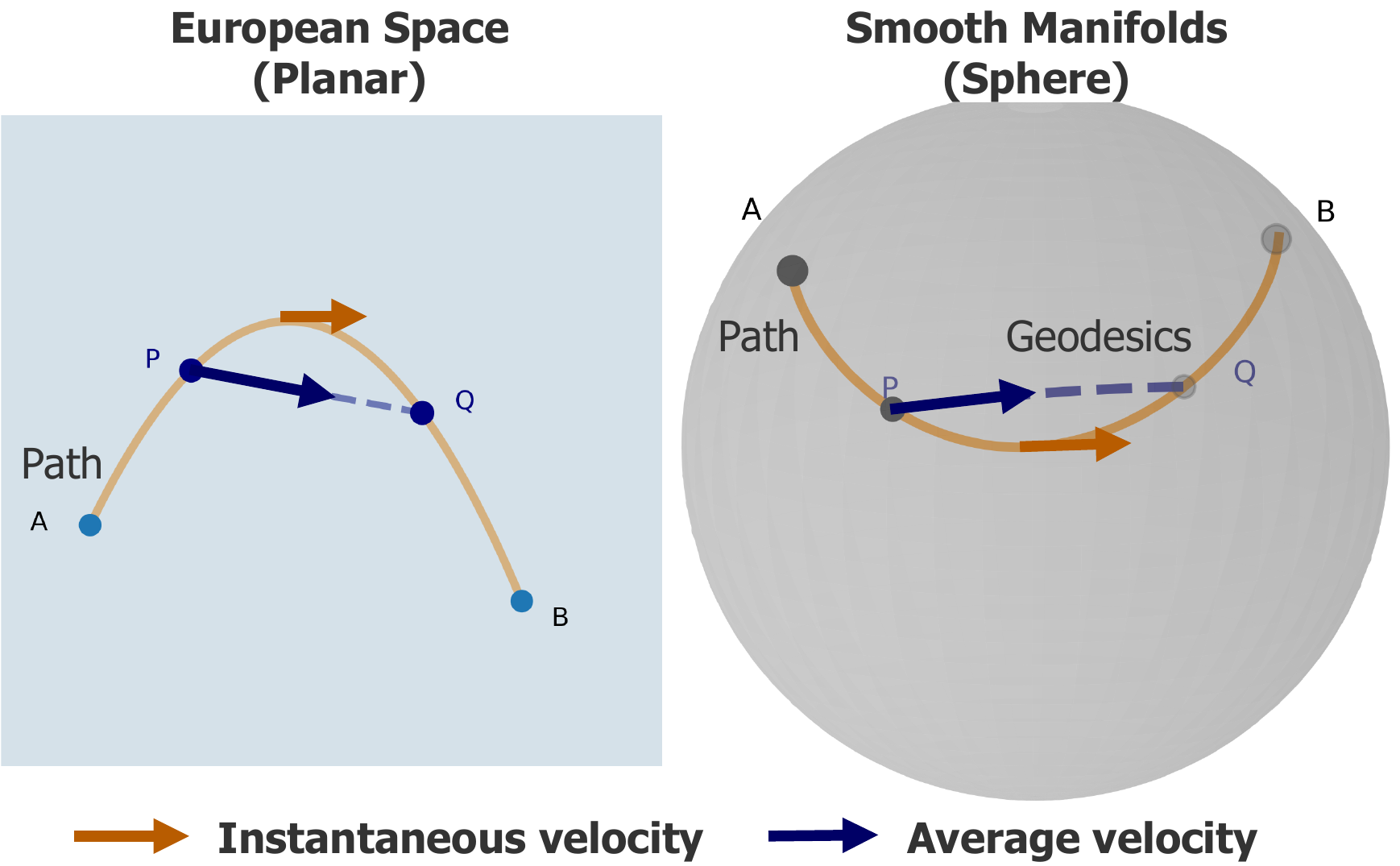}}
    \caption{Instantaneous velocity (A→B) vs. average velocity (P→Q) on Euclidean straight lines and manifold geodesics.}
    \label{fig:intro}
  \end{center}
 \vspace{-30pt}
\end{figure}

To address these challenges, we propose \textbf{Riemannian MeanFlow (RMF)}, a fast generative modeling framework for Riemannian manifolds. RMF defines the average velocity over an interval by parallel-transporting instantaneous velocities along the trajectory to the tangent space at the current state and averaging them there. Starting from this geometric definition, we derive a \emph{Riemannian MeanFlow identity} that relates the average velocity to the instantaneous velocity via a covariant derivative along the path. To make this identity usable in practice, we map nearby manifold points to tangent displacements, so the required directional derivatives can be computed in a single Euclidean vector space without complex geometric computations or trajectory simulation.

RMF also introduces a stable optimization view of the resulting objective. We show that the RMF loss decomposes into two terms whose gradients can conflict in practice, and we cast the decomposition as a two-task learning problem with shared parameters~\cite{caruana1997multitask,sener2018multi}. We adopt a lightweight conflict-aware update based on PCGrad \cite{yu2020gradient} to mitigate gradient interference without manual schedule tuning. Finally, RMF supports conditional generation via classifier-free guidance, combining conditional and unconditional predictions in the common tangent space.

Contributions are summarized as
\vspace{-11pt}
\begin{itemize}
\setlength{\itemsep}{-0.6pt}
\setlength{\topsep}{0pt}
\setlength{\parsep}{0pt}
\setlength{\partopsep}{0pt}
\setlength{\leftmargin}{0.1em}
    \item We generalize MeanFlow to Riemannian manifolds by defining average velocity via parallel transport and deriving an intrinsic MeanFlow identity for supervision.
    \item We develop a practical, geometry-consistent training rule that operates in a common tangent space using logarithm maps, avoiding coordinate-based covariant computations and trajectory simulation.
    \item We improve optimization stability by casting the decomposed RMF objective as a two-task problem and applying PCGrad, and we further support conditional generation via classifier-free guidance.
    \item We evaluate RMF on spherical Earth/climate fields, protein structures on a flat torus, synthetic rotation on $\mathrm{SO}(3)$, and robotic grasping dataset on SE(3), achieving strong quality--efficiency trade-offs with substantially reduced sampling cost.
\end{itemize}

\section{Preliminaries}

\textbf{MeanFlow}~\cite{geng2025MeanFlow} replaces the instantaneous velocity with an \emph{average} velocity field, enabling one-step generation without iteratively solving an ODE at inference time. Formally, given target data $x_1\sim p_1$ and a noise source $x_0\sim p_0$, Flow Matching constructs a time-dependent interpolation path/flow $x_t = \sigma_t x_1 + \mu_t x_0, t\sim\mathcal{U}(0,1)$, where $\sigma_t$ and $\mu_t$ are predefined schedules (a common choice is $\sigma_t = 1-t$ and $\mu_t=t$)~\cite{lipman2024flow}. Differentiating the path yields the (instantaneous) velocity along the trajectory, $v_t(x_t) \;=\; \frac{\mathrm{d}x_t}{\mathrm{d}t}$, which induces a probability path $\{p_t\}_{t\in[0,1]}$ to push the source distribution to the target distribution. Standard Flow Matching trains a neural network to approximate the ground-truth velocity field and generates samples by numerically solving the corresponding ODE. To accelerate inference, MeanFlow defines the \emph{average velocity} over an interval $[r,t]$ as
\begin{equation}
u(x_t,r,t) \;=\; \frac{1}{t-r}\int_{r}^{t} v_\tau(x_\tau)\,\mathrm{d}\tau,
\label{eq:mf_avg_velocity}
\end{equation}
so that in principle a one-step generation is achieved when $r=0$ and $t=1$. Directly computing the ground-truth average velocity in Eq.~\eqref{eq:mf_avg_velocity} is intractable in general. MeanFlow therefore leverages an equivalent relation with the instantaneous velocity, obtained by differentiating the definition of $u$ with respect to $t$,
\begin{equation}
u(x_t,r,t)  \;=\; v_t(x_t) - (t-r)\,\frac{\partial u(x_t,r,t)}{\partial t},
\label{eq:mf_relation}
\end{equation}
which provides a feasible training objective for learning the average-velocity field.

\textbf{Riemannian Flow Matching} \cite{rfm} extends Flow Matching to smooth manifolds. Let $(\mathcal{M},g)$ be a smooth Riemannian manifold, where $g$ induces the geodesic distance, denote the tangent space at $x\in\mathcal{M}$ by $T_x\mathcal{M}$, and induces the Riemannian volume measure $d\mathrm{vol}_g$. We regard $p_0$ and $p_1$ as probability densities with respect to $d\mathrm{vol}$, where $p_1$ is the data distribution on $\mathcal{M}$ and $p_0$ is a simple base distribution on $\mathcal{M}$, commonly chosen as the uniform distribution on smooth manifolds. Analogous to the Euclidean case, consider a time-dependent flow $\psi_t:\mathcal{M}\to\mathcal{M}$ that induces a well defined probability path $\{p_t\}_{t\in[0,1]}$ on $\mathcal{M}$. Following RFM, probability paths are defined through conditional paths:
$p_t(x)=\int p_t(x\mid x_1)q(x_1)d\text{vol}x_1$, where $q(x_1)=p_1$, $p_t(x\mid x_1)$ are conditional paths. The path satisfies $p_0(\cdot \mid x_1)=p_0$, $p_1(\cdot \mid x_1)=p_1$. The flow is defined as the solution to the ODE $\frac{\td}{\td t}\psi_t(x)=v_t(\psi_t(x)),$ where $v_t(\cdot)\in T_{\psi_t(\cdot)}\mathcal{M}$ is a time-dependent vector field. When geodesics admit closed-form expressions, a geodesic interpolation flow can be written via the exponential and logarithm maps as 
\begin{equation}\label{eq:x_t}
x_t=\psi_t(x_0 |x_1)=\expmap_{x_1}\!\big(\kappa(t)\,\logmap_{x_1}(x_0)\big),
\end{equation}
where $\psi_t(x_0\mid x_1)$ denotes the point obtained by moving from $x_1$ toward $x_0$ along the geodesic according to $\kappa(t)$, and $\kappa:[0,1]\to[0,1]$ is monotonically decreasing with $\kappa(0)=1$ and $\kappa(1)=0$, and is typically set to $\kappa(t) =1-t$. A neural network $v_\theta$ is trained to approximate the ground-truth velocity field along the interpolation by minimizing
\begin{equation}
\mathcal{L}_{\mathrm{RFM}}(\theta)
=\underset{t,x_0,x_1}{\mathbb{E}} \Big[\big\|v_\theta(x_t)-v_t(\psi_t(x_0\mid x_1))\big\|_g^2\Big],
\end{equation}
where $t\sim\mathcal{U}(0,1)$, $x_1\sim p_1$, $x_0\sim p_0$, $x_t=\psi_t(x_0\mid x_1)$, and $\|\cdot\|_g$ denotes the norm induced by the Riemannian metric on the corresponding tangent space. At sampling time, we generate samples by numerically integrating the learned probability-flow ODE $\frac{\td}{\td t} x_t =v_\theta(x_t)$ from an initial draw $x_0\sim p_0$ to $t=1$ using a standard ODE solver (e.g., Euler or Runge--Kutta \cite{hairer1993solving}).

\section{Riemannian MeanFlow}
\subsection{Average Velocity on Riemannian Manifolds}

On a Riemannian manifold $(\mathcal{M},g)$, instantaneous velocities are tangent vectors that live in
point-dependent spaces $T_{x_t}\mathcal{M}$. Consequently, a naive time average of velocities along a trajectory
is ill-defined unless all vectors are first mapped to a common vector space. We therefore define the
\emph{average velocity} at time $t$ by parallel-transporting instantaneous velocities to $T_{x_t}\mathcal{M}$:
\begin{equation}\label{eq:rmf_avg_velocity}
u(x_t,r,t)
=\frac{1}{t-r}\int_{r}^{t}\mathcal{P}^{\gamma}_{\tau\rightarrow t}\!\left(v(x_\tau,\tau)\right)\,\mathrm{d}\tau,
\end{equation}
where $r<t$ is a fixed reference time (independent of $t$), $\gamma:[r,t]\to\mathcal{M}$ is a smooth trajectory with
$x_\tau=\gamma(\tau)$, and $v(x_\tau,\tau)\in T_{x_\tau}\mathcal{M}$ is the instantaneous velocity field along $\gamma$.
The operator $\mathcal{P}^{\gamma}_{\tau\rightarrow t}:T_{x_\tau}\mathcal{M}\to T_{x_t}\mathcal{M}$ denotes parallel
transport induced by the Levi--Civita connection along $\gamma|_{[\tau,t]}$, which makes the integral well-defined.
When $\mathcal{M}=\mathbb{R}^d$ (Euclidean space), parallel transport reduces to the identity, and Eq.~\eqref{eq:rmf_avg_velocity}
recovers the standard Euclidean time average.

\noindent\textbf{From an intractable definition to a tractable identity.}
Although Eq.~\eqref{eq:rmf_avg_velocity} is geometrically natural, it is not directly usable as supervision:
evaluating the integral requires explicit access to the entire trajectory segment $\{x_\tau\}_{\tau\in[r,t]}$ and
parallel-transporting $v(x_\tau,\tau)$ for all $\tau$. To avoid trajectory simulation, we derive an identity that
relates the average velocity to the instantaneous velocity through a covariant derivative at time $t$.

\begin{proposition}[Riemannian MeanFlow Identity]\label{prop:rmf_identity}
Let $\gamma:[r,t]\rightarrow\mathcal M$ be a smooth trajectory with $x_\tau=\gamma(\tau)$ and $t>r$.
Define $u(x_t,r,t)\in T_{x_t}\mathcal M$ by Eq.~\eqref{eq:rmf_avg_velocity}. If $v(\cdot,\tau)$ is sufficiently
smooth along $\gamma$, then
\begin{equation}\label{eq:rfm_identity}
u(x_t,r,t) \;=\; v(x_t,t) - (t-r)\,\nabla_{\dot\gamma(t)}u(x_t,r,t).
\end{equation}
\end{proposition}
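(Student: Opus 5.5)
We will prove the identity by treating $U(s):=(s-r)\,u(x_s,r,s)$ as a vector field along $\gamma$ and computing its covariant derivative $\frac{D}{ds}$ along $\gamma$ (induced by the Levi--Civita connection). Throughout, $r$ is fixed and $u$ is evaluated along the trajectory, so $\nabla_{\dot\gamma(t)}u(x_t,r,t)$ means the covariant derivative along $\gamma$ of $s\mapsto u(x_s,r,s)$ at $s=t$.

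The key device is a parallel frame. Pick an orthonormal basis of $T_{x_r}\mathcal M$ and parallel-transport it along $\gamma$ to get frame fields $E_1(s),\dots,E_d(s)$ with $\frac{D}{ds}E_i=0$. Transport along $\gamma$ composes, $\mathcal P^{\gamma}_{\tau\to s}=\mathcal P^{\gamma}_{r\to s}\circ\mathcal P^{\gamma}_{\tau\to r}$. Hence if $v(x_\tau,\tau)=\sum_i a_i(\tau)E_i(\tau)$, then $\mathcal P^{\gamma}_{\tau\to s}v(x_\tau,\tau)=\sum_i a_i(\tau)E_i(s)$. The coefficients $a_i$ are smooth in $\tau$ by the smoothness assumption on $v$ and on $\gamma$. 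Substituting into Eq.~\eqref{eq:rmf_avg_velocity} gives
\[
U(s)=\sum_i\Big(\int_r^s a_i(\tau)\,\mathrm d\tau\Big)E_i(s).
\]
The Levi--Civita connection is compatible with this frame, so the covariant derivative of a field $\sum_i c_i(s)E_i(s)$ is $\sum_i c_i'(s)E_i(s)$. Applying the fundamental theorem of calculus componentwise then yields $\frac{D}{ds}U(s)=\sum_i a_i(s)E_i(s)=v(x_s,s)$.

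On the other hand, by the Leibniz rule for covariant differentiation of a scalar times a vector field along a curve, $\frac{D}{ds}\big[(s-r)u\big]=u+(s-r)\frac{D}{ds}u$. The derivative of $(s-r)$ is $1$ because $r$ does not depend on $s$. Equating the two expressions at $s=t$ gives $v(x_t,t)=u(x_t,r,t)+(t-r)\nabla_{\dot\gamma(t)}u(x_t,r,t)$, which rearranges to Eq.~\eqref{eq:rfm_identity}.

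The main subtlety is not the calculus but making the notation $\nabla_{\dot\gamma(t)}u$ precise. The object $u(\cdot,r,t)$ is a priori defined only along $\gamma$, not as a vector field on $\mathcal M$, because it depends on the trajectory through $x_t$. We will therefore state explicitly that $\nabla_{\dot\gamma}$ denotes the induced covariant derivative along the curve. This derivative depends only on the values along $\gamma$, and it agrees with $\nabla_{\dot\gamma}$ of any smooth extension; this is the standard uniqueness of the induced connection. We will also check that the definition at $s=t$ and at nearby $s$ uses the same trajectory segment, so that differentiating in the upper limit is legitimate. We will note the Euclidean sanity check as well: there $E_i$ are constant and the identity reduces to Eq.~\eqref{eq:mf_relation}.
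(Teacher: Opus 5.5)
Your proposal is correct and follows the same route as the paper's proof sketch: multiply Eq.~\eqref{eq:rmf_avg_velocity} by $(t-r)$, differentiate covariantly in $t$, and combine the Leibniz rule with the fundamental theorem of calculus. Your parallel-frame computation supplies the justification the paper's one-line sketch leaves implicit, namely how to differentiate an integral whose integrand is transported into the moving space $T_{x_t}\mathcal M$. Your reading of $\nabla_{\dot\gamma(t)}u$ as the total covariant derivative $\frac{D}{dt}$ along $\gamma$, including the explicit $t$-dependence, is the one consistent with the paper's coordinate expansion in Appendix~\ref{app:local_coords}.
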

\textit{Proof sketch.} We first multiply both sides of Eq. (\ref{eq:rmf_avg_velocity}) by $(t-r)$ then differentiate both sides with respect to $t$.

Here, $\dot\gamma(t)\in T_{x_t}\mathcal M$ is the trajectory velocity at time $t$. Since $u(\cdot)$ is a vector field
along $\gamma$, the appropriate derivative is the covariant derivative $\nabla_{\dot\gamma(t)}u(x_t,r,t)$.
Proposition~\ref{prop:rmf_identity} shows that it suffices to estimate the instantaneous velocity $v(x_t,t)$ and the
local derivative term $\nabla_{\dot\gamma(t)}u(x_t,r,t)$; no time integration is required. For completeness, we provide the \textit{standard local-coordinate expansion} of $\nabla_{\dot\gamma(t)}u(x_t,r,t)$ (i.e., the chain rule with Christoffel corrections) in Appendix~\ref{app:local_coords}.

\noindent\textbf{Computing the derivative term without Christoffel symbols.}
A direct coordinate implementation of $\nabla_{\dot\gamma(t)}u(x_t,r,t)$ requires manipulating a local basis and the
associated Christoffel symbols, which is inconvenient in practice. Instead, we work in a common tangent space
$T_{x_t}\mathcal M$ and represent nearby states via logarithmic maps $\logmap_{x_t}(\cdot)$ (within a normal
neighborhood). This avoids explicit parallel transport and enables consistent Euclidean computations in $T_{x_t}\mathcal M$.

We parameterize the interpolation path using $\expmap$ and $\logmap$ (cf.~Eq.~\eqref{eq:x_t}):
\begin{equation}\label{eq:xt_kappa}
x_t=\expmap_{x_1}\!\big(\kappa(t)\,\logmap_{x_1}(x_0)\big),
\end{equation}
where $\kappa:[0,1]\to[0,1]$ is monotone decreasing with $\kappa(0)=1$ and $\kappa(1)=0$ (typically $\kappa(t)=1-t$). Closed-form expressions of $\expmap$ and $\logmap$ for the manifolds considered in this work are summarized in
Appendix~\ref{app:geodesics}.
Differentiating Eq.~\eqref{eq:xt_kappa} gives the path velocity
\begin{equation}\label{eq:xt_dot}
\dot x_t := \frac{\td x_t}{\td t}
= -\frac{\kappa'(t)}{\kappa(t)}\,\logmap_{x_t}(x_1),
\end{equation}
and in particular, when $\kappa(t)=1-t$,
\begin{equation}\label{eq:xt_dot_linear}
\dot x_t=\frac{1}{1-t}\logmap_{x_t}(x_1).
\end{equation}
For brevity, we denote the instantaneous velocity by $v(x_t,t):=\dot x_t$.

Next, we approximate the unknown average-velocity field by a neural network $u_\theta(x_t,r,t)$, i.e.,
$u(x_t,r,t)\approx u_\theta(x_t,r,t)$. Using Proposition~\ref{prop:rmf_identity}, we form a computable target by
replacing $\nabla_{\dot\gamma(t)}u(x_t,r,t)$ with the directional derivative of the network along the path:
\begin{equation}\label{eq:ugt_def}
u_{\text{gt}}(x_t,r,t)
:= v(x_t,t) - (t-r)\,\Big(\dot x_t\,\partial_{x_t}u_\theta+\partial_t u_\theta\Big).
\end{equation}
Here $\partial_{x_t}u_\theta$ and $\partial_t u_\theta$ denote Jacobian terms with respect to the input $x_t$ and time
$t$, respectively, and the product $\dot x_t\,\partial_{x_t}u_\theta$ is interpreted as a Jacobian--vector product.
In practice, we compute these quantities efficiently via Jacobian--vector products (JVPs), avoiding higher-order
derivatives and coordinate-based covariant calculations.

\subsection{A Multi-Task View of the Decomposed Objective}
\label{sec:pcgrad_meanflow}

Given samples $(x_0,x_1,r,t)$ and the induced point $x_t$ on the interpolation path, RMF trains $u_\theta$ to regress the intrinsic average velocity $u(x_t,r,t)$ in Eq.~\eqref{eq:rfm_identity} under the Riemannian metric $g$:
\begin{equation}
\mathcal{L}_{\text{RMF}}
=\mathbb{E}_{x_0,x_1,r,t}\!\left[\left\|u_\theta(x_t,r,t)-u(x_t,r,t)\right\|_g^2\right].
\end{equation}
Using the Riemannian MeanFlow identity in Eq.~\eqref{eq:rfm_identity},
\(
u(x_t,r,t)=v(x_t,t)-(t-r)\nabla_{\dot\gamma(t)}u(x_t,r,t),
\)
we can rewrite the squared error by expanding the cross term. This yields a decomposition analogous to the Euclidean case in~\cite{zhang2025alphaflow}.

\begin{proposition}[Decomposed RMF Loss]
\label{prop:rmf_decompose}
The RMF objective can be written as
\begin{align}\label{eq:l_rmf_d}
&\mathcal{L}_{\text{RMF}}
=
\underbrace{\mathbb{E}_{x_0,x_1,r,t}\!\left[\left\|u_\theta(x_t,r,t)-v(x_t,t)\right\|_g^2\right]}_{\mathcal{L}_1(\theta)}
+ \\
&\underbrace{2\,\mathbb{E}_{x_0,x_1,r,t}\!\left[\left\langle u_\theta(x_t,r,t),\,(t-r)\nabla_{\dot\gamma(t)}u(x_t,r,t)\right\rangle_g\right]}_{\mathcal{L}_2(\theta)}
+ C, \nonumber
\end{align}
where $C$ does not depend on $\theta$.
\end{proposition}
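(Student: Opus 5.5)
We start from the definition $\mathcal{L}_{\text{RMF}}=\mathbb{E}\big[\|u_\theta-u\|_g^2\big]$ and substitute the Riemannian MeanFlow identity of Proposition~\ref{prop:rmf_identity}, $u = v - (t-r)\nabla_{\dot\gamma(t)}u$. For brevity we abbreviate $u_\theta=u_\theta(x_t,r,t)$, $v=v(x_t,t)$, and $w:=(t-r)\nabla_{\dot\gamma(t)}u(x_t,r,t)$. All three vectors lie in the common tangent space $T_{x_t}\mathcal M$, so $\langle\cdot,\cdot\rangle_g$ at $x_t$ is an ordinary inner product there. Hence
\[
\|u_\theta - u\|_g^2=\|(u_\theta - v) + w\|_g^2 .
\]

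Expanding this square by bilinearity and symmetry of $g_{x_t}$ gives
\[
\|u_\theta-v\|_g^2 + 2\langle u_\theta - v, w\rangle_g + \|w\|_g^2 .
\]
We then split the cross term as $2\langle u_\theta,w\rangle_g - 2\langle v,w\rangle_g$. Taking expectations over $(x_0,x_1,r,t)$, the first term is exactly $\mathcal{L}_1(\theta)$ and the term $2\,\mathbb{E}\langle u_\theta,w\rangle_g$ is exactly $\mathcal{L}_2(\theta)$. The remainder is
\[
C=\mathbb{E}\big[\|w\|_g^2 - 2\langle v,w\rangle_g\big].
\]

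The remaining step is to check that $C$ is independent of $\theta$. Here $v=\dot x_t$ is given by Eq.~\eqref{eq:xt_dot} and depends only on $(x_0,x_1,t)$. Likewise, $w$ is built from the ground-truth average velocity $u$ of Eq.~\eqref{eq:rmf_avg_velocity}, defined intrinsically through parallel transport along $\gamma$, not from the network. So neither quantity involves $\theta$, and $C$ is a constant in $\theta$.

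The only point needing care, which I regard as the main (mild) obstacle, is that this relies on keeping $u$ as the true field inside $\mathcal{L}_2$. In the practical surrogate of Eq.~\eqref{eq:ugt_def} the derivative is taken through $u_\theta$, and there one would interpret $w$ under a stop-gradient. Finally, we should note the integrability assumptions under which the expectations of the individual terms are finite, so that splitting the expectation is legitimate.
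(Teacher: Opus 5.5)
Your proposal is correct and matches the paper's proof step for step: substitute the identity $u=v-(t-r)\nabla_{\dot\gamma(t)}u$, expand the squared $g$-norm, split the cross term as $2\langle u_\theta,w\rangle_g-2\langle v,w\rangle_g$, and collect $\mathbb{E}\big[\|w\|_g^2-2\langle v,w\rangle_g\big]$ into the $\theta$-independent constant $C$. Your other remarks go slightly beyond what the paper states: all vectors lie in $T_{x_t}\mathcal M$, the stop-gradient reading is needed for the practical surrogate, and integrability is required to split the expectation.
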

\textbf{Practical form in a common tangent space.}
As described in the previous subsection, we represent intermediate states in the common tangent space $T_{x_t}\mathcal M$ using $\logmap_{x_t}(\cdot)$, which avoids coordinate-dependent Christoffel symbols and explicit parallel transport. Moreover, we approximate the instantaneous velocity by the network output at $r=t$, i.e., $v(x_t,t)\approx u_\theta(x_t,t,t)$. For the second term, we use a stop-gradient operator $\sg(\cdot)$ to prevent higher-order derivatives. Concretely, we use
\begin{align}\label{eq:l_rmf_d_map}
&\mathcal{L}_2(\theta)
= 2\,\mathbb{E}_{x_0,x_1,r,t}\!\left[
\left\langle
u_\theta(x_t,r,t),\; (t-r)\, \sg\!\left(\xi_t\right)
\right\rangle_g
\right], \nonumber \\
& \qquad \xi_t
:= \dot x_t\,\partial_{x_t}u_\theta+\partial_t u_\theta,
\end{align}
where $\dot x_t=\frac{d x_t}{dt}$ is the path velocity and $\partial_{x_t}u_\theta,\partial_t u_\theta$ are computed efficiently via Jacobian--vector products (JVPs).

\textbf{Why a multi-task view?}
Eq.~\eqref{eq:l_rmf_d} expresses RMF as the sum of two scalar objectives,
\begin{equation}
\mathcal{L}_{\text{RMF}}(\theta) = \mathcal{L}_1(\theta) + \mathcal{L}_2(\theta) + C,
\label{eq:decomposed_obj}
\end{equation}
which naturally defines a two-task learning problem with shared parameters $\theta$. In practice, the gradients
\(
\mathrm{g}_1=\nabla_\theta \mathcal{L}_1(\theta)
\)
and
\(
\mathrm{g}_2=\nabla_\theta \mathcal{L}_2(\theta)
\)
often exhibit negative cosine similarity (gradient conflict), causing oscillatory updates or one term dominating optimization, as also observed in Euclidean MeanFlow variants~\cite{zhang2025alphaflow}. Rather than tuning manual weight strategies, we mitigate this conflict by operating directly on the task gradients in parameter space using PCGrad~\cite{yu2020gradient}.

\textbf{PCGrad for two terms.}
When $\langle \mathrm{g}_1,\mathrm{g}_2\rangle < 0$, PCGrad removes the component of each gradient that conflicts with the other via orthogonal projection:
\begin{align}
\tilde{\mathrm{g}}_1
&=
\mathrm{g}_1
-
\mathbb{I}\!\left[\langle \mathrm{g}_1,\mathrm{g}_2\rangle<0\right]
\frac{\langle \mathrm{g}_1,\mathrm{g}_2\rangle}{\|\mathrm{g}_2\|^2 + \varepsilon}\, \mathrm{g}_2,
\nonumber\\
\tilde{\mathrm{g}}_2
&=
\mathrm{g}_2
-
\mathbb{I}\!\left[\langle \mathrm{g}_1,\mathrm{g}_2\rangle<0\right]
\frac{\langle \mathrm{g}_2,\mathrm{g}_1\rangle}{\|\mathrm{g}_1\|^2 + \varepsilon}\, \mathrm{g}_1,
\label{eq:pcgrad_two_terms}
\end{align}
where $\varepsilon>0$ is a small constant for numerical stability and $\mathbb{I}[\cdot]$ is the indicator function. The final update direction is
\begin{equation}
\tilde{\mathrm{g}} \;=\; \tilde{\mathrm{g}}_1 + \tilde{\mathrm{g}}_2,
\qquad
\theta \leftarrow \theta - \eta\,\tilde{\mathrm{g}},
\label{eq:pcgrad_update}
\end{equation}
with learning rate $\eta$. Eq.~\eqref{eq:pcgrad_two_terms} leaves gradients unchanged when they are aligned, and otherwise suppresses components that would increase the other loss to first order.

\begin{corollary}[PCGrad Guarantees for Intrinsic Manifold Losses]
\label{cor:pcgrad_manifold_loss}
Let $\theta\in\mathbb{R}^p$ and consider $\mathcal{L}(\theta)=\mathcal{L}_1(\theta)+\mathcal{L}_2(\theta)$, where each $\mathcal{L}_i:\mathbb{R}^p\to\mathbb{R}$ is a differentiable scalar function obtained from manifold-valued model outputs (e.g., via geodesic distances and smooth maps such as $\expmap$/$\logmap$ on a normal neighborhood). If the assumptions of Theorem~1 (resp.\ Theorem~2) in~\cite{yu2020gradient} hold for $\{\mathcal{L}_i\}$ as functions on $\mathbb{R}^p$ (e.g., smoothness/Lipschitz conditions on $\nabla_\theta \mathcal{L}$ and the stated convexity conditions), then the corresponding convergence and lower-bounded guarantees of PCGrad apply without modification.
\end{corollary}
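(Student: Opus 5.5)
The corollary is essentially a reduction: PCGrad's guarantees in~\cite{yu2020gradient} concern scalar objectives on a Euclidean parameter space, and the manifold structure only enters through how $\mathcal{L}_1,\mathcal{L}_2$ are built. So I will show that (i) the optimization variable $\theta$ lives in $\mathbb{R}^p$ with the standard inner product, (ii) the PCGrad update in Eq.~\eqref{eq:pcgrad_two_terms}--\eqref{eq:pcgrad_update} uses only $\mathrm{g}_i=\nabla_\theta\mathcal{L}_i(\theta)$ and Euclidean inner products in $\mathbb{R}^p$, and (iii) the proofs of Theorems~1 and~2 there use nothing beyond the analytic hypotheses they list. Once (i)--(iii) hold, the statement follows by direct application.

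\textbf{Step 1: the losses are ordinary differentiable functions on $\mathbb{R}^p$.} Each $\mathcal{L}_i$ has the form $\theta\mapsto \mathbb{E}[\ell_i(\theta;x_0,x_1,r,t)]$. Here $\ell_i$ is built from three ingredients: network outputs $u_\theta$ in $T_{x_t}\mathcal M$, represented in ambient or chart coordinates; the metric $g$ at $x_t$, which does not depend on $\theta$; and $\expmap/\logmap$, which are smooth on a normal neighborhood. These are compositions of smooth maps, so $\ell_i$ is differentiable in $\theta$ wherever the log map is evaluated inside the injectivity radius. Standard dominated-convergence conditions then let us exchange $\nabla_\theta$ and $\mathbb{E}$. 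For $\mathcal{L}_2$ in the practical form \eqref{eq:l_rmf_d_map}, the stop-gradient means $\mathrm{g}_2$ is taken with $\xi_t$ frozen. I will state that the corollary applies to whatever gradient field the assumptions are placed on.

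\textbf{Step 2: the metric on the manifold plays no role in the update.} The Riemannian inner product $\langle\cdot,\cdot\rangle_g$ appears only inside the losses. The projections in Eq.~\eqref{eq:pcgrad_two_terms} use the Euclidean inner product on $\mathbb{R}^p$. This is exactly the setting of~\cite{yu2020gradient}, where $\varepsilon=0$ or the $\varepsilon$-regularized form is handled by a limit argument.

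\textbf{Step 3: invoke the theorems.} Theorem~1 of~\cite{yu2020gradient} assumes that $\mathcal{L}_1,\mathcal{L}_2$ are convex and that $\nabla\mathcal{L}$ is $L$-Lipschitz, with step size $\eta\le 1/L$. Under these assumptions it concludes convergence to the optimum, or to a point where the cosine similarity is $-1$. Theorem~2 gives a sufficient condition, in terms of conflict, curvature and magnitude similarity, under which the PCGrad step lowers the loss more than the plain gradient step. Its proof is a second-order Taylor bound using only $\nabla\mathcal{L}_i$, the Hessian bound, and inner products in $\mathbb{R}^p$. Since these are precisely the hypotheses of the corollary, the conclusions transfer verbatim.

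\textbf{Main obstacle.} The delicate point is Step 1. $\logmap$ is only smooth away from the cut locus, so differentiability (and the Lipschitz gradient assumption) can fail on a measure-zero set of samples, for example antipodal points on spheres. I would handle this by restricting, as the statement does, to the normal-neighborhood regime, where the required differentiability holds. I would also emphasize that the corollary is conditional: it only claims that \emph{if} the Euclidean hypotheses hold, then the conclusions follow. Convexity itself will rarely hold for neural $u_\theta$, and I would remark on this rather than try to verify it.
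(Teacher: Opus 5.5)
Your proposal is correct and follows the same route as the paper's proof sketch: PCGrad and the analysis of \cite{yu2020gradient} see only Euclidean gradients and inner products in $\mathbb{R}^p$, the manifold enters solely through the chain rule in defining each $\mathcal{L}_i$, and differentiability comes from smoothness of $\expmap/\logmap$ on a normal neighborhood away from the cut locus. The corollary does not need two of your side remarks, and neither is justified, so drop them rather than rely on them: a convergence guarantee for the exact ($\varepsilon=0$) PCGrad iteration does not transfer to a fixed-$\varepsilon>0$ iteration by letting $\varepsilon\to 0$, and a stop-gradient semi-gradient is generally not the gradient of any fixed scalar loss, so Theorem~1's convexity and descent hypotheses cannot be ``placed on'' it.
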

\textit{Proof sketch.}
PCGrad operates purely on Euclidean gradients $\mathrm{g}_i=\nabla_\theta \mathcal{L}_i(\theta)\in\mathbb{R}^p$, and its analysis depends only on inner products and smoothness properties in parameter space. The manifold affects $\mathrm{g}_i$ only through the chain rule in defining each scalar $\mathcal{L}_i(\theta)$; it does not alter the algebraic structure of the PCGrad updates or the inequalities used in~\cite{yu2020gradient}. For the manifolds considered in this work, $\expmap$, $\logmap$, and the metric-induced inner product are smooth within a normal neighborhood (away from standard singularities such as cut loci), ensuring differentiability of $\mathcal{L}_i(\theta)$ under the stated assumptions.

\textbf{Implementation.}
We compute $\mathrm{g}_1$ and $\mathrm{g}_2$ via standard backpropagation on $\mathcal{L}_1$ and $\mathcal{L}_2$, apply Eq.~\eqref{eq:pcgrad_two_terms} to obtain $\tilde{\mathrm{g}}$, and pass $\tilde{\mathrm{g}}$ to the optimizer. In our two-term setting, this adds only a few inner products per iteration and introduces no additional learnable parameters. The full procedure is summarized in Algorithm~\ref{alg:full}.

\begin{algorithm}[t]
\caption{Training Riemannian MeanFlow}
\label{alg:full}
\begin{algorithmic}[1]
\STATE \textbf{Require:} $p_1$, $p_0$, schedule $\kappa(t)$, network $u_\theta$, optimizer $\text{Opt}$, $\varepsilon>0$
\REPEAT
    \STATE Sample minibatch $x_1\sim p_1$, $x_0\sim p_0$, and $(r,t)$ with $0\le r<t\le 1$
    \STATE $x_t \leftarrow \expmap_{x_1}\!\big(\kappa(t)\logmap_{x_1}(x_0)\big)$
    \STATE $\dot x_t \leftarrow -\frac{\kappa'(t)}{\kappa(t)}\,\logmap_{x_t}(x_1)$
    \STATE $(u,\xi_t) \leftarrow \text{JVPs}\!\left(u_\theta,\ (x_t,r,t),\ (\dot x_t,0,1)\right)$
    \STATE $\xi_t \leftarrow \sg(\xi_t)$
    \STATE $\mathcal{L}_1 \leftarrow \|u-\dot x_t\|_{g}^{2}$
    \STATE $\mathcal{L}_2 \leftarrow 2\,\langle u,\ (t-r)\xi_t\rangle_{g}$
    \STATE $\mathrm g_1\leftarrow \nabla_\theta \mathcal{L}_1,\ \mathrm g_2\leftarrow \nabla_\theta \mathcal{L}_2$
    \STATE $\tilde{\mathrm g} \leftarrow \text{PCGrad}(\mathrm g_1,\mathrm g_2;\varepsilon)$
    \STATE $\theta \leftarrow \text{Opt}\big(\theta,\tilde{\mathrm g}\big)$
\UNTIL{end of training}
\end{algorithmic}
\end{algorithm}

\subsection{Classifier-Free Guidance (CFG)}

\label{sec:cfg}

We extend RMF to conditional generation via classifier-free guidance. Let $c$ denote a class label (or, more generally, a conditioning signal). We train a single network $u_\theta$ that supports both conditional and unconditional predictions by randomly dropping the condition during training: with probability $p_{\mathrm{drop}}$ we replace $c$ by a null token $\varnothing$. We write the resulting model as $u_\theta(x_t,r,t| c)$, where $c\in\mathcal{C}\cup\{\varnothing\}$. We apply the same guidance rule to the instantaneous velocity target $v(x_t,t| c)$.

With the decomposed RMF objective in Eq.~\eqref{eq:l_rmf_d}, the corresponding CFG training loss is 
\begin{equation}\label{eq:L_cfg}
\begin{aligned}
\mathcal{L}&_{\mathrm{CFG}}(\theta)
=
\mathbb{E}_{x_0,x_1,r,t}\!\left[
\left\|u_\theta(x_t,r,t | c)-  v(x_t,t| c)\right\|_g^2
\right] + \\
& 
2\,\mathbb{E}_{x_0,x_1,r,t}\!\left[
\left\langle
u_\theta(x_t,r,t| c),\ (t-r)\,\sg\!\big(\xi_t(c)\big)
\right\rangle_g
\right],
\end{aligned}
\end{equation}
where $\xi_t(\,c\,) := \dot x_t \partial_{x_t}u_\theta(\cdot| c)  + \partial_t u_\theta(\cdot| c).$ In practice, the instantaneous velocity target is independent of the conditioning signal. We can also compute $\xi_t(c)$ efficiently via a JVP with direction $(\dot x_t,0,1)$, and apply $\sg$ to avoid higher-order differentiation, exactly as in Eq.~\eqref{eq:l_rmf_d_map}.

\section{Related Work}

\textbf{Riemannian Denoising Generative Models}. Motivated by the success of diffusion and score-based models in Euclidean spaces \cite{ho2020denoising, song2021scorebased}, several works \cite{bortoli2022riemannian, huang2022riemannian, lou2023scaling, jo2024generative} extend diffusion modeling to data supported on Riemannian manifolds. Compared to the Euclidean setting, manifold diffusions are typically formulated via SDEs driven by Brownian motion on the manifold, whose transition density (i.e., the heat kernel) is generally intractable; consequently, training often relies on discretized forward simulations and/or approximations to the conditional score \cite{rfm}. Some methods approximate the conditional score explicitly, while others adopt implicit score matching objectives \cite{hyvarinen2005estimation} that avoid explicit score targets but require estimating divergence/trace terms, which can increase computation and introduce high-variance gradients, hindering scalability with large neural networks \cite{rfm}. In contrast, motivated by simulation-free objectives for continuous normalizing flows on manifolds \cite{rozen2021moser,ben2022matching}, Riemannian Flow Matching \cite{rfm} learns conditional velocity fields using closed-form geodesic constructions, enabling simulation-free training on common manifolds (e.g., spheres, tori, and SO(3)).

\textbf{One-Step Diffusion/Flow Models}. While diffusion models and flow matching have achieved remarkable success in image generation, a common criticism is that producing high-quality samples often requires many iterative steps. In Euclidean spaces, early work accelerated sampling via a two-stage pipeline that distills a pretrained multi-step sampler into a few-step generator \cite{ProgressiveDistillation}. Building on this line, consistency models \cite{CMs} made it possible to train few-step generative models from scratch in a single stage, without relying on a teacher. \textit{Shortcut models} \cite{ShortcutModels} and Inductive Moment Matching (IMM) \cite{IMM} further improve upon consistency models through \textit{end-to-end} training, and can achieve one-step denoising with a single network. MeanFlow \cite{geng2025MeanFlow} introduces the MeanFlow identity and directly parameterizes the long-range average velocity, eliminating the additional two-time self-consistency constraints used in \textit{Shortcut} and IMM. This yields more stable optimization and substantially narrows the gap between few-step and multi-step models trained from scratch. More recently, Flow Map Matching (FMM) \cite{FlowMapMatching, boffi2025build} and $\alpha$-Flow \cite{zhang2025alphaflow} propose unifying formulations that subsume consistency trajectory models (CTM) \cite{CTM}, \textit{Shortcut} models, and MeanFlow as special cases. Recently, Riemannian Consistency Models \cite{cheng2025riemannian} extend few-step consistency modeling to non-Euclidean settings while respecting the intrinsic constraints imposed by Riemannian geometry. Alternatively, Generalised Flow Maps (GFM) \cite{davis2025generalised} generalize Flow Map Matching to Riemannian manifolds, yielding a new class of few-step geometric generative models.

Among recent unifying one-/few-step formulations, GFM and $\alpha$-Flow are closest to our setting; we therefore clarify how RMF differs. In contrast to GFM, which extends Flow Map Matching by learning flow maps between arbitrary time pairs on manifolds, \textit{RMF is a direct Riemannian generalization of MeanFlow that parameterizes long-range dynamics through an intrinsic average-velocity field defined with geometry-aware operators (e.g., parallel transport)}. Moreover, unlike Euclidean $\alpha$-Flow, which stabilizes the decomposed objective via a manually specified, iteration-dependent weighting schedule, \textit{RMF treats the decomposed terms as a two-task objective and mitigates their gradient conflicts through conflict-aware multi-task optimization, avoiding additional schedule tuning.}

A concurrent independent work \cite{woo2026riemannianmeanflow} shares our broad objective of extending average-velocity-based MeanFlow generation to Riemannian manifolds. However, the two studies \textit{differ substantially in their objective construction and optimization strategies.} 
Specifically, Woo et al. adopt a flow map perspective to derive three equivalent representations of the average velocity (Eulerian, Lagrangian, and semigroup identities), each yielding a distinct training objective, whereas our work defines the average velocity through parallel transport of instantaneous velocities along the trajectory to a common tangent space,  deriving an intrinsic Riemannian MeanFlow identity that relates average and instantaneous velocities via covariant derivatives. For optimization, Woo et al. employ $x_1$-prediction parameterization, refined time sampling distributions, and adaptive loss weighting to enhance model stability, whereas we decompose the objective function into two terms and formulate it as a multi-task learning problem, leveraging PCGrad to mitigate gradient conflicts between the decomposed losses.

\begin{figure*}[t]
  \centering
  \setlength{\tabcolsep}{2pt}
  \begin{tabular}{cccc}
    \tiny{Volcano} & \tiny{Earthquake} & \tiny{Flood} & \tiny{Fire} \\
    \includegraphics[width=0.242\linewidth]{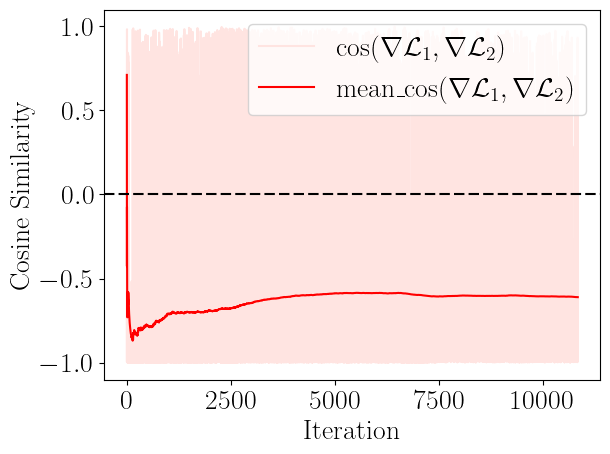} &
    \includegraphics[width=0.242\linewidth]{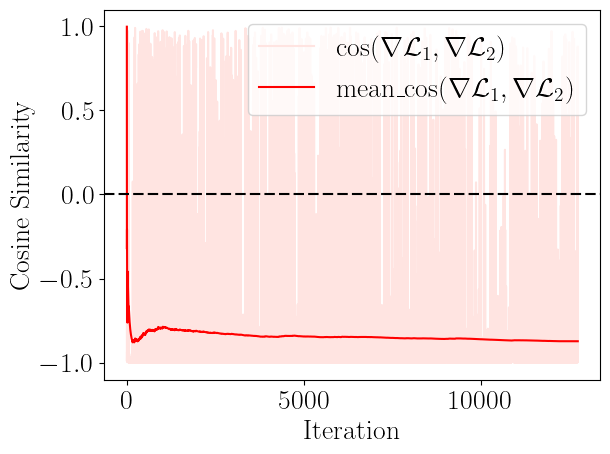} &
    \includegraphics[width=0.242\linewidth]{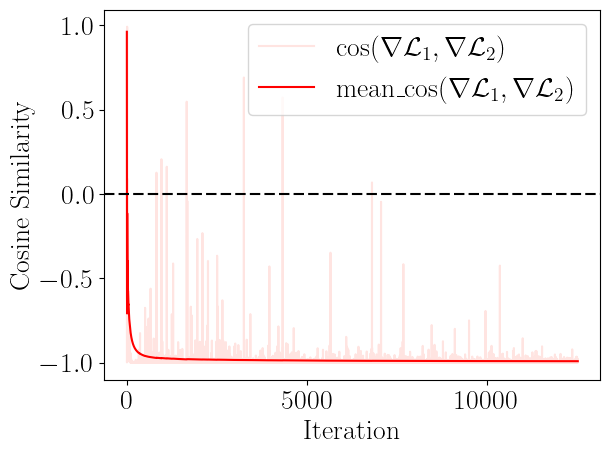} &
    \includegraphics[width=0.242\linewidth]{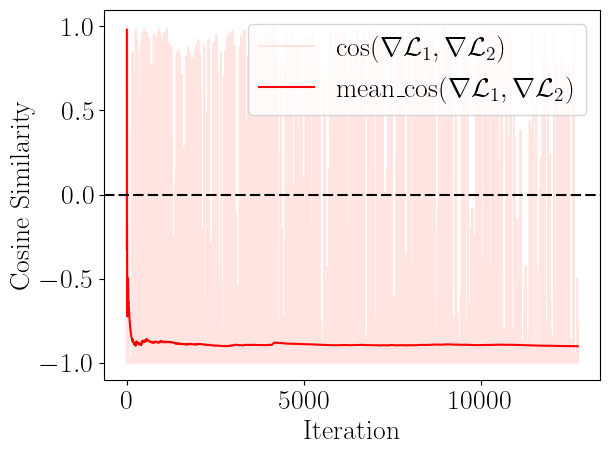} 
  \end{tabular}
  \caption{Cosine similarity between full-parameter gradients $\nabla_{\theta}\mathcal{L}_1$ and $\nabla{\theta}\mathcal{L}_2$ across Earth categories during training (per-iteration values and running average); negative values indicate gradient conflicts.}
  \label{fig:loss}
\end{figure*}

\section{Experiments}

We evaluate RMF on multiple non-Euclidean datasets spanning diverse manifold domains, including a flat torus and synthetic rotations on $\mathrm{SO}(3)$. Concretely, we consider (i) the spherical Earth dataset, (ii) torus protein and RNA torsion-angle datasets, (iii) a synthetic $\mathrm{SO}(3)$ dataset, and (iv) robotic grasping dataset on SE(3). In experiments, we seek to answer the following questions:
\begin{enumerate}
\vspace{-10pt}
\setlength{\itemsep}{-0.6pt}
\setlength{\topsep}{0pt}
\setlength{\parsep}{0pt}
\setlength{\partopsep}{0pt}
\setlength{\leftmargin}{0.1em}
    \item Do the decomposed loss terms exhibit gradient conflicts in practice, and does conflict-aware optimization improve performance?
    \item Can RMF achieve competitive one-step generation quality across diverse manifolds?
    \item How does classifier-free guidance behave on manifolds, and does RMF support effective conditional sampling?
    \item Does RMF scale to high-dimensional manifolds, and how does its performance vary with intrinsic dimension?
\end{enumerate}

\textbf{Experimental setup.} All datasets are split into train/val/test = 8/1/1. We follow a strict evaluation protocol: models are trained on the training set, hyperparameters are selected on the validation set, and the final results are reported on the test set using the selected model. We report results for two variants: RMF and RMF-MT, where RMF-MT applies conflict-aware multi-task optimization to the decomposed objective in Section~\ref{sec:pcgrad_meanflow}. $\kappa(t)=1-t$. Detailed hyperparameters for all settings are provided in Appendix~\ref{appH}.

\textbf{Evaluation metric.} To quantify the discrepancy between one-step samples (1 NFE) and the test distribution, we report the Maximum Mean Discrepancy (MMD) between generated samples and held-out test data. Following~\cite{gretton12a}, MMD admits unbiased (U-statistic) and biased (V-statistic) estimators; we use the V-statistic form: 
\begin{equation}
 \mathrm{MMD}^2(X,Y)
= \frac{1}{n^2}\!\!\sum_{i,j=1}^{n} \! k(x_i,x_j) + k(y_i,y_j) - 2k(x_i,y_j), \nonumber
\end{equation}
where \(k(x,y):=\exp\!\big(-\lambda\, d_g^2(x,y)\big)\) is an RBF kernel defined using the squared geodesic distance \(d_g^2(\cdot,\cdot)\) on \(\mathcal M\), and \(\lambda=1\) is the bandwidth parameter. Additional implementation
details are provided in Appendix~\ref{appH}.

\textbf{Baselines.} We compare RMF against state-of-the-art geometric generative models, including Riemannian Flow Matching (RFM)~\cite{rfm}, Riemannian Consistency Models (RCM) \cite{cheng2025riemannian} using its directly trained variant Riemannian Consistency Training (RCT; no distillation), and Generalized Flow Maps (GFM) with its three reported variants (G-PSD/G-ESD/G-LSD)~\cite{davis2025generalised}. \textit{Among the GFM variants, G-LSD consistently performs best, substantially outperforming G-PSD and G-ESD. Our RMF achieves performance comparable to G-LSD, the strongest baseline.}  

\subsection{Gradient Conflict Analysis of Loss}
\label{sec:analysisLoss}
We empirically verify the gradient conflicts on geometric data between the two loss terms introduced in Section~\ref{sec:pcgrad_meanflow}. We quantify gradient interference using the cosine similarity $\cos(\nabla \mathcal{L}_1(\theta), \nabla \mathcal{L}_2(\theta))$, where negative values indicate conflicting update directions. Figure~\ref{fig:loss} plots the per-iteration cosine similarity and its running average on the Earth dataset (full results for all datasets are provided in Appendix~\ref{appG}). Across categories, we observe frequent negative cosine similarities, indicating that the two objectives often produce incompatible gradients during training. 

This observation is consistent with the gains of conflict-aware optimization in Table \ref{tab:earth}: RMF-MT yields larger improvements over RMF on categories where the average cosine similarity is more negative (e.g., \textit{Flood}), while the improvement is smaller when the gradients are relatively less conflicting (e.g., \textit{Volcano}). Overall, the results suggest that explicitly mitigating gradient conflicts is most beneficial in settings where the decomposed terms exhibit stronger and more persistent disagreement.

\subsection{Comparisons to Baselines}

\begin{table}[t]
\centering
\caption{MMD ($\downarrow$) at 1 NFE on the Spherical test dataset. Values are averaged over 5 runs with different random seeds. Best is \textbf{bold}; second best is \underline{underlined}.}
\begin{threeparttable}
\begin{tabular}{lcccc}
\toprule
 &Volcano  &Earthquake  &Flood  &Fire  \\ \midrule
Dateset size &827  &6,120  &4,875  &12,809  \\ \midrule
RFM &0.351  &0.309  &0.272  &0.377  \\
RCT &0.155  &0.053  &0.086  &0.080  \\
G-PSD &0.139  &0.045  &0.067  &0.046 \\
G-ESD &0.184  &0.178  &0.178  &0.196  \\
G-LSD &0.115  &\textbf{0.032}  &$\underline{0.065}$  &\textbf{0.027}  \\
\cmidrule(lr){1-5}
\textbf{RMF}  &\textbf{0.092}  &0.042  &0.068  &0.042  \\ 
\textbf{RMF-MT} &$\underline{0.102}$  &$\underline{0.035}$  &\textbf{0.048}  &$\underline{0.032}$  \\
\bottomrule
\end{tabular}
\end{threeparttable}
\label{tab:earth}
\end{table}

\begin{figure*}[t]
  \centering
  \label{fig:other_earth}
  \begin{tabular}{cc}
\includegraphics[width=0.45\linewidth]{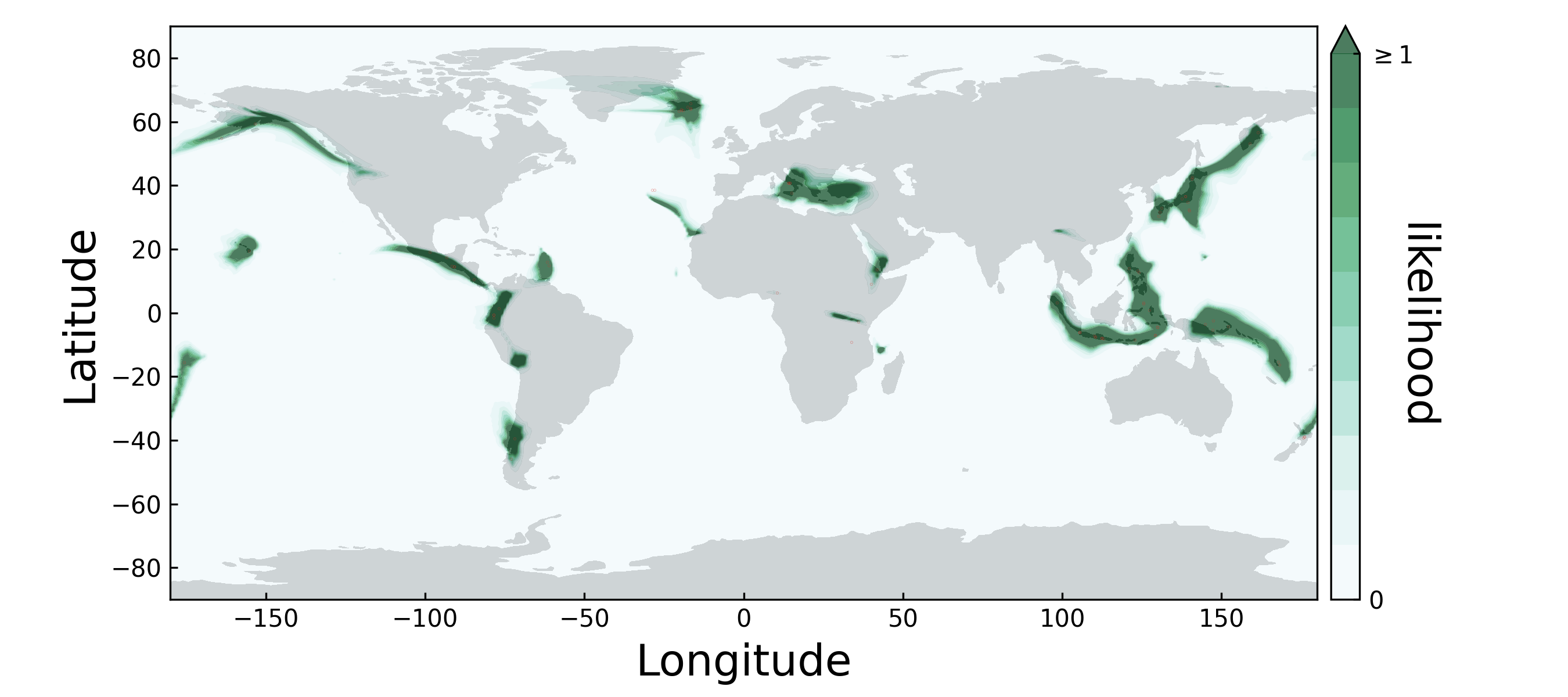} &\includegraphics[width=0.45\linewidth]{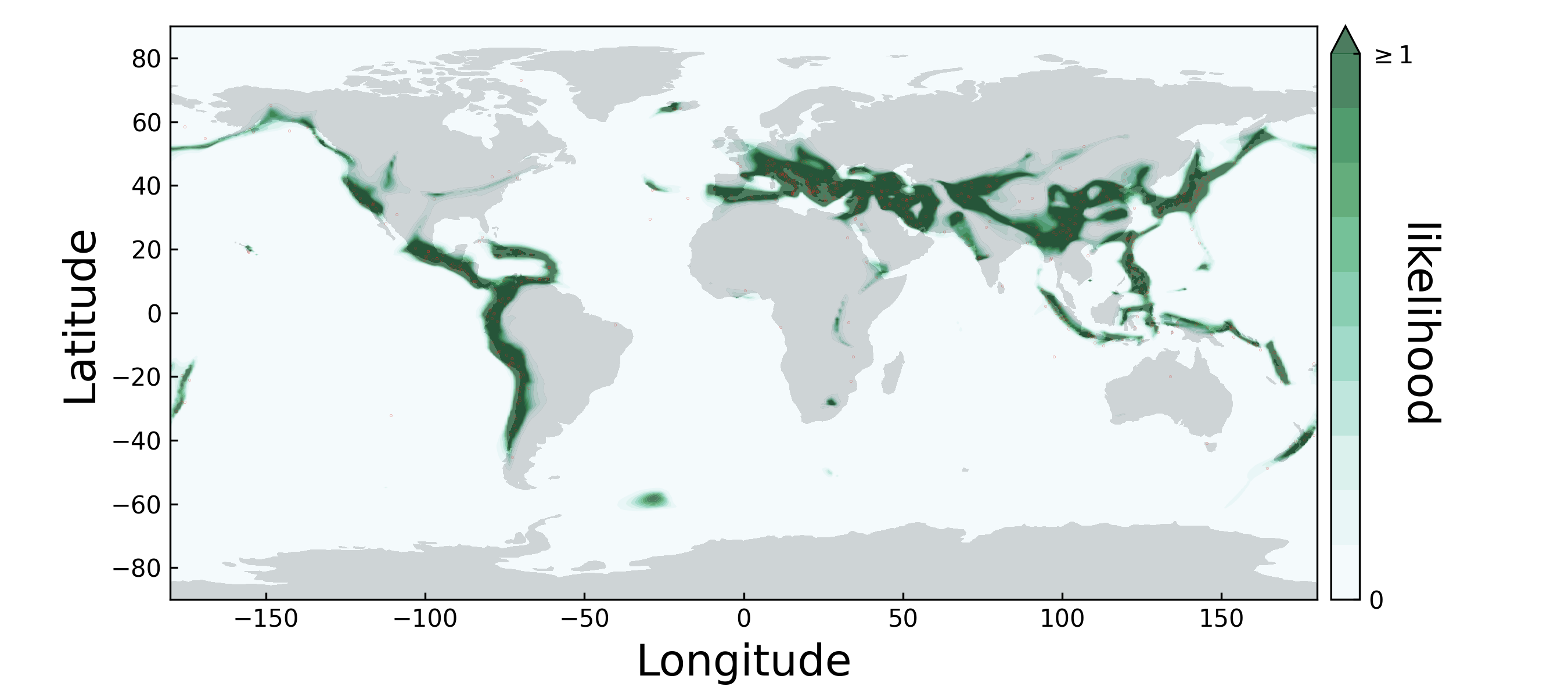} \\
\includegraphics[width=0.45\linewidth]{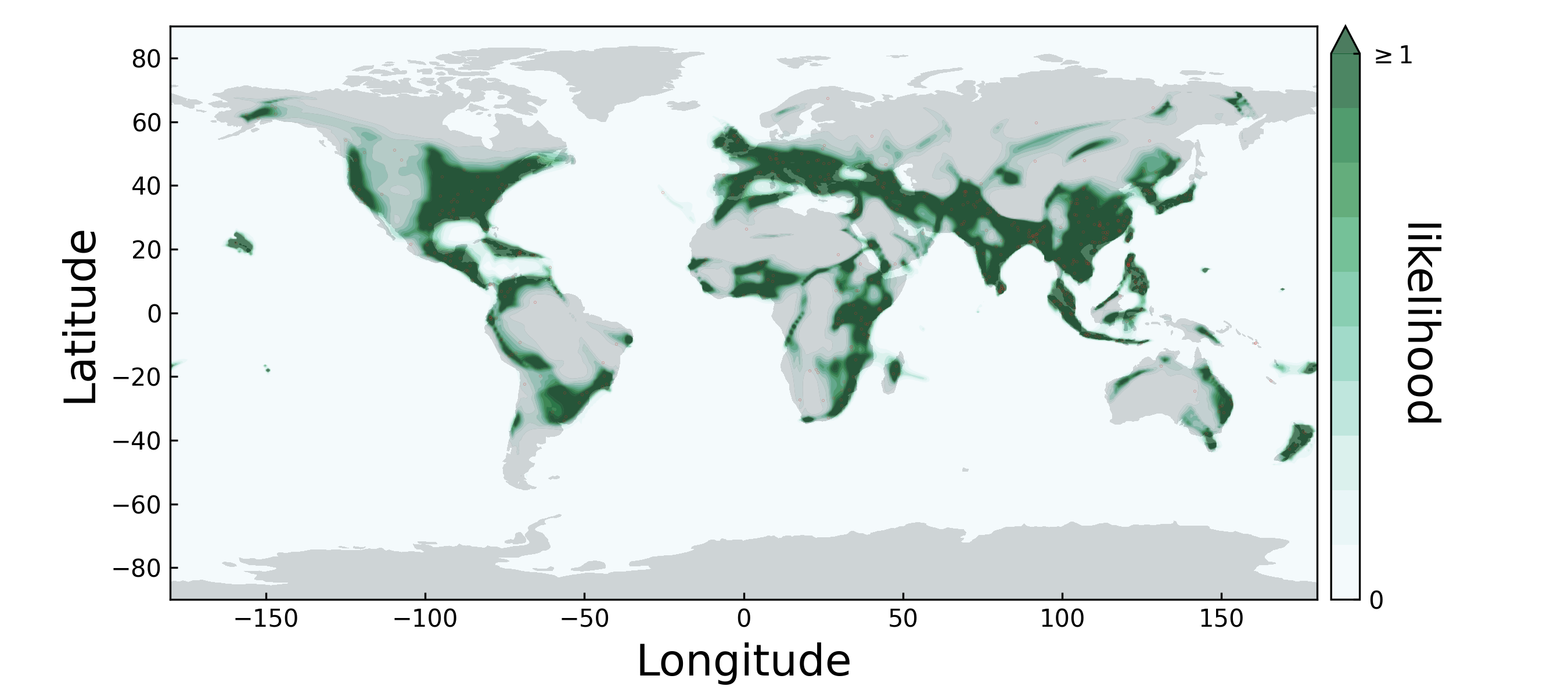} &\includegraphics[width=0.45\linewidth]{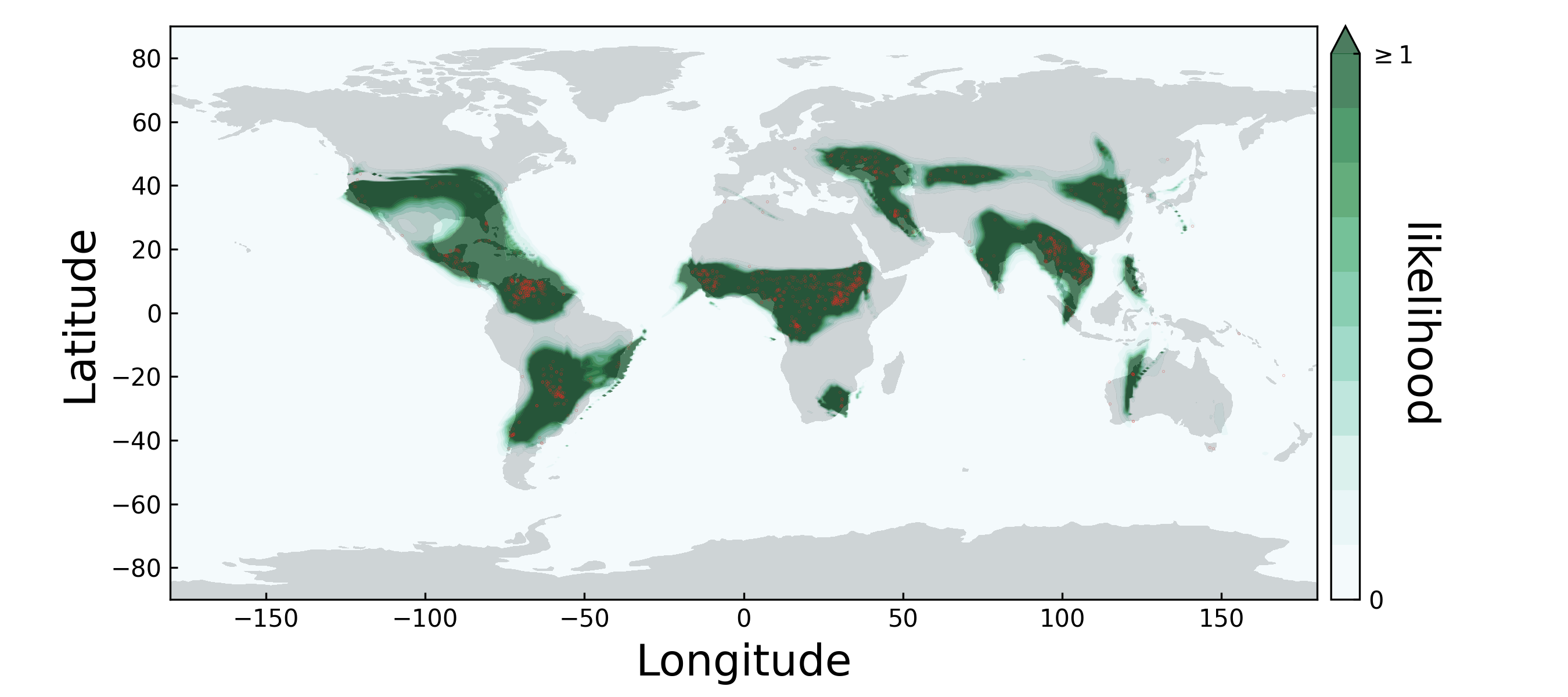} 
\vspace{-5pt}
  \end{tabular}
  \caption{Earth dataset: generated sample distributions overlaid with ground-truth test data (red). Top-left: Volcano; top-right: Earthquake; bottom-left: Flood; bottom-right: Fire.}
  \label{fig:earth}
\end{figure*}

\textbf{Sphere.}
We evaluate RMF on the Earth disaster dataset defined on the 2D sphere $\mathbb{S}^2$. The dataset is introduced in~\cite{2020riemanniancontinuousnormalizingflows} and curated from multiple public sources~\cite{NOAA2020a,NOAA2020b,Brakenridge2017,EOSDIS2020}. Results are summarized in Table~\ref{tab:earth} using MMD at 1 NFE. RMF attains the best performance on \textit{Volcano} (0.092), outperforming all baselines by a clear margin, while RMF-MT achieves the best result on \textit{Flood} (0.048). On \textit{Earthquake} and \textit{Fire}, the strongest baseline G-LSD remains the top performer (0.032 and 0.027), and RMF-MT achieves the second-best results (0.035 and 0.032), improving over RMF (0.042 on both). Overall, conflict-aware multi-task optimization is consistently beneficial on Earthquake/Flood/Fire, while it slightly degrades on Volcano, likely due to the much smaller dataset size (827 samples). Qualitative visualizations are provided in Figure \ref{fig:earth}. Low-data regime with statistical tests are reported in Table \ref{tab:lowdata} and Table \ref{tab:knn} in Appendix~\ref{appF}. We report KLD and NLL score in Table \ref{tab:nll} and Table \ref{tab:kld}.

\textbf{Torus.}
We evaluate RMF on torsion-angle data, where periodic variables induce a torus geometry. Specifically, we use four 2D protein subsets from~\cite{lovell2003} (General, Glycine, Proline, and PrePro) and a 7D RNA backbone torsion dataset from~\cite{murray2003}. Table~\ref{tab:tours} reports MMD at 1 NFE. On protein subsets, RCT only performs best on the \textit{General} set, while both RMF-MT and G-LSD are best on the remaining subsets. RMF-MT consistently improves over RMF and matches the best baseline on \textit{Glycine/Proline/PrePro}. On the higher-dimensional \textit{RNA (7D)} dataset, RMF-MT achieves the best overall performance, outperforming all baselines. Qualitative visualizations are provided in Appendix~\ref{appG}. Low-data regime with statistical tests are reported in Table \ref{tab:lowdata} in Appendix~\ref{appF}. We report NLL score in Table \ref{tab:nll_pro}.

\setlength{\textfloatsep}{6pt}
\begin{table}[t]
\centering
\caption{MMD ($\downarrow$) at 1 NFE on Torus test datasets (mean over 5 runs with different random seeds). Best is \textbf{bold}; second best is \underline{underlined}.}
\setlength{\tabcolsep}{3.5pt} 
\begin{tabularx}{\columnwidth}{l *{5}{>{\centering\arraybackslash}X}}
\toprule
 & General & Glycine & Proline & PrePro & RNA \\ \midrule
Dataset size & 138,208 & 13,283 & 7,634 & 6,910 & 9,478 \\ \midrule
RFM & 0.45 & 0.27 & 0.52 & 0.47 & 0.68 \\
RCT & \textbf{0.01} & \underline{0.04} & \underline{0.05} & \underline{0.06} & 0.11 \\
G-PSD & 0.11 & 0.05 & 0.07 & 0.08 & 0.14 \\
G-ESD & 0.29 & 0.13 & 0.44 & 0.26 & 0.45 \\
G-LSD & \underline{0.02} & \textbf{0.03} & \textbf{0.04} & \textbf{0.05} & \underline{0.08} \\
\cmidrule(lr){1-6}
\textbf{RMF} & 0.11 & \underline{0.04} & 0.09 & 0.09 & 0.20 \\
\textbf{RMF-MT} & 0.04 & \textbf{0.03} & \textbf{0.04} & \textbf{0.05} & \textbf{0.07} \\
\bottomrule
\end{tabularx}
\label{tab:tours}
\end{table}

\textbf{SO(3).}
We evaluate RMF on four synthetic $\mathrm{SO}(3)$ datasets from~\cite{cheng2025riemannian}. Quantitative results are reported in Table~\ref{tab:so3} and qualitative samples are provided in Appendix~\ref{appG}. RMF-MT achieves the best MMD on \textit{Fisher} and \textit{Line}, and attains the second-best performance on \textit{Cone} and \textit{Swiss Roll}, where G-LSD is the strongest baseline. Across all four datasets, RMF-MT consistently improves over RMF, indicating that conflict-aware optimization is beneficial for learning accurate distributions on $\mathrm{SO}(3)$. Overall, these results, together with the sphere and torus experiments, demonstrate that RMF provides strong one-step generation performance across diverse manifold domains. We conduct multiple sampling on the sphere, torus and SO(3) dataset and report MMD ($\downarrow$) in Figure \ref{multi-step} in Appendix~\ref{appF}.

\textbf{SE(3).} 
We further evaluate RMF on an $\mathrm{SE}(3)$ dataset derived from parallel-jaw gripper grasping experiments \cite{acronym}, where each sample represents a rigid-body grasp pose. The results are reported in Table~\ref{tab:se3}. RMF achieves the highest grasping success rate in the low-step regime (1-step and 2-step) and remains competitive at larger sampling steps, matching the best result at 7-step, demonstrating its effectiveness for robotic grasp generation on SE(3). Interestingly, RMF-MT achieves a lower success rate than RMF. One possible explanation is that RMF-MT may better fit the overall $\mathrm{SE}(3)$ pose distribution, while grasp success additionally depends on physical feasibility and collision avoidance. Thus, better distribution fitting does not necessarily translate into higher grasping success.

\begin{table}[t]
\centering
\caption{MMD ($\downarrow$) at 1 NFE on the $\mathrm{SO}(3)$ test set (mean over 5 runs with different random seeds). Best is \textbf{bold}; second best is \underline{underlined}.}
\begin{tabular}{lcccc}
\toprule
 &Cone  &Fisher  &Line  &Swiss Roll   \\ \midrule
Dateset size &20k  &40k   &40k  &40k  \\ \midrule
RFM &0.311  &0.154  &0.095  &0.346  \\
RCT &0.096  &$\underline{0.071}$  &0.051  &0.082  \\
G-LSD &\textbf{0.044}  &0.073  &$\underline{0.037}$  &\textbf{0.032}  \\
\cmidrule(lr){1-5}
\textbf{RMF} &0.057  &0.076  &0.038  &0.080  \\ 
\textbf{RMF-MT} &$\underline{0.049}$  &\textbf{0.039}  &\textbf{0.035}  &$\underline{0.061}$  \\ \bottomrule
\end{tabular}
\label{tab:so3}
\end{table}

\begin{table}[t]
\centering
\caption{Grasping success rate$\%$ ($\uparrow$) on the $\mathrm{SE}(3)$ dataset across different sampling steps. Best is \textbf{bold}; second best is \underline{underlined}. \textit{Notice: The metric is the grasping success rate, not MMD.}}
\begin{tabular}{lccccccc}
\toprule
Step  &1  &2 &3 &4 & 5 & 6& 7  \\ \midrule
Dataset size& \multicolumn{7}{c}{15.6M}\\ \midrule 
RFM & 3.2  &23 & 38 &59 & \underline{80} & 82 & \underline{88} \\ 
G-LSD& \underline{60}  & \underline{75} & \underline{81} &\textbf{88} & \textbf{88} & \underline{87} &\textbf{90} \\ \midrule
RMF&\textbf{65}  &\textbf{80} & \textbf{82} & \underline{86} & \textbf{88} & \textbf{88} &\textbf{90} \\ 
RMF-MT& \underline{60}  &67 & 70 &72 & 75 & 78 &81 \\ \bottomrule
\end{tabular}
\label{tab:se3}
\end{table}

\begin{table}[t]
\centering
\caption{MMD ($\downarrow$) at 1 NFE on the $\mathrm{SO}(3)$ mixture test set.}
\begin{tabular}{lccc|c}
\toprule
 &Cone  &Fisher &Swiss & Mixture  \\ \midrule
uncond & -  &-  &-  &0.439 \\ \midrule
CFG&0.115  &0.159  &0.546  &\textbf{0.234} \\ \bottomrule
\end{tabular}
\label{tab:cfg}
\end{table}

\begin{table*}[t]
\centering
\caption{MMD ($\downarrow$) for 1 NFE on hyperspheres $\mathbb{S}^{d-1}$, with epochs to reach each score.}
\begin{tabular}{lccccccc}
\toprule
MMD/epoch &3  &4 &8 &16 &32 &64 &128  \\ \midrule 
EMF&0.193 / 50  &0.154 / 32  &0.108 / 60  &0.115 / 51 &0.150 / 50 &0.159 / 132 &0.357 / $>400$ \\
RFM & 0.088 / 18  &0.080 / 16  &0.073 / 2  &0.054 / 2 &0.040 / 2 &0.037 / 2 &0.031 / 2\\
\textbf{RMF}&\textbf{0.066} / 18  &\textbf{0.056} / 10  &\textbf{0.055} / 6  &\textbf{0.038} / 6& \textbf{0.035} / 6 & \textbf{0.028} / 4 &\textbf{0.026} / 6\\
 \bottomrule
\end{tabular}
\label{tab:ablation}
\end{table*}

\begin{figure}[t]
\centering
\setlength{\tabcolsep}{2pt}
\begin{tabular}{cc}
\tiny{CFG} & \tiny{uncond}\\
\includegraphics[width=0.38\linewidth]{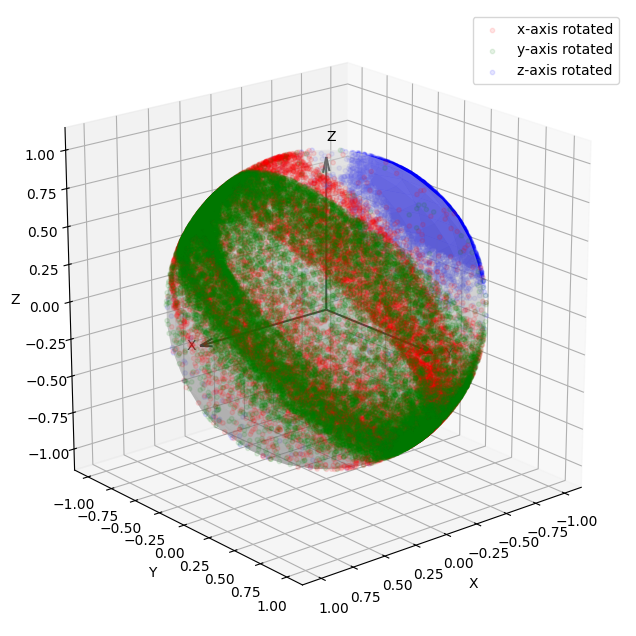}&\includegraphics[width=0.38\linewidth]{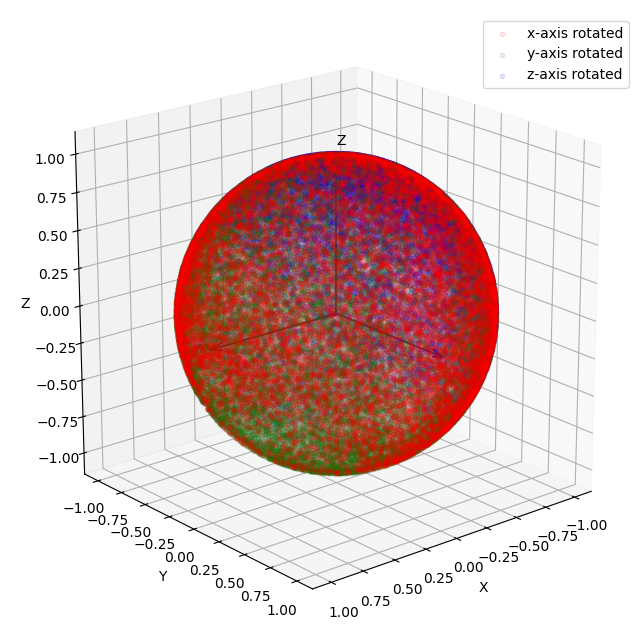} \\
\includegraphics[width=0.38\linewidth]{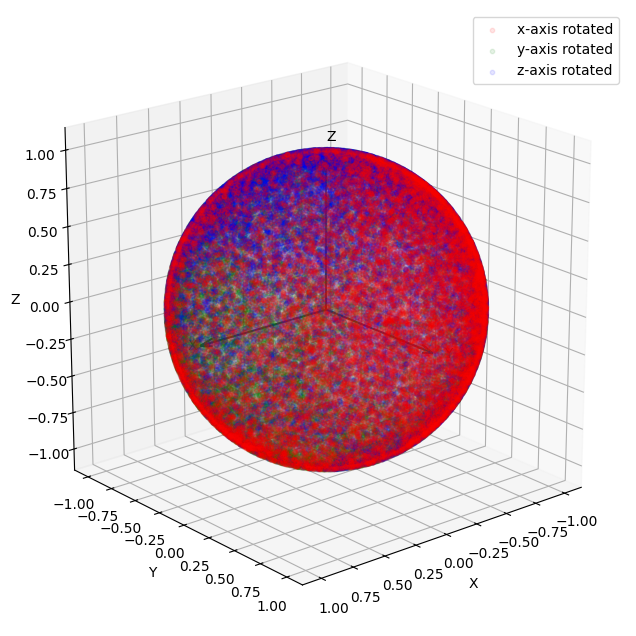}&\includegraphics[width=0.38\linewidth]{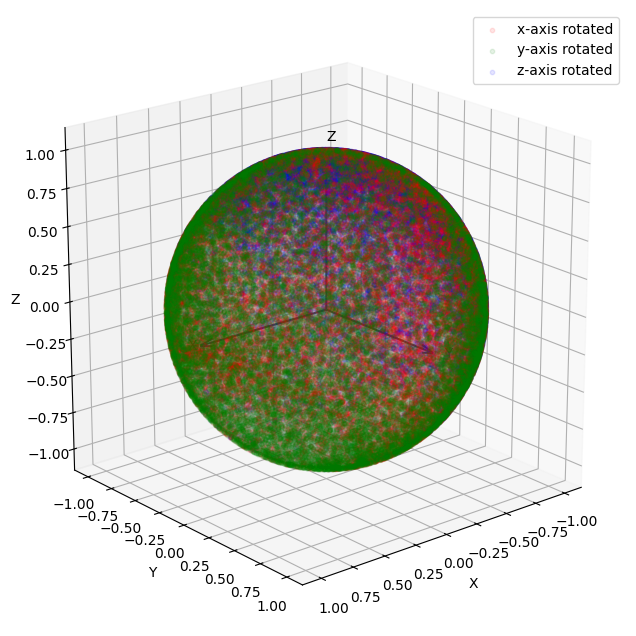}\\
\includegraphics[width=0.38\linewidth]{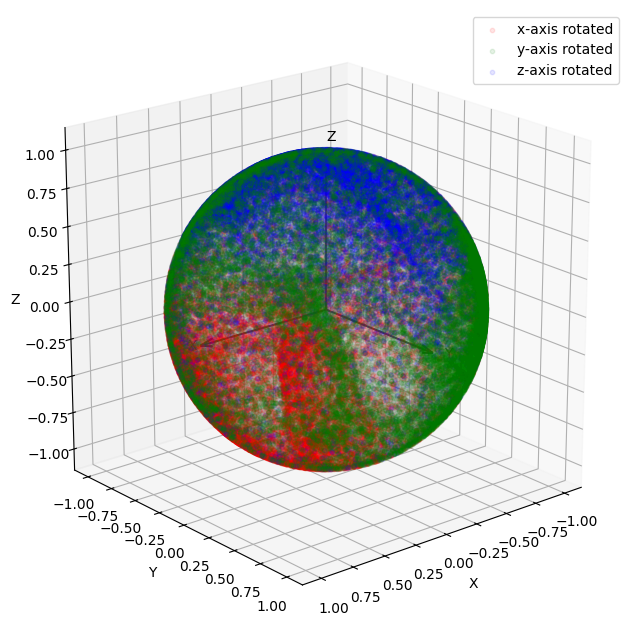}&\includegraphics[width=0.38\linewidth]{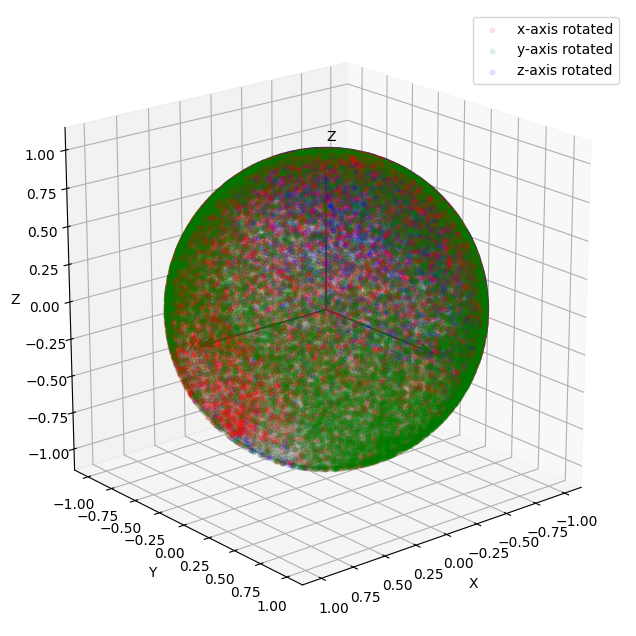}  \\
\end{tabular}
\caption{Visualization of samples generated by CFG and unconditional models.}
\label{fig:cfg}
\end{figure}

\subsection{Ablation Study}


\textbf{Classifier-Free Guidance.}
We conduct a preliminary study of classifier-free guidance (CFG) for manifold-valued generation. We build a labeled $\mathrm{SO}(3)$ mixture dataset by uniformly mixing \textit{Cone}, \textit{Fisher}, and \textit{Swiss Roll}, each with 20k samples (60k total). We train an unconditional model (\textit{uncond}) and a CFG-capable conditional model on the same data. We then generate samples conditioned on each label and evaluate MMD to the corresponding component, and also to the overall mixture. Results are reported in Table~\ref{tab:cfg} with visualizations in Figure~\ref{fig:cfg}.

Although per-class MMD remains relatively high, CFG demonstrates clear label-dependent control that the unconditional model lacks. In Figure~\ref{fig:cfg}, the unconditional model produces largely label-insensitive samples, whereas CFG recovers distinct geometric structures: the \textit{Cone} condition yields a clear loop, and the \textit{Swiss Roll} condition captures the component’s characteristic spatial organization. The generated samples under different labels are also visually well-separated, indicating that guidance effectively selects different modes rather than collapsing to an averaged mixture. Consistently, CFG also improves mixture-level fidelity, reducing MMD to the mixture from 0.439 (\textit{uncond}) to 0.234. Overall, these results support that CFG can effectively steer generation on $\mathrm{SO}(3)$, even in the challenging 1-step setting.

\begin{table}[h]
\centering
\caption{Promoter DNA sequence generation results. Best is \textbf{bold}; second best is \underline{underlined}.}
\begin{tabular}{lccc}
\toprule
 Method    & NFE & MSE($\downarrow$) & k-mer corr. ($\uparrow$)  \\ \midrule
Dirichlet FM & 100  & 0.034  & N/A \\ 
Fisher FM  & 100  & \underline{0.030}  & \textbf{0.96} \\ \midrule
Eulerian RMF & 1  & \underline{0.030}  & \textbf{0.96} \\ 
Lagrangian RMF & 1  & \textbf{0.027}  & 0.88 \\ 
Semigroup RMF & 1  & \underline{0.030}  & 0.84 \\ \midrule
\textbf{RMF(ours)} & 1  & \underline{0.030}  & \underline{0.94} \\
\bottomrule
\end{tabular}
\label{tab:dna_pro}
\end{table}

\textbf{High-Dimensional Manifold Scalability.}
To assess whether RMF remains effective on high-dimensional Riemannian manifolds, we conduct an ablation study on a synthetic $\mathbb{S}^{d-1}$ hypersphere dataset. When $d=3$, the data reduces to a circular ring, while larger $d$ yields progressively higher-dimensional spherical geometry. We evaluate generation quality using MMD between 1 NFE and the test set. Table~\ref{tab:ablation} reports ``MMD / epoch'', where the first number is the achieved MMD ($\downarrow$) and the second is the epoch at which the method reaches that MMD. Across all dimensions, RMF consistently attains the lowest MMD among the compared methods and does so with a small number of epochs. The advantage becomes more pronounced as dimensionality increases: for $d=64, 128$, Euclidean MeanFlow (EMF) degrades substantially, whereas RMF remains stable and achieves strong results. RFM also performs well on this simple benchmark, but RMF is consistently better. Overall, these results indicate that RMF scales favorably with manifold dimension.

We evaluate RMF on a real-world high-dimensional promoter DNA sequence dataset from FANTOM5~\cite{hon2017atlas}, where each sample has dimension \(1024 \times 4\). As shown in Table~\ref{tab:dna_pro}, RMF achieves an MSE of 0.030 and a k-mer correlation of 0.94 using only a single NFE. Its MSE matches Fisher FM~\cite{davis2024fisher} and Eulerian RMF~\cite{woo2026riemannianmeanflow}, while outperforming Dirichlet FM~\cite{stark2024dirichlet}; importantly, RMF requires far fewer sampling steps than the FM baselines. Compared with the RMF variants of Woo et al.~\cite{woo2026riemannianmeanflow}, our method achieves competitive overall performance and preserves strong sequence-level statistical fidelity, as reflected by its high k-mer correlation. We report Protein backbone generation in Table~\ref{tab:protein} in Appendix~\ref{appF}.

\section{Conclusion}

We proposed a one-step generative framework for data supported on Riemannian
manifolds. RMF defines an intrinsic average-velocity field in point-dependent tangent spaces and derives a Riemannian
MeanFlow identity that provides tractable, manifold-consistent supervision in a log-map tangent-space representation,
avoiding trajectory simulation and geometric computations. RMF further supports classifier-free
guidance for conditional generation, and we introduce RMF-MT to mitigate gradient conflicts in the decomposed objective
via conflict-aware multi-task optimization. Empirically, across diverse manifold domains (sphere, flat torus, $\mathrm{SO}(3)$ rotations, and $\mathrm{SE}(3)$), RMF achieves favorable quality--efficiency trade-offs, matching the test distribution with low MMD while
substantially reducing sampling cost compared to multi-step geometric baselines. Future work includes extending RMF to
broader geometric settings where closed-form geodesics are unavailable.

\section*{Acknowledgment}
The work is supported in part by Natural Science Foundation of China (No.\;U23A20389, 62476154), Natural Science Foundation of Shandong Province (No.\;ZR2024MF101, ZR2024ZD03), and Young Expert of Taishan Scholars (No.\;tsqn202312026).

\section*{Impact Statement}
RMF improves sampling efficiency for manifold-valued generative modeling, which can accelerate scientific workflows involving spherical, toroidal, $\mathrm{SO}(3)$, and $\mathrm{SE}(3)$ data. Risks include misuse of synthetic samples and over-reliance on generated outputs, especially in high-stakes domains (e.g., disaster-related data). We recommend treating RMF as a research tool, clearly labeling synthetic content, and applying domain constraints and expert validation before any real-world use.

\bibliography{reference}
\bibliographystyle{icml2026}

\newpage
\appendix
\onecolumn

\section{Riemannian MeanFlow in Local Coordinates}
\label{app:local_coords}

This section provides a coordinate-level definition of the covariant derivative
$\nabla_{\dot\gamma(t)}u(x_t,r,t)$ used in Proposition~\ref{prop:rmf_identity}. Throughout, let
$\dim(\mathcal M)=d$ and let $\gamma$ be a smooth curve on $\mathcal M$ with $x_t=\gamma(t)$.

\paragraph{Local coordinates and notation.}
Fix a chart $(x^1,\dots,x^d)$ around $x_t$. The coordinate basis of the tangent space is
$\{\partial_1,\dots,\partial_d\}$, where $\partial_k:=\frac{\partial}{\partial x^k}$.
A tangent vector field $u(\cdot,r,t)$ along $\gamma$ can be written as
\[
u(x_t,r,t)=u^k(x_t,r,t)\,\partial_k\big|_{x_t},\qquad
\dot x^i(t)=\frac{d x^i(t)}{dt}.
\]
Here, $k\in\{1,\dots,d\}$ indexes the coordinate components of $u$ in the chosen basis, and repeated indices are
summed over $1,\dots,d$ (Einstein summation convention). The notation
$\big(\frac{D}{dt}u\big)^k$ denotes the $k$-th coordinate component of $\frac{D}{dt}u$ in the basis
$\{\partial_k\}$.

\paragraph{Covariant time derivative along $\gamma$.}
By definition, the covariant derivative of $u$ along $\gamma$ is the covariant time derivative
\[
\frac{D}{dt}u(x_t,r,t)=\nabla_{\dot\gamma(t)}u(\cdot,r,t)\Big|_{x_t}.
\]
In local coordinates, it expands as
\begin{equation}\label{eq:covariant_time_derivative_component}
\left(\frac{D}{dt}u(x_t,r,t)\right)^k
=
\frac{d}{dt}u^k(x_t,r,t)
+
\Gamma^{k}_{ij}(x_t)\,\dot x^i(t)\,u^j(x_t,r,t),
\end{equation}
or equivalently, in vector form,
\begin{equation}\label{eq:covariant_time_derivative_vector}
\frac{D}{dt}u(x_t,r,t)
=
\Big(\frac{d}{dt}u^k(x_t,r,t)+\Gamma^{k}_{ij}(x_t)\,\dot x^i(t)\,u^j(x_t,r,t)\Big)\,\partial_k\Big|_{x_t},
\end{equation}
where $\Gamma^k_{ij}$ are the Christoffel symbols of the Levi--Civita connection in this chart.

\paragraph{Chain rule for $\frac{d}{dt}u^k(x_t,r,t)$.}
Treat each component $u^k$ as a scalar function of $(x_t,r,t)$. Then
\begin{equation}\label{eq:chain_rule_uk}
\frac{d}{dt}u^k(x_t,r,t)
=
\frac{\partial u^k}{\partial t}(x_t,r,t)
+
\frac{\partial u^k}{\partial x^i}(x_t,r,t)\,\dot x^i(t)
+
\frac{\partial u^k}{\partial r}(x_t,r,t)\,\dot r(t).
\end{equation}
In RMF, $r$ is a fixed reference time independent of $t$, so $\dot r(t)=0$ and the last term vanishes.

\paragraph{Final coordinate expansion.}
Combining \eqref{eq:covariant_time_derivative_component} with \eqref{eq:chain_rule_uk} yields, for fixed $r$,
\begin{equation}\label{eq:final_fixed_r}
\left(\frac{D}{dt}u(x_t,r,t)\right)^k
=
\frac{\partial u^k}{\partial t}
+
\frac{\partial u^k}{\partial x^i}\,\dot x^i
+
\Gamma^{k}_{ij}(x_t)\,\dot x^i\,u^j.
\end{equation}
If $r=r(t)$ is allowed to vary, the additional term remains:
\begin{equation}\label{eq:final_variable_r}
\left(\frac{D}{dt}u(x_t,r,t)\right)^k
=
\frac{\partial u^k}{\partial t}
+
\frac{\partial u^k}{\partial x^i}\,\dot x^i
+
\frac{\partial u^k}{\partial r}\,\dot r
+
\Gamma^{k}_{ij}(x_t)\,\dot x^i\,u^j.
\end{equation}
Equations~\eqref{eq:final_fixed_r}--\eqref{eq:final_variable_r} provide the standard
``chain rule + Christoffel correction'' expansion of $\nabla_{\dot\gamma(t)}u(x_t,r,t)$ in local coordinates.

\section{Local curvature-sensitive error bound for the practical surrogate}
Let (M,g) be a smooth Riemannian manifold, and let $\gamma:[r,t]\to M$ be the interpolation path with $x_\tau=\gamma(\tau)$. Let
\begin{equation}
u(x_t,r,t):=v(x_t,t)-(t-r)\nabla_{\dot\gamma(t)}u(x_t,r,t)
\end{equation}
be the exact RMF target in Eq.~\eqref{eq:rfm_identity}, and let
\begin{equation}
u_{\mathrm{gt}}(x_t,r,t):=v(x_t,t)-(t-r)\Big(\partial_tu+\partial_{x_t}u[\dot x_t]\Big)
\end{equation}
be the practical target in Eq.~\eqref{eq:ugt_def} induced by the tangent-space JVP surrogate. 

Assume that $\gamma([r,t])$ lies in a sufficiently small geodesically convex normal neighborhood $\Omega$ of $x_t$, and that on $\Omega$:
\begin{enumerate}
    \item the norm of the Riemann curvature tensor satisfies $|R|\le K$;
    \item the path speed is bounded by $\|\dot\gamma(\tau)\|_g\le V$;
    \item the average-velocity field is bounded by $\|u(x_\tau,r,\tau)\|_g\le U$.
\end{enumerate}

\begin{lemma}
\label{lemma:normal}
\cite{Brewin_2009}. In normal coordinates centered at $x_t$, the Christoffel symbols satisfy the standard local estimate
\begin{equation}
|\Gamma(x)|\le C_0Kd(x,x_t),\qquad x\in\Omega.
\end{equation}
\end{lemma}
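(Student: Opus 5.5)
We prove the estimate $|\Gamma(x)|\le C_0 K\,d(x,x_t)$ by working entirely in normal coordinates $y=\logmap_{x_t}(x)$ centered at $x_t$. The route has three steps: expand the metric to second order with curvature coefficients, differentiate to get the Christoffel symbols, and use compactness of $\Omega$ to absorb the remainder into the curvature bound. Throughout, $C_0$ denotes a constant depending only on the dimension $d$.

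\emph{Step 1: metric expansion.} Use the standard expansion of the metric in normal coordinates (Riemann normal coordinate expansion, as in Brewin):
\[
g_{ij}(y)=\delta_{ij}-\tfrac13 R_{ikjl}(x_t)\,y^k y^l+O(|y|^3).
\]
Two facts are central. First, $g_{ij}(0)=\delta_{ij}$ and $\partial_k g_{ij}(0)=0$, so $\Gamma^k_{ij}(0)=0$. Second, the coefficient of every second-order term is a linear combination of curvature components.

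\emph{Step 2: Christoffel symbols.} Differentiate the expansion and insert it into $\Gamma^k_{ij}=\tfrac12 g^{kl}(\partial_i g_{jl}+\partial_j g_{il}-\partial_l g_{ij})$. This gives
\[
\Gamma^k_{ij}(y)=-\tfrac13\big(R^k{}_{ijl}(x_t)+R^k{}_{jil}(x_t)\big)y^l+O(|y|^2).
\]
Hence $|\Gamma(y)|\le \tfrac23 c_d\,|R(x_t)|\,|y|+O(|y|^2)$, where $c_d$ is a combinatorial constant coming from contracting indices. Since $|y|_{\delta}=\|\logmap_{x_t}x\|_g=d(x,x_t)$ inside a normal neighborhood, the leading term is already of the form $C_0 K\,d(x,x_t)$.

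\emph{Step 3: remainder (main obstacle).} A naive Taylor remainder involves $\nabla R$, not only $|R|$, so it is not controlled by $K$ alone. I would handle this with a radial ODE rather than Taylor. Along each radial geodesic $s\mapsto sy$, the coordinate fields satisfy the Jacobi equation. With $J$ the Jacobi matrix, $g(sy)=J(s)^\top J(s)/s^2$, and
\[
\Gamma(sy)\,y=\text{(terms built from }J'(s)-J(s)/s\text{)}.
\]
The Jacobi equation $J''=-R(\cdot,\dot\gamma)\dot\gamma\,J$, together with the comparison (Rauch-type) bound $\|J'(s)-J(s)/s\|\le C K s\,|y|^2$, valid while $\sqrt K\,s|y|$ is small, then bounds the radial component of $\Gamma$ by a multiple of $K\,d$.

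For the non-radial components, use the identity $y^i\Gamma^k_{ij}(y)=0$ that holds in normal coordinates. Integrate $\partial_s\big(s\,\Gamma(sy)\big)$ along the ray, and express $\partial_s\Gamma$ via curvature along the ray. This second half is where I expect the difficulty: for the full $|\Gamma|$, derivative-of-curvature terms may appear, and they can only be absorbed because $\Omega$ is "sufficiently small". I would therefore state $C_0$ as depending on $d$ and on the size of $\Omega$, which is exactly the role of the smallness hypothesis.
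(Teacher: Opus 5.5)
Your Steps 1--2 are correct. They match what the paper actually relies on: the paper gives no proof, only a citation of Brewin's normal-coordinate Taylor expansion, and that expansion supplies exactly the leading term $-\tfrac13(R^k{}_{ijl}+R^k{}_{jil})(x_t)\,y^l$. The gap is in Step 3, and it has two parts.

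\emph{The identity you use is false.} The identity $y^i\Gamma^k_{ij}(y)=0$ does not hold. Normal coordinates only give $\Gamma^k_{ij}(y)\,y^iy^j=0$, because radial lines are geodesics. Differentiating the Gauss lemma $g_{ij}(y)y^j=y^i$ gives $y^i\Gamma_{kij}(y)=\tfrac12\,y^i\partial_i g_{jk}(y)$, which is nonzero whenever $R(x_t)\neq 0$. In fact $y^i\Gamma^k_{ij}(y)\,\partial_k=J_j'(1)-J_j(1)$, where $J_j$ is the Jacobi field along $s\mapsto\expmap_{x_t}(sy)$ with $J_j(0)=0$, $J_j'(0)=e_j$. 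That is exactly the quantity your radial step bounds, so the two halves of Step 3 contradict each other. The radial estimate itself, $|y^i\Gamma^k_{ij}(y)|\le CK|y|^2$, is fine and uses only $|R|\le K$. By symmetry of $\Gamma$ it also covers the mixed radial/transverse components.

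\emph{The transverse components are not controlled by $|R|$.} What remains are the components $\Gamma(w,w')$ with $w,w'\perp y$. These involve derivatives of $g$ transverse to the ray, i.e.\ how the Jacobi fields change when the initial direction is rotated. Differentiating the Jacobi equation in that direction produces a forcing term containing $\nabla R$, and no comparison argument removes it.

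This is not a technical nuisance. With $C_0$ depending only on the dimension, or on the dimension and the radius of $\Omega$ as in your fallback, the inequality is false. A two-dimensional counterexample:
\begin{itemize}
\item Take $g=dr^2+G(r,\theta)^2d\theta^2$ with $G_{rr}=-K_gG$, $G(0,\theta)=0$, $G_r(0,\theta)=1$, and $K_g=K\phi(r)\sin(N\theta)$. Here $\phi$ vanishes near $0$ and equals $1$ on $[r_0,\infty)$, and $Kr_0^2$ is small.
\item Then $(r,\theta)$ are geodesic polar coordinates at $x_t$ and $|K_g|\le K$.
\item In the associated normal coordinates, $\Gamma_{kij}e_\theta^ie_\theta^je_\theta^k=\partial_\theta(G^2/r^2)/(2r)\approx \partial_\theta G/r^2$.
\item $H=\partial_\theta G$ solves $H_{rr}=-K_gH-(\partial_\theta K_g)G$ with zero initial data, so $|H(2r_0,0)|\gtrsim KNr_0^3$.
\item Hence $|\Gamma|\gtrsim NK\,d(x,x_t)$ at $r=2r_0$, with $N$ arbitrary.
\end{itemize}

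So an additional hypothesis is unavoidable. One option is $|\nabla R|\le\Lambda$ on $\Omega$. Your Jacobi-field argument, applied to the derivative of $J$ with respect to the initial direction, then gives $|\Gamma(y)|\le c\,(K+\Lambda|y|)\,|y|$ for $K|y|^2$ small, with $c$ depending only on the dimension. The other option is to let $C_0$ depend on the metric. Then the bound is just $\Gamma(x_t)=0$ plus smoothness of $\Gamma$ on $\bar\Omega$, and the factor $K$ is cosmetic.

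You should state which version you are proving, rather than attributing the missing control to the smallness of $\Omega$. The paper's cited expansion, which gives only the leading term, does not supply that control either.
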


\begin{proposition}\label{prop:error}
 With $\Delta:=t-r$,
\begin{equation}
e_r:=u(x_t,r,t)-u_{\mathrm{gt}}(x_t,r,t)
\end{equation}
satisfies
\begin{equation}
|e_r|_ g \le C_0UKV^2\Delta^2.
\end{equation}
\end{proposition}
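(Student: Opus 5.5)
The plan is to compare the exact covariant derivative with the coordinate-wise JVP surrogate in normal coordinates centered at $x_t$; the only discrepancy is then the Christoffel correction. We subtract the two targets:
\[
e_r = u(x_t,r,t)-u_{\mathrm{gt}}(x_t,r,t) = -(t-r)\Big(\nabla_{\dot\gamma(t)}u-\partial_t u-\partial_{x_t}u[\dot x_t]\Big).
\]
By the coordinate expansion in Appendix~\ref{app:local_coords} with $r$ fixed (Eq.~\eqref{eq:final_fixed_r}), the bracket is exactly the Christoffel term $\Gamma^k_{ij}\dot x^i u^j$. The JVP surrogate computes the first two terms of that expansion, since the tangent-space (log-map) representation is the normal chart. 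Hence
\[
e_r=-\Delta\,\Gamma^k_{ij}(x)\,\dot x^i\,u^j\,\partial_k,
\]
and the task reduces to bounding $|\Gamma|$ at the point where this term is evaluated.

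The key issue is that at the center of normal coordinates $\Gamma(x_t)=0$. So if the surrogate is evaluated exactly at $x_t$, the error would vanish identically. The nonzero bound must therefore come from how the surrogate is realized. The log-map representation $\logmap_{x_t}(x_\tau)$ linearizes the path in $T_{x_t}\mathcal M$, and the average velocity is built from transports along $\gamma|_{[\tau,t]}$. Consequently the relevant comparison is between parallel transport and the coordinate identification at the points $x_\tau$, $\tau\in[r,t]$.

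I would therefore write the discrepancy as an integral over $\tau\in[r,t]$. In normal coordinates, the parallel transport $\mathcal{P}^{\gamma}_{\tau\to t}$ differs from the coordinate identity by $\int_\tau^t \Gamma(x_s)[\dot x_s,\cdot]\,ds$, by the transport ODE $\dot W^k=-\Gamma^k_{ij}\dot x^i W^j$. Applying Lemma~\ref{lemma:normal} gives $|\Gamma(x_s)|\le C_0K\,d(x_s,x_t)\le C_0KV(t-s)$, using the speed bound and geodesic convexity of $\Omega$. Combining this with $\|\dot\gamma\|_g\le V$ and $\|u\|_g\le U$, the pointwise Christoffel contribution is at most $C_0KV\Delta\cdot V\cdot U$. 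Multiplying by the prefactor $\Delta$ then yields $|e_r|_g\le C_0UKV^2\Delta^2$. Constants from integrating $(t-s)$ and from the equivalence of $g$ with the Euclidean norm near the center are absorbed into $C_0$, with $\Omega$ shrunk so that $g_{ij}\approx\delta_{ij}$.

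The main obstacle is the step above: identifying precisely where the surrogate evaluates a connection term away from the center, so that the $d(x,x_t)\le V\Delta$ factor is legitimately available. The naive expansion at $x_t$ gives zero error, and the bound instead comes from the integrated transport discrepancy over $[r,t]$. I would first try to make this rigorous by differentiating $(t-r)u$ in normal coordinates centered at the moving point $x_t$. A secondary issue is ensuring the norm comparison between coordinate and metric norms. On a small convex normal neighborhood $g_{ij}=\delta_{ij}+O(K d^2)$, so this changes only the constant.
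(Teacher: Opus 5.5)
There is a genuine gap, at exactly the step you flag as the main obstacle.

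Your closing estimate ($|\Gamma|\le C_0KV\Delta$, times $V$, times $U$, times $\Delta$) is the same chain the paper uses. The paper expands $\nabla_{\dot\gamma(t)}u$ as in Appendix~\ref{app:local_coords}, bounds the omitted term by $|\Gamma(x)|\,\|\dot\gamma(t)\|_g\,\|u\|_g$, and applies Lemma~\ref{lemma:normal} together with $d(x_r,x_t)\le V\Delta$. However, the transport-integral mechanism you propose, to justify evaluating $\Gamma$ away from the chart center, does not act on $e_r$.

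To see this, work in any fixed chart and differentiate $(t-r)u^k=\int_r^t(\mathcal{P}^{\gamma}_{\tau\to t}v_\tau)^k\,d\tau$ in $t$. Each integrand is parallel in $t$, so its coordinate derivative is given by the transport ODE at the current endpoint. One gets exactly
\[
e_r=-\Delta\,\Gamma^k_{ij}(x_t)\,\dot x^i\,u^j\,\partial_k ,
\]
with $\Gamma$ evaluated only at $x_t$. The history $\int_\tau^t\Gamma(x_s)[\dot x_s,\cdot]\,ds$ is already encoded in the value of $u$ and never reappears in its derivative.

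What that integral actually controls is $\mathcal{P}^{\gamma}_{\tau\to t}-\mathrm{Id}$, i.e.\ the gap between the intrinsic average and the coordinate average $\frac{1}{\Delta}\int_r^t v_\tau^k\,d\tau\,\partial_k$. That is a different quantity. It is bounded through $\|v_\tau\|_g\le V$ rather than $U$, and it is already of size $O(KV^3\Delta^2)$ before any extra factor $\Delta$. So your sentence ``the pointwise Christoffel contribution is at most $C_0KV\Delta\cdot V\cdot U$'' does not follow from the integral. It silently reverts to evaluating $\Gamma$ at distance $V\Delta$ from the center. As you correctly observed, that is unavailable in a chart centered at $x_t$, where in fact $e_r=0$.

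The missing idea is just the choice of chart center; no integral is needed. Keep the pointwise expansion, but use normal coordinates centered at $x_r$ (or at any point of $\gamma([r,t])$, all within the small convex neighborhood $\Omega$). Then the normal-coordinate estimate of Lemma~\ref{lemma:normal}, applied with center $x_r$, bounds the Christoffel symbols at the evaluation point:
\[
|\Gamma(x_t)|\le C_0K\,d(x_t,x_r)\le C_0KV\Delta .
\]
Hence $|e_r|_g\le\Delta\,|\Gamma(x_t)|\,V\,U\le C_0UKV^2\Delta^2$, with the comparison of coordinate and metric norms on the small neighborhood absorbed into $C_0$.

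This is also the consistent reading of the paper's own argument. That argument pairs the Appendix~\ref{app:local_coords} expansion, which has $\Gamma$ at $x_t$, with the Lemma applied to $d(x_r,x_t)$. By symmetry of $d$ the two match once the chart is centered at $x_r$, and in a chart centered at $x_t$ the bound holds trivially.
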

For the proof, see Proof \ref{proof:error}.

\textcolor{blue}{\textbf{Error analysis of the practical surrogate.}}
The RMF identity is exact at the geometric level: average velocity is intrinsically defined by parallel transport, and curvature enters through the Levi--Civita connection and the covariant derivative along the interpolation path. The approximation arises only in the practical implementation, where the exact covariant derivative is replaced by a tangent-space JVP surrogate in a local log-map representation. In local coordinates, this discrepancy corresponds to omitting the Christoffel correction, and is therefore controlled by the local curvature, the path length, and the magnitude of the velocity field. Consequently, the surrogate is most reliable when the interpolation remains within a normal neighborhood where Exp/Log maps are smooth and the time interval is short. In highly curved regions or for long-range one-step transport, this approximation error may increase; in such cases, using shorter intervals or few-step generation can reduce the curvature-induced bias.

\section{Riemannian Manifolds with Closed-Form Geodesics}\label{app:geodesics}

In this section, we describe the geodesics for several commonly used Riemannian manifolds, focusing on their closed-form expressions. We use the \texttt{Geomstats} Python package \cite{JMLR:v21:19-027} to compute the geodesic paths, including the logarithmic (\(\logmap\)) and exponential (\(\expmap\)) maps, which are essential for tasks such as interpolation and velocity field computation. Geomstats provides an efficient and user-friendly implementation for geometric operations on Riemannian manifolds, and we rely on its functionality to perform computations across various manifold domains such as Euclidean spaces, spheres, flat tori, and the special orthogonal group \(\mathrm{SO}(3)\). The following subsections summarize the geodesic equations and the corresponding implementations for each manifold.

We first discuss the Euclidean space, followed by the sphere, flat torus, and finally \(\mathrm{SO}(3)\). Each of these manifolds has specific properties that influence the geodesic behavior, and we present the relevant formulas and computational methods used in \texttt{Geomstats} to model these geodesics.

For a detailed introduction to geometric learning in Python with Geomstats, we refer to \cite{miolane2020introduction}, which provides a comprehensive overview of the package's capabilities. For a more in-depth treatment of Riemannian geometry and its applications in machine learning, we recommend \cite{guigui2023introduction}, which includes both theoretical and implementation details.

\subsection{Euclidean Space}
Euclidean Space is the simplest, flat example of a Riemannian manifold where the metric tensor is constant (the identity matrix in Cartesian coordinates). The exponential and logarithm maps in Euclidean space reduce to simple addition/subtraction as
\begin{equation}
\operatorname{Exp}_x(v)=x+v,\qquad \operatorname{Log}_x(y)=y-x.
\end{equation}
These are exactly the “trivial” Exp/Log maps because the Euclidean Levi–Civita connection is flat, so parallel transport is identity and geodesics are straight lines.

\subsection{Sphere}
We consider the unit sphere $\mathbb{S}^{d-1}=\{x\in\mathbb{R}^d:\|x\|_2=1\}$ equipped with the Riemannian metric induced by the ambient Euclidean space. Concretely, the tangent space at $x\in\mathbb{S}^{d-1}$ is
\begin{equation}
T_x\mathbb{S}^{d-1}=\{v\in\mathbb{R}^d:\langle x,v\rangle=0\},
\end{equation}
and the metric inner product is the restriction of the Euclidean inner product,
\begin{equation}
\langle u,v\rangle_x := u^\top v,\qquad u,v\in T_x\mathbb{S}^{d-1}.
\end{equation}
Let $\theta=\arccos(x^\top y)\in[0,\pi]$ denote the geodesic angle between $x,y\in\mathbb{S}^{d-1}$. The exponential map at $x$ and logarithm map at $x$ admit closed forms:
\begin{equation}
\expmap_x(v)
=
\cos(\|v\|)\,x+\sin(\|v\|)\,\frac{v}{\|v\|},
\qquad
v\in T_x\mathbb{S}^{d-1},
\end{equation}
\begin{equation}
\logmap_x(y)
=
\frac{\theta}{\sin\theta}\Big(y-(x^\top y)\,x\Big),
\qquad
y\in\mathbb{S}^{d-1},\ y\neq -x,
\end{equation}
with $\operatorname{Log}_x(x)=0$. Here $\|v\|=\sqrt{\langle v,v\rangle_x}$ is the norm induced by the metric. Under this geometry, geodesics are great-circle arcs and the geodesic distance is $d(x,y)=\theta$.

\subsection{Flat Torus}
We consider the $N$-dimensional flat torus parameterized by angles
\begin{equation}
\mathbb{T}^N := [0,2\pi)^N \ \ (\text{with wrap-around}),
\end{equation}
equivalently $\mathbb{T}^N=\mathbb{R}^N/(2\pi\mathbb{Z})^N$. The tangent space is
\begin{equation}
T_x\mathbb{T}^N \cong \mathbb{R}^N,
\end{equation}
and we use the constant (flat) metric inherited from $\mathbb{R}^N$, i.e.,
\begin{equation}
\langle u,v\rangle_x := u^\top v,\qquad u,v\in T_x\mathbb{T}^N.
\end{equation}
The exponential map is addition followed by wrapping:
\begin{equation}
\operatorname{Exp}_x(u) = (x+u)\bmod 2\pi.
\end{equation}
The logarithm map returns the shortest wrapped displacement (element-wise principal angle):
\begin{equation}
\operatorname{Log}_x(y) = \operatorname{atan2}\!\big(\sin(y-x),\,\cos(y-x)\big),
\end{equation}
where $\sin(\cdot)$, $\cos(\cdot)$, and $\operatorname{atan2}(\cdot,\cdot)$ are applied element-wise.

\subsection{$\mathrm{SO}(3)$}
We consider the special orthogonal group
\begin{equation}
\mathrm{SO}(3):=\{R\in\mathbb{R}^{3\times 3}\,:\,R^\top R=I,\ \det(R)=1\},
\end{equation}
a 3D compact Lie group representing 3D rotations. The tangent space at $R$ is obtained by left translation of the Lie algebra
\begin{equation}
\mathfrak{so}(3):=\{\Omega\in\mathbb{R}^{3\times 3}\,:\,\Omega^\top=-\Omega\},
\qquad
T_R\mathrm{SO}(3)=\{R\Omega:\Omega\in\mathfrak{so}(3)\}.
\end{equation}
We endow $\mathrm{SO}(3)$ with the standard bi-invariant Riemannian metric induced by the Frobenius inner product on the Lie algebra:
\begin{equation}
\langle R\Omega_1, R\Omega_2\rangle_R
:=
\frac{1}{2}\mathrm{tr}(\Omega_1^\top \Omega_2),
\qquad \Omega_1,\Omega_2\in\mathfrak{so}(3),
\end{equation}
so that $\|R\Omega\|_R=\|\Omega\|_F/\sqrt{2}$ and the metric is invariant under left/right multiplication.

\paragraph{Exponential and logarithm maps.}
Under this metric, geodesics are given by one-parameter subgroups. The exponential map at $R$ is
\begin{equation}
\operatorname{Exp}_R(R\Omega)=R\,\exp(\Omega),
\qquad \Omega\in\mathfrak{so}(3),
\end{equation}
where $\exp(\cdot)$ is the matrix exponential. The logarithm map is defined on $R,S\in\mathrm{SO}(3)$ (excluding the $\pi$-rotation ambiguity) by
\begin{equation}
\operatorname{Log}_R(S)=R\,\log\!\big(R^\top S\big),
\end{equation}
where $\log(\cdot)$ is the matrix logarithm taking values in $\mathfrak{so}(3)$ (typically the principal logarithm).

\paragraph{Axis--angle closed forms.}
Let $Q:=R^\top S\in\mathrm{SO}(3)$ and define the relative rotation angle
\begin{equation}
\theta := \arccos\!\Big(\frac{\mathrm{tr}(Q)-1}{2}\Big)\in[0,\pi].
\end{equation}
For $\theta\in(0,\pi)$, the principal logarithm admits the closed form
\begin{equation}
\log(Q)=\frac{\theta}{2\sin\theta}\big(Q-Q^\top\big)\in\mathfrak{so}(3),
\end{equation}
hence
\begin{equation}
\operatorname{Log}_R(S)=R\,\frac{\theta}{2\sin\theta}\big(Q-Q^\top\big).
\end{equation}
Conversely, if $\Omega\in\mathfrak{so}(3)$ and $\theta:=\sqrt{\frac{1}{2}\mathrm{tr}(\Omega^\top\Omega)}$, then Rodrigues' formula gives
\begin{equation}
\exp(\Omega)=I+\frac{\sin\theta}{\theta}\Omega+\frac{1-\cos\theta}{\theta^2}\Omega^2,
\end{equation}
and therefore
\begin{equation}
\operatorname{Exp}_R(R\Omega)=R\left(I+\frac{\sin\theta}{\theta}\Omega+\frac{1-\cos\theta}{\theta^2}\Omega^2\right).
\end{equation}
At $\theta=0$, use the limits $\frac{\sin\theta}{\theta}\to 1$ and $\frac{1-\cos\theta}{\theta^2}\to \tfrac12$.

\paragraph{Remark on ambiguity.}
When $\theta=\pi$ (a 180$^\circ$ rotation), the logarithm is not unique; in practice one uses the principal branch when available or applies a consistent tie-breaking rule.

\section{Proof}

\subsection{Proof of Proposition~\ref{prop:rmf_decompose}.}
Recall that the RMF objective is
\begin{align}
\mathcal{L}_\text{RMF}
&=\mathbb{E}_{x_0,x_1,r,t}\!\left[\left\|u_\theta(x_t,r,t)-u_{\text{gt}}(x_t,r,t)\right\|_g^2\right].
\end{align}
Using the decomposition of the ground-truth target
\begin{equation}
u_{\text{gt}}(x_t,r,t)=v(x_t,t)-(t-r)\,\nabla_{\dot\gamma(t)}u(x_t,r,t),
\end{equation}
we obtain
\begin{align}
\mathcal{L}_\text{RMF}
&=\mathbb{E}_{x_0,x_1,r,t}\!\left[\left\|u_\theta(x_t,r,t)-v(x_t,t)+(t-r)\nabla_{\dot\gamma(t)}u(x_t,r,t)\right\|_g^2\right].
\label{eq:46}
\end{align}
Let $a:=u_\theta(x_t,r,t)-v(x_t,t)$ and $b:=(t-r)\nabla_{\dot\gamma(t)}u(x_t,r,t)$. Expanding the squared $g$-norm via
$\|a+b\|_g^2=\|a\|_g^2+2\langle a,b\rangle_g+\|b\|_g^2$, we have
\begin{align}
\mathcal{L}_\text{RMF}
&=\mathbb{E}\!\left[\|u_\theta(x_t,r,t)-v(x_t,t)\|_g^2\right]
+2\,\mathbb{E}\!\left[\left\langle u_\theta(x_t,r,t)-v(x_t,t),\,(t-r)\nabla_{\dot\gamma(t)}u(x_t,r,t)\right\rangle_g\right] \nonumber\\
&\quad+\mathbb{E}\!\left[\left\|(t-r)\nabla_{\dot\gamma(t)}u(x_t,r,t)\right\|_g^2\right].
\end{align}
Separating the cross term yields
\begin{align}
\mathcal{L}_\text{RMF}
&=\mathbb{E}\!\left[\|u_\theta(x_t,r,t)-v(x_t,t)\|_g^2\right]
+2\,\mathbb{E}\!\left[\left\langle u_\theta(x_t,r,t),\,(t-r)\nabla_{\dot\gamma(t)}u(x_t,r,t)\right\rangle_g\right]
+ C,
\end{align}
where
\begin{align}
C
&:=\mathbb{E}\!\left[\left\|(t-r)\nabla_{\dot\gamma(t)}u(x_t,r,t)\right\|_g^2\right]
-2\,\mathbb{E}\!\left[\left\langle v(x_t,t),\,(t-r)\nabla_{\dot\gamma(t)}u(x_t,r,t)\right\rangle_g\right]
\end{align}
does not depend on the network parameters $\theta$. Therefore, when optimizing w.r.t.\ $\theta$, the gradient satisfies
$\nabla_\theta \mathcal{L}_\text{RMF}=\nabla_\theta\!\big(\mathbb{E}[\|u_\theta-v\|_g^2]+2\,\mathbb{E}[\langle u_\theta,(t-r)\nabla_{\dot\gamma}u\rangle_g]\big)$,
and the constant term $C$ can be ignored.

\subsection{Derivation of Eq.~(\ref{eq:xt_dot}).}
Let $v_0:=\logmap_{x_1}(x_0)\in T_{x_1}\mathcal M$ and define the (constant-speed) geodesic
\begin{equation}
\gamma(s):=\expmap_{x_1}(s\,v_0),\qquad s\in[0,1].
\end{equation}
Then Eq.~\eqref{eq:xt_kappa} is simply a reparameterization of this geodesic:
\begin{equation}
x_t=\expmap_{x_1}\!\big(\kappa(t)\,v_0\big)=\gamma(\kappa(t)).
\end{equation}
By the chain rule,
\begin{equation}\label{eq:chain_rule_xt}
\dot x_t=\frac{\td}{\td t}\gamma(\kappa(t))=\kappa'(t)\,\gamma'(\kappa(t)).
\end{equation}
It remains to express $\gamma'(s)$ in terms of $\logmap_{\gamma(s)}(x_1)$. Along the geodesic $\gamma$,
the velocity satisfies $\gamma'(s)=\mathcal P^{\gamma}_{0\rightarrow s}(v_0)$, where
$\mathcal P^{\gamma}_{0\rightarrow s}:T_{x_1}\mathcal M\to T_{\gamma(s)}\mathcal M$ denotes parallel transport.
Moreover, within a normal neighborhood (so that the logarithmic map is single-valued), we have the standard identity
\begin{equation}\label{eq:log_parallel_relation}
\logmap_{\gamma(s)}(x_1)=-s\,\mathcal P^{\gamma}_{0\rightarrow s}(v_0),
\end{equation}
i.e., the displacement from $\gamma(s)$ back to $x_1$ is opposite to the forward direction of the geodesic and has
magnitude equal to the geodesic distance. Combining $\gamma'(s)=\mathcal P^{\gamma}_{0\rightarrow s}(v_0)$ with
Eq.~\eqref{eq:log_parallel_relation} yields
\begin{equation}\label{eq:gamma_prime_log}
\gamma'(s)=-\frac{1}{s}\,\logmap_{\gamma(s)}(x_1),\qquad s>0.
\end{equation}
Substituting $s=\kappa(t)$ into Eq.~\eqref{eq:chain_rule_xt} gives
\begin{equation}
\dot x_t
=\kappa'(t)\,\gamma'(\kappa(t))
=\kappa'(t)\left(-\frac{1}{\kappa(t)}\,\logmap_{x_t}(x_1)\right)
=-\frac{\kappa'(t)}{\kappa(t)}\,\logmap_{x_t}(x_1),
\end{equation}
which is exactly Eq.~\eqref{eq:xt_dot}.

\noindent\textbf{Special case $\kappa(t)=1-t$.}
When $\kappa(t)=1-t$, we have $\kappa'(t)=-1$, and thus
\begin{equation}
\dot x_t
=-\frac{-1}{1-t}\,\logmap_{x_t}(x_1)
=\frac{1}{1-t}\,\logmap_{x_t}(x_1),
\end{equation}
which matches Eq.~\eqref{eq:xt_dot_linear}.

\subsection{Proof of Proposition~\ref{prop:error}.}
\label{proof:error}
In the practical implementation, Eq.~\eqref{eq:ugt_def} replaces the exact covariant derivative by the tangent-space JVP surrogate
\begin{equation}
\partial_tu+\partial_{x_t}u[\dot x_t].
\end{equation}
Appendix A shows that the exact covariant derivative equals the Euclidean chain-rule term plus the Christoffel correction:
\begin{equation}
\left(\frac{D}{dt}u\right)^k=\frac{\partial u^k}{\partial t}+\frac{\partial u^k}{\partial x^i}\dot x^i+\Gamma^k_{ij}(x)\dot x^i u^j.
\end{equation}
Hence the derivative-level discrepancy is controlled by the omitted connection term, so locally
\begin{equation}
\bigl|\nabla_{\dot\gamma(t)}u-(\partial_tu+\partial_{x_t}u[\dot x_t])\bigr|_ g\le|\Gamma(x)||\dot\gamma(t)|_ g|u|_ g.
\end{equation}
Using the standard normal-coordinate estimate in Lemma \ref{lemma:normal} and the path-length bound
\begin{equation}
d(x_r,x_t)\le \int_r^t |\dot\gamma(s)|_ gds \le V(t-r)= V\Delta,
\end{equation}
we obtain
\begin{equation}
\bigl|\nabla_{\dot\gamma(t)}u-(\partial_tu+\partial_{x_t}u[\dot x_t])\bigr|_g\le C_0UKV^2 \Delta.
\end{equation}
Multiplying by the prefactor $(t-r)=\Delta$ in the RMF identity yields
\begin{equation}
|e_r|_g \le C_0UKV^2 \Delta^2.
\end{equation}

\section{Additional Experiments} 
We further evaluate RMF against recent MeanFlow variants, including $\alpha$-Flow~\cite{zhang2025alphaflow} and Improved MeanFlow~\cite{geng2025imf}. We adapt these methods to Riemannian manifold generation as follows.

\subsection{$\alpha$-Flow}
We adapt $\alpha$-Flow\ \cite{zhang2025alphaflow}, with the specific form as follows:
\begin{equation}\label{eq:alhpa-f}
L_{\alpha  RMF}=\underset{x_0,x_1,r,t}{\mathbb{E}}\left [ \alpha^{-1}\cdot \left \| u_\theta(x_t,r,t)-\left [\alpha \cdot v(x_s,s)+(1-\alpha)\sg  (u_{\theta}(x_s,r,s))\right ] \right \|^2_g \right ],
\end{equation}
where $s=\alpha\cdot r +(1-\alpha)\cdot t$ and $x_s=\expmap_{x_1}(\kappa(s)\logmap_{x_1}x_0)$.

\subsection{Improved MeanFlow (iMF)}

We further adapt Improved MeanFlow (iMF)~\cite{geng2025imf} to Riemannian manifolds, yielding \emph{iRMF}. The resulting objective is
\begin{align}\label{eq:imf}
L_{\text{iRMF}}
&=\mathbb{E}_{x_0,x_1,r,t}\!\left[\left\|u_\theta(x_t,r,t)-v(x_t,t) + (t-r)\,\sg\!\left(\xi_t\right)\right\|_g^2\right],\\
\xi_t
&:= u_\theta(x_t,t,t)\,\partial_{x_t}u_\theta+\partial_t u_\theta .
\end{align}
Compared with RMF, we replace the trajectory velocity $\dot{x}_t$ (which is typically obtained from geodesic derivatives) by the network prediction $u_\theta(x_t,t,t)$. Empirically, this variant improves upon RMF.



Figure~\ref{imf_CosineSimilarity} reports the cosine similarity between the loss gradients of iRMF on the spherical dataset. We find that iRMF does not mitigate gradient conflicts; instead, it exhibits more frequent negative cosine similarities than RMF, indicating stronger gradient interference.

Table~\ref{tab:additional_mmd} further compares these methods under one-step sampling (1 NFE) using MMD. The gains of $\alpha$-Flow over RMF are marginal. iMF provides a modest improvement over RMF, but the overall improvement remains limited.

\begin{table}[b]
\belowrulesep=0pt
\aboverulesep=0pt
\centering
\caption{MMD ($\downarrow$) on the spherical dataset (mean over 5 runs with different random seeds). Best is \textbf{bold}; second best is \underline{underlined}.}
\begin{tabular}{lcccc}
\toprule
 &Volcano  &Earthquake  &Flood  &Fire\\ \midrule
Dateset size &827  &6,120  &4,875  &12,809\\ \midrule
\textbf{RMF} &\textbf{0.092}&0.042&0.068&0.042\\
\textbf{RMF-MT} &$\underline{0.102}$&\textbf{0.035}&\textbf{0.048}&\textbf{0.032}\\
$\alpha $RMF &0.106&0.051&0.061&0.043\\
iRMF &0.108&$\underline{0.039}$&$\underline{0.060}$&$\underline{0.037} $\\
\bottomrule
\end{tabular}
\label{tab:additional_mmd}
\end{table}

\begin{figure}[]
  \centering
  \begin{tabular}{cccc}
    Volcano&Earthquake&Flood&Fire \\
    \includegraphics[width=0.23\linewidth]{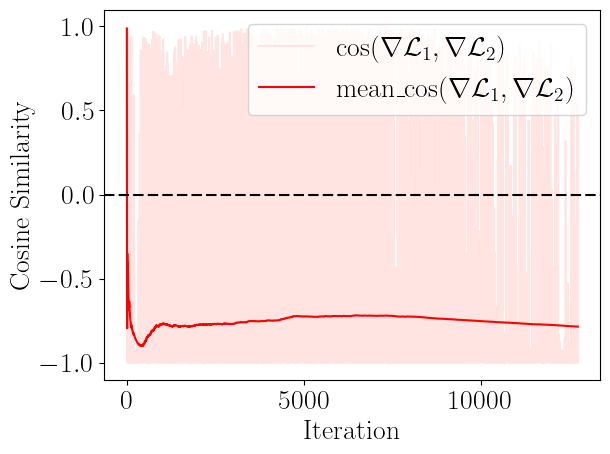}&\includegraphics[width=0.23\linewidth]{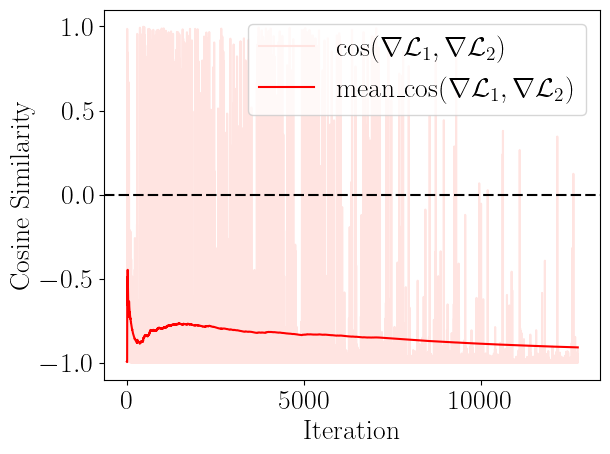}& \includegraphics[width=0.23\linewidth]{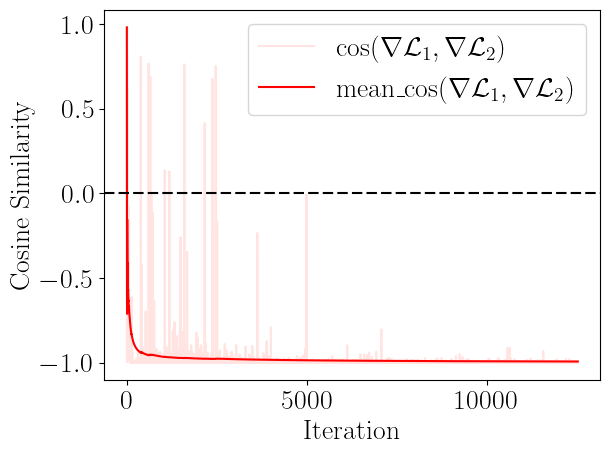}& \includegraphics[width=0.23\linewidth]{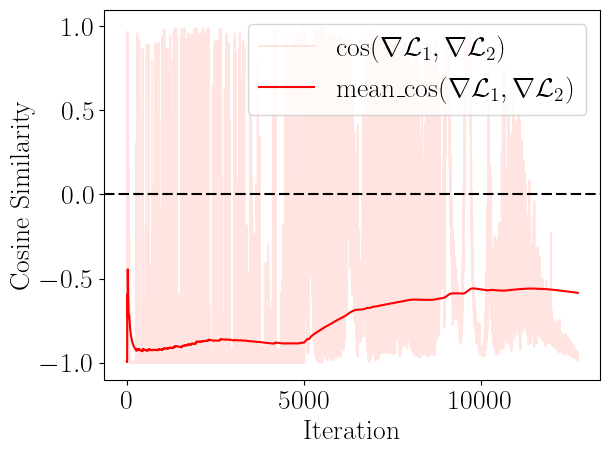} \\
  \end{tabular}
  \caption{iRMF: Cosine Similarity about $\nabla \mathcal{L}_1(\theta) $ and $\nabla \mathcal{L}_2(\theta)$ with $0\%$ r=t on Spherical Datasets.}
  \label{imf_CosineSimilarity}
\end{figure}

\begin{table}[]
\centering
\caption{Iterations per second ($\uparrow$) for different methods.}
\begin{tabular}{lcccc}
\toprule
 &\qquad RMF \qquad  & \qquad RMF-MT \qquad \quad  &G-LSD~\cite{davis2025generalised}  &\\ \midrule
iteration/second &\textbf{8.27}  &$\underline{6.42}$  &3.60  \\ 
\bottomrule
\end{tabular}
\label{tab:iteration}
\end{table}

\section{Empirical Estimation of Training Runtime}
Compared with RMF, RMF-MT introduces an additional cost during training due to the gradient-orthogonalization step used to mitigate conflicts between the decomposed objectives. Table~\ref{tab:iteration} reports training throughput in iterations per second. RMF-MT remains efficient, incurring only a moderate slowdown relative to RMF (6.42 vs.\ 8.27 it/s), while still running substantially faster than the strong baseline G-LSD~\cite{davis2025generalised} (3.60 it/s). Overall, the added overhead of RMF-MT is modest and yields improved optimization stability and performance in exchange.

\section{Additional Results}\label{appF}
In Figure \ref{CosineSimilarity}, we present the cosine similarity between the gradients of the two losses across all datasets. The Spherical dataset has already been analyzed in Section \ref{sec:analysisLoss}, so we will not delve into further details here. On the torus datasets, we found that Glycine dataset has the most positive cosine similarity. Correspondingly, on glycine, RMF-MT shows the least improvement over RMF. This once again demonstrates that the benefit of RMF-MT is dataset-dependent.

In Figure \ref{multi-step}, we conduct multiple sampling on Sphere, torus and SO(3) dataset and report MMD($\downarrow$). We can see that RMF achieves low MMD already at 1 step and remains relatively stable across different step counts. This indicates that RMF provides substantially better sample quality in the low-step regime, while RFM typically requires many iterative sampling steps to approach comparable performance.

In Table \ref{tab:nll} and \ref{tab:nll_pro}, we present the NLL scores of our method on the Spherical dataset and the Torus dataset. In Table \ref{tab:kld}, we present the KLD scores of our method on the Spherical dataset. We use the same testing protocol in \cite{rfm,davis2025generalised}.

In Table~\ref{tab:protein}, we report results on protein backbone generation with length 128. This experiment is intended as a high-dimensional feasibility study on $\mathrm{SE}(3)$-structured data rather than a performance-oriented benchmark. Although RMF does not outperform existing specialized methods in terms of scRMSD, it remains functional under this challenging high-dimensional setting, demonstrating that the proposed formulation can be applied beyond low-dimensional manifold datasets.

In Table \ref{tab:datascale1} and \ref{tab:datascale2}, we clarify capacity vs. dataset scale below. For the hypersphere ablation studies, d=128 is already well beyond toy 2D/3D settings; the model size is only 732K to 860K parameters across d=3–128, with 50K samples in each case.

In Table \ref{tab:downsize}, we can see that shrinking the model to 1/16 causes only mild degradation on several datasets, while more aggressive shrinking hurts harder datasets more. Thus capacity matters, but the pattern is not consistent with simple memorization.

In Table \ref{tab:lowdata}, for the MMD two-sample test, we compare 1000 generated and 1000 held-out test samples, and compute a permutation-test p-value with 100 permutations for the null hypothesis that both sets come from the same distribution. The p-values show a clear degradation trend as the training data decreases: most datasets are not statistically distinguishable at 100$\%$ data, several still pass at 10$\%$, and all become distinguishable at 1$\%$. Since we use 100 permutations, values of 0.01 indicate the minimum reported p-value resolution. 

For memorization, we use a leave-one-out 1-Nearest Neighbor two-sample test (ideal Acc.=0.5); A/B denote train-generated/test-generated. We report 1-NN only on the sphere-type datasets, since reliable torus nearest-neighbor comparison would require a separate geodesic implementation. The 1-NN memorization diagnostic in Table \ref{tab:knn} shows that the accuracy gap (A/B) between train-generated and test-generated is generally small and does not exhibit a consistent pattern of generated samples being substantially closer to the training set. Thus, the degradation in the low-data regime is not well explained by simple train-set memorization.

Overall, these results in Table \ref{tab:lowdata} and Table \ref{tab:knn} suggest that RMF retains substantial distribution-recovery ability in the 10$\%$ regime on several datasets, while showing clear degradation in the more extreme 1$\%$ regime, without a consistent signature of simple memorization.

\begin{figure}[]
  \centering
  \label{fig:all_loss}
  \begin{tabular}{cccc}
   \multicolumn{4}{c}{\textbf{Spherical Datasets}}\\
   \qquad  Volcano&\qquad Earthquake&\qquad Flood&\qquad Fire \\
    \includegraphics[width=0.24\linewidth]{figures/volcano.png}&\includegraphics[width=0.24\linewidth]{figures/earthquake.png}& \includegraphics[width=0.24\linewidth]{figures/flood.png}& \includegraphics[width=0.24\linewidth]{figures/fire.png} \\
     &  &  &  \\
     \multicolumn{4}{c}{\textbf{Torus Datasets}}\\
    \qquad General & \qquad Glycine & \qquad Prepro & \qquad Proline \\
    \includegraphics[width=0.24\linewidth]{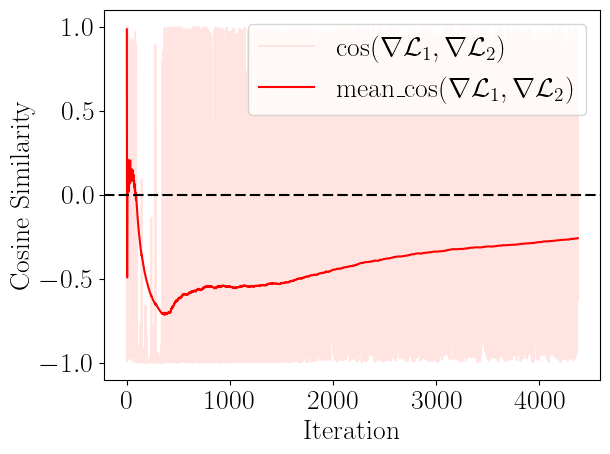}&\includegraphics[width=0.24\linewidth]{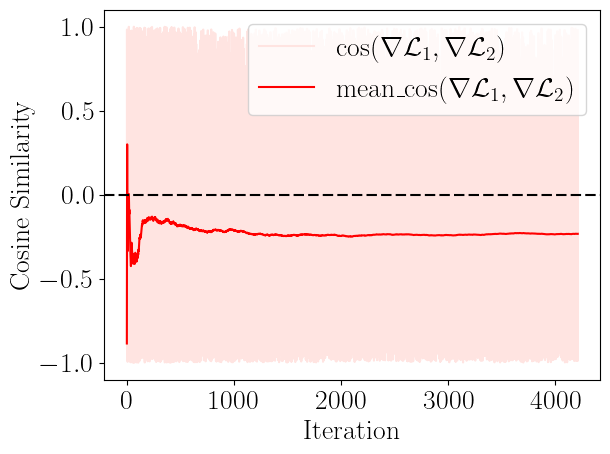}&\includegraphics[width=0.24\linewidth]{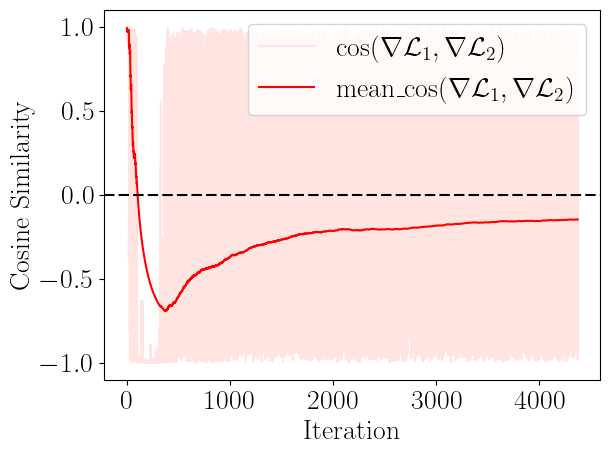}&\includegraphics[width=0.24\linewidth]{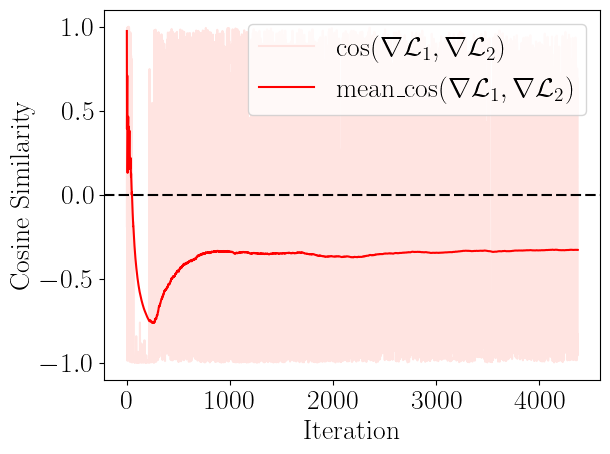}  \\
     &  &  &  \\
    \multicolumn{4}{c}{\qquad RNA} \\
    \multicolumn{4}{c}{\includegraphics[width=0.24\linewidth]{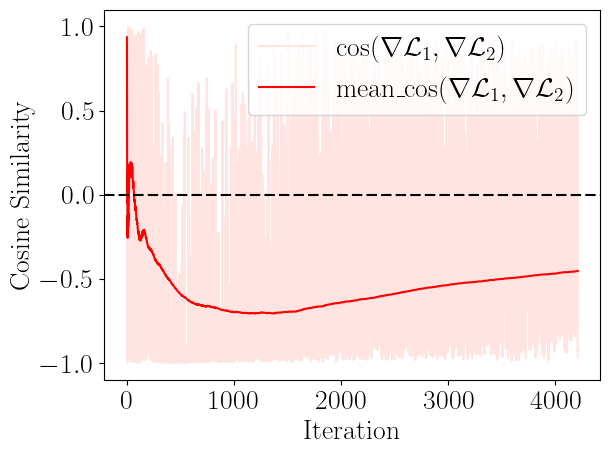}}  \\
     &  &  &  \\
     \multicolumn{4}{c}{\textbf{SO(3) Datasets}}\\
   \qquad  Cone & \qquad Fisher &\qquad  Line & \qquad Swiss \\
    \includegraphics[width=0.24\linewidth]{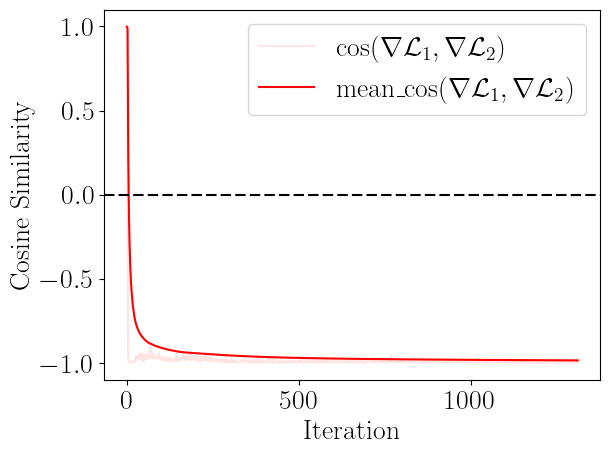}&\includegraphics[width=0.24\linewidth]{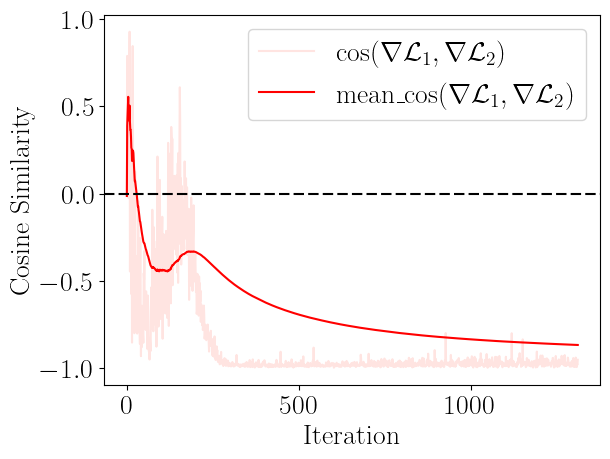}&\includegraphics[width=0.24\linewidth]{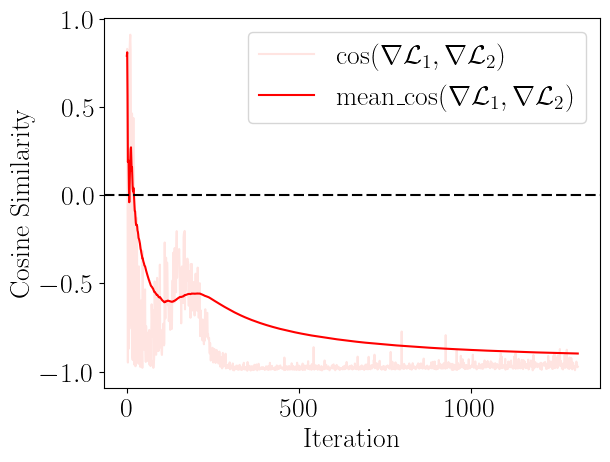} & \includegraphics[width=0.24\linewidth]{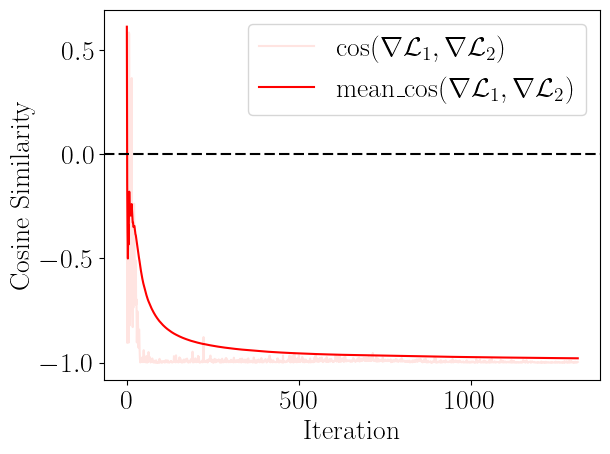}  \\
  \end{tabular}
  \caption{Cosine Similarity about $\nabla \mathcal{L}_1(\theta) $ and $\nabla \mathcal{L}_2(\theta)$ with $0\%$ r=t}
  \label{CosineSimilarity}
\end{figure}

\begin{figure}[]
  \centering
  \begin{tabular}{c}
   \textbf{All Datasets Multi-step Result}\\
    \includegraphics[width=0.99\linewidth]{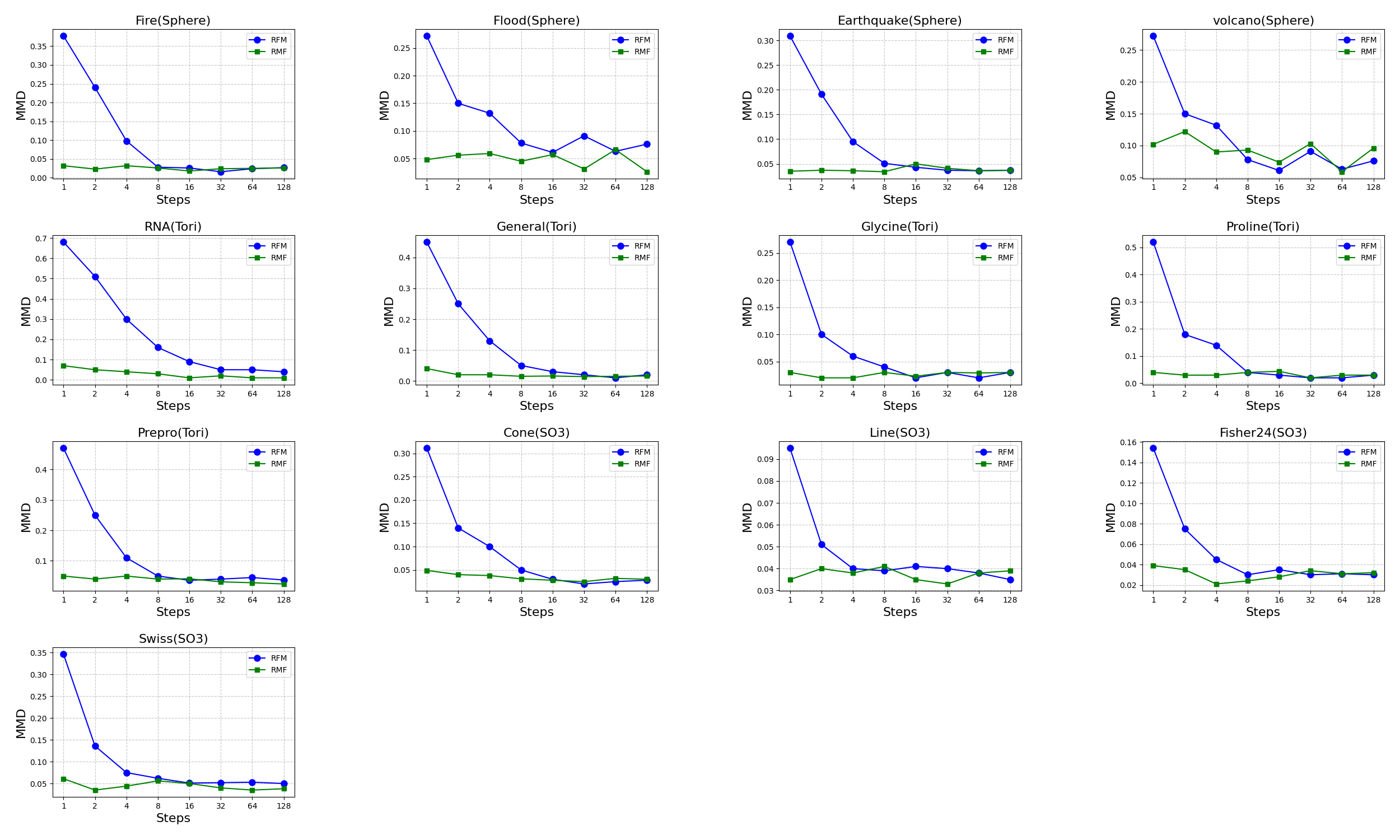} \\
  \end{tabular}
  \caption{Multi-step MMD comparison between RFM and RMF across all datasets.}
  \label{multi-step}
\end{figure}

\begin{table}[ht]
\belowrulesep=0pt
\aboverulesep=0pt
\centering
\caption{NLL($\downarrow$) on the Spherical Dataset. Standard deviation is estimated over 5 runs.}
\begin{tabular}{lcccc}
\toprule
 &Volcano  &Earthquake  &Flood  &Fire\\ \midrule
Dataset size &827  &6,120  &4,875  &12,809\\ \midrule
RDM &$-6.61\pm 0.97$&$-0.40\pm 0.05$&$0.43\pm 0.07$&$-1.38\pm 0.05$\\
RFM &$-\textbf{7.93}\pm \textbf{1.67}$&$-0.28\pm 0.08$&$0.42\pm 0.05$&$-1.86\pm 0.11$\\
G-LSD&$-4.96\pm 0.68$&$-0.93\pm 0.01$&$-0.38\pm 0.33$&$-2.14\pm 0.42$\\
G-PSD&$-3.50\pm 0.22$& $-0.63\pm 0.13$&$-0.76\pm 0.13$ & $-2.48\pm 0.71$ \\ 
G-ESD&$-4.49\pm 0.20$ &$-0.67\pm 0.08$ &$-0.88\pm 0.38$ &$-2.29\pm 0.08$ \\
\cmidrule(lr){1-5}
RMF&$-3.73\pm 0.41$  &$-1.08\pm 0.09$ &$-0.72\pm 0.11$ &$-2.24\pm 0.30$  \\
RMF-MT&$-5.60\pm 1.15$  &$-\textbf{1.38}\pm \textbf{0.161}$  &$-\textbf{0.907}\pm \textbf{0.11}$ &$-\textbf{2.75}\pm \textbf{0.287}$\\
\bottomrule
\end{tabular}
\label{tab:nll}
\end{table}

\begin{table}[ht]
\belowrulesep=0pt
\aboverulesep=0pt
\centering
\caption{NLL($\downarrow$) on the Torus Dataset. Standard deviation is estimated over 5 runs.}
\begin{tabular}{lccccc}
\toprule
 &General &Glycine&Proline& PrePro & RNA\\ \midrule
Dataset size &138,208&13,283 &7,634&6,910&9,478\\ \midrule
RDM &$1.04\pm 0.012$&$1.97\pm 0.012$&$0.12\pm 0.011$&$1.24\pm 0.004$&$-3.70\pm 0.592$\\
RFM &$1.01\pm 0.025$&$\textbf{1.90}\pm \textbf{0.055}$&$0.15\pm 0.027$&$1.18\pm 0.055$&$-\textbf{5.20}\pm \textbf{0.067}$\\
G-LSD&$0.99\pm 0.05$ & $1.99\pm 0.02$&$0.24\pm 0.07$&$1.11\pm 0.02$&$-4.15\pm 0.09$\\
G-PSD&$\textbf{0.95}\pm \textbf{0.02}$ & $1.94\pm 0.03$ &$\textbf{0.08}\pm \textbf{0.04}$& $1.10\pm 0.04$& $-4.40\pm0.13$\\ 
G-ESD&$0.99\pm 0.04$ & $1.95\pm 0.01$&$0.19\pm0.04$&$1.10\pm0.02$&$-4.61\pm0.07$\\
\cmidrule(lr){1-6}
RMF&$0.97\pm 0.01$ & $1.97\pm 0.01$&$0.21\pm 0.04$&$\textbf{1.02}\pm \textbf{0.04}$&$-3.79\pm 0.09$\\
RMF-MT&$0.993\pm 0.41$& $2.04\pm 0.11$ &$0.19\pm 0.06$&$1.084\pm 0.022$&$-4.68\pm 0.21$\\
\bottomrule
\end{tabular}
\label{tab:nll_pro}
\end{table}

\begin{table}[ht]
\belowrulesep=0pt
\aboverulesep=0pt
\centering
\caption{KLD($\downarrow$) on the Spherical Dataset.}
\begin{tabular}{lcccc}
\toprule
 &Volcano  &Earthquake  &Flood  &Fire\\ \midrule
Dataset size &827  &6,120  &4,875  &12,809\\ \midrule
RFM & 472 & 43 & 49 & 39\\
RCT & 280 & 20 & 18 & \textbf{17}\\
\cmidrule(lr){1-5}
RMF-MT & \textbf{140}  & \textbf{16} & \textbf{17}  & 19 \\
\bottomrule
\end{tabular}
\label{tab:kld}
\end{table}

\begin{table}[ht]
\belowrulesep=0pt
\aboverulesep=0pt
\centering
\caption{scRMSD($\downarrow$) on the Protein generation.}
\begin{tabular}{lcccc}
\toprule
step &100  &10  &5  &1\\ \midrule
Dataset size &\multicolumn{4}{c}{3,938}\\ \midrule
FrameFlow & 1.16 & 2.34 & 6.53 & N/A\\
Semigroup RMF & 4.14 & 2.72 & 4.62 & 5.28\\
\cmidrule(lr){1-5}
RMF(ours) & 7.74  & 6.36 & 7.13  & 7.46 \\
\bottomrule
\end{tabular}
\label{tab:protein}
\end{table}

\begin{table}[ht]
\belowrulesep=0pt
\aboverulesep=0pt
\centering
\caption{Dataset scale vs. network size in Ablation Studies. “data dimension $\times$ train samples” reports data dimension × number of training samples for each dataset; “network params” reports the number of RMF parameters used in that experiment.}
\begin{tabular}{lccccccc}
\toprule
 Data dim $S^N$ &3  &4  &8  &16 & 32 & 64 & 128\\ \midrule
data dim $\times$ train samples & 3$\times$50K & 4$\times$50K & 8$\times$50K & 16$\times$50K & 32$\times$50K & 64$\times$50K & 128$\times$50K\\
network params & 732K& 733K & 737K & 745K & 762K & 795K & 860K\\
\bottomrule
\end{tabular}
\label{tab:datascale1}
\end{table}

\begin{table}[ht]
\belowrulesep=0pt
\aboverulesep=0pt
\centering
\caption{Dataset scale vs. network size in other dataset. "D." is "data dimension". "T. S." is "train samples". "N. P." is "network params"}
\begin{tabular}{lccccccccc}
\toprule
 Dataset & Fire  & Flood  & Earthquake  & Volcano & General & PrePro & Glycine & Proline & RNA \\ \midrule
D. $\times$ T. S. & 2$\times$12,809 & 2$\times$4,875 & 2$\times$6,120 & 2$\times$827 & 2$\times$138,208 & 2$\times$6,910 & 2$\times$13,283 & 2$\times$7,634 & 7$\times$9,478\\
N. P. & 36M & 22.6M & 36M & 46M & 731K & 731K & 2.7M & 731K & 1.4M\\
\bottomrule
\end{tabular}
\label{tab:datascale2}
\end{table}

\begin{table}[ht]
\belowrulesep=0pt
\aboverulesep=0pt
\centering
\caption{RMF model downsizing ablation (MMD$\downarrow$). “base” denotes the original model; 1/16, 1/128, and 1/1024 denote reduced-size variants. “–” indicates experiments stopped once test performance deteriorated substantially.}
\begin{tabular}{lccccccccc}
\toprule
 Dataset & Fire  & Flood  & Earthquake  & Volcano & General & PrePro & Glycine & Proline & RNA \\ \midrule
model size & 36M & 22.6M & 36M & 46M & 731K & 731K & 2.7M & 731K & 1.4M\\ \midrule
base & 0.032 & 0.048 & 0.035 & 0.102 & 0.04 & 0.05 & 0.03 & 0.04 & 0.07\\ 
1/16 & 0.030 & 0.051 & 0.031 & 0.141 & 0.128 & 0.095 & 0.032 & 0.139 & 0.185\\
1/128 & 0.031 & 0.053 & 0.049 & 0.229 & - & 0.122 & 0.081 & - & -\\
1/1024 & 0.068 & 0.097 & 0.072 & - & - & - & - & - & -\\
\bottomrule
\end{tabular}
\label{tab:downsize}
\end{table}

\begin{table}[ht]
\belowrulesep=0pt
\aboverulesep=0pt
\centering
\caption{Low-data regime MMD two-sample evaluation. We report both MMD$\downarrow$ and permutation-test p-value$\uparrow$, computed by comparing 1000 generated samples with 1000 held-out test samples using 100 random permutations. Values of 0.01 indicate the minimum reported p-value resolution of the test.}
\begin{tabular}{lcccccccccc}
\toprule
 Dataset & Metric & Fire  & Flood  & Earthquake  & Volcano & General & PrePro & Glycine & Proline & RNA \\ \midrule
100$\%$ & MMD & 0.032 & 0.048 & 0.035 & 0.102 & 0.04 & 0.05 & 0.03 & 0.04 & 0.07\\ 
 & p-value & 0.171 & 0.465 & 0.405 & 0.752 & 0.059 & 0.158 & 0.227 & 0.128 & 0.01\\ \midrule
10$\%$ & MMD & 0.036 & 0.084 & 0.064 & 0.100 & 0.085 & 0.077 & 0.081 & 0.074 & 0.275\\ 
 & p-value & 0.107 & 0.049 & 0.089 & 0.108 & 0.01 & 0.01 & 0.326 & 0.01 & 0.01\\ \midrule
1$\%$ & MMD & 0.066 & 0.085 & 0.133 & 0.225 & 0.110 & 0.125 & 0.127 & 0.167 & 0.281\\ 
 & p-value & 0.01 & 0.01 & 0.01 & 0.039 & 0.01 & 0.01 & 0.01 & 0.01 & 0.01\\ 
\bottomrule
\end{tabular}
\label{tab:lowdata}
\end{table}

\begin{table}[ht]
\belowrulesep=0pt
\aboverulesep=0pt
\centering
\caption{Low-data regime leave-one-out 1-NN memorization diagnostic (accuracy $\to$ 0.5). A/B denote train-generated / test-generated comparisons, respectively. Each point is classified by the label of its nearest neighbor after excluding itself. Results are reported only on sphere-type datasets; torus datasets are omitted because reliable geodesic nearest-neighbor comparison would require a separate periodic-distance implementation.}
\begin{tabular}{lcccc}
\toprule
Training Data Fraction & Fire(A/B)  &Flood(A/B)  &Earthquake(A/B)  &Volcano(A/B)\\ \midrule
100$\%$ & 0.7845/0.796 & 0.632/0.612 & 0.703/0.746 & 0.944/0.981\\
10$\%$ & 0.7935/0.793 & 0.673/0.657 & 0.735/0.735 & 0.948/0.982\\ 
1$\%$ & 0.8595/0.867 & 0.685/0.679 & 0.81/0.75  & 0.957/0.982 \\
\bottomrule
\end{tabular}
\label{tab:knn}
\end{table}

\section{Visualization}\label{appG}
The visualization of generative proteins is presented in Figure \ref{fig:pro}. The visualization of the generation data for SO(3) is depicted in Figure \ref{fig:so3}.
\begin{figure}[!ht]
  \centering
  \begin{tabular}{cccc}
    General & Glycine& Proline & Prepro \\
    \includegraphics[width=0.24\linewidth]{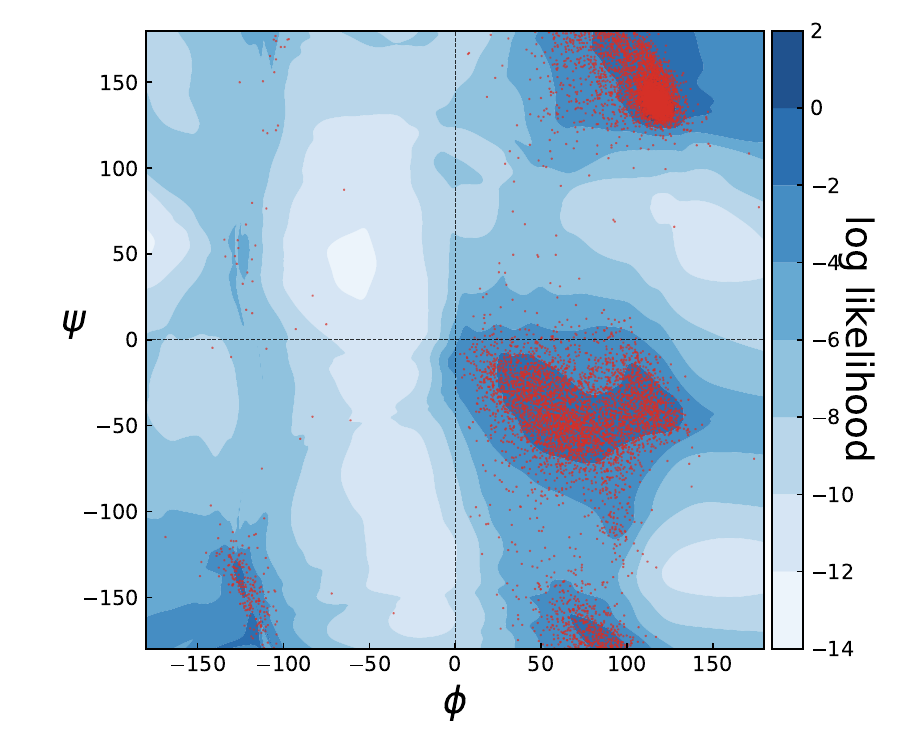}&\includegraphics[width=0.24\linewidth]{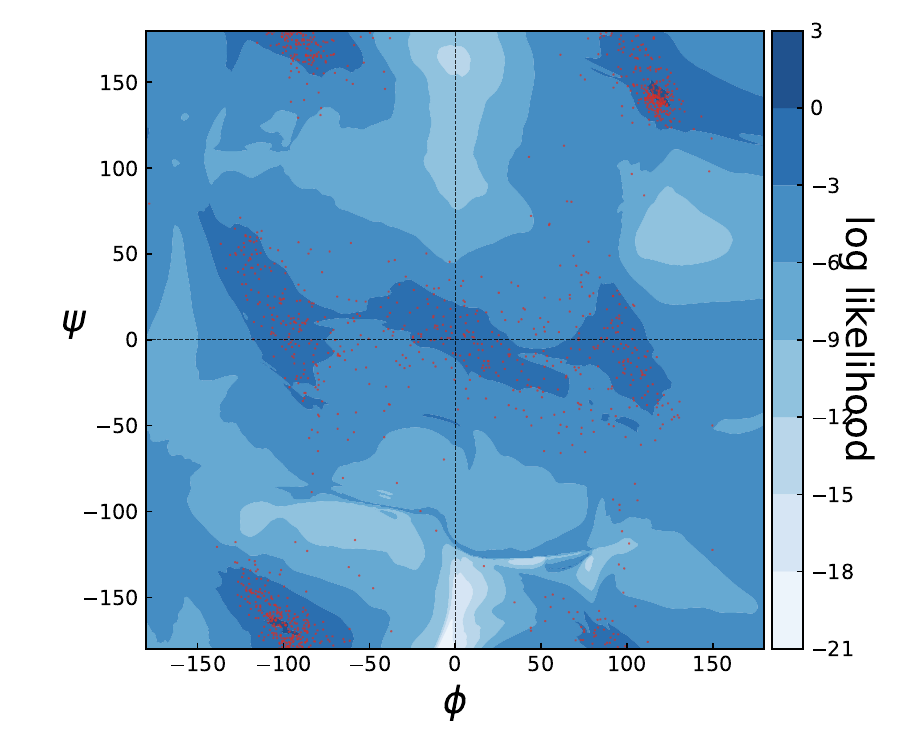}&\includegraphics[width=0.24\linewidth]{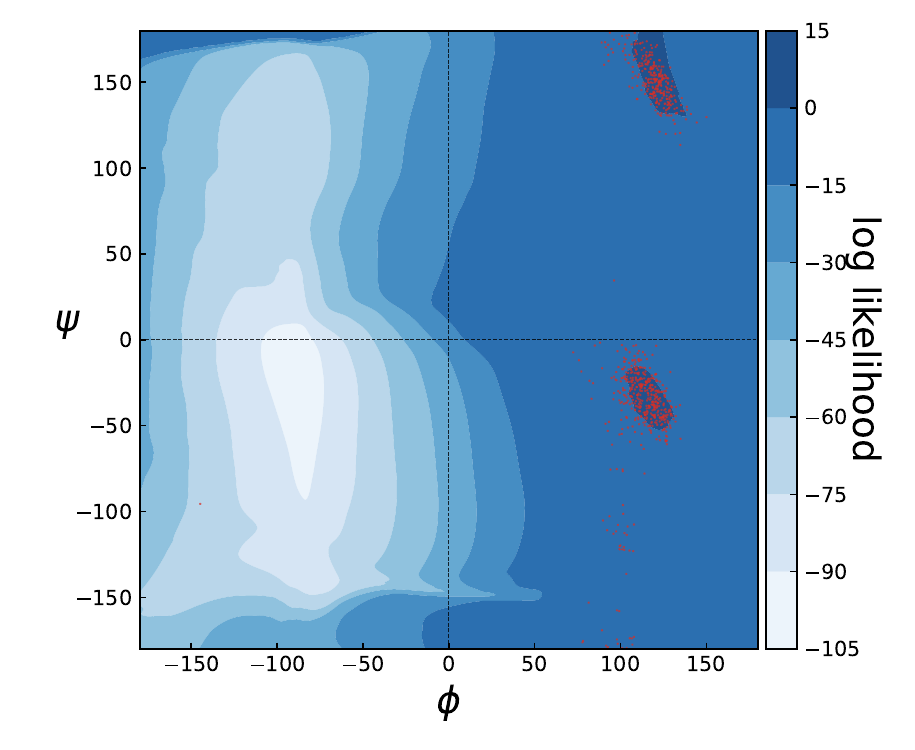} &\includegraphics[width=0.24\linewidth]{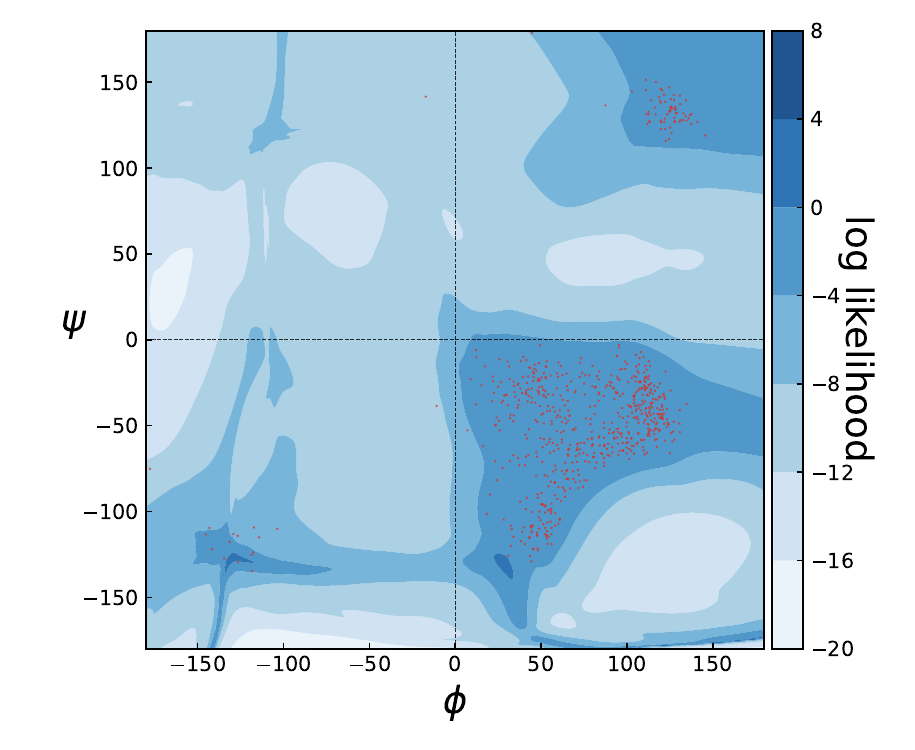}  \\
  \end{tabular}
  \caption{Ramachandran plots on the Torus Dataset.}
  \label{fig:pro}
\end{figure}

\begin{figure}[!h]
  \centering
  \begin{tabular}{ccccc}
    & Cone & Fisher & Line & Swiss \\
    \includegraphics[width=0.1\linewidth]{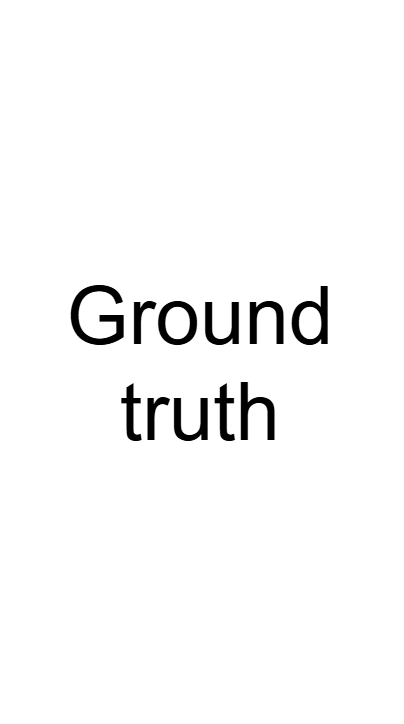}&\includegraphics[width=0.2\linewidth]{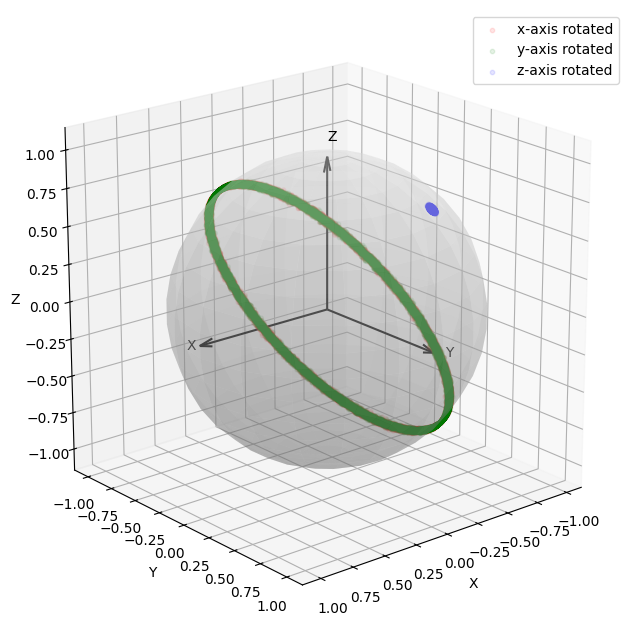}&\includegraphics[width=0.2\linewidth]{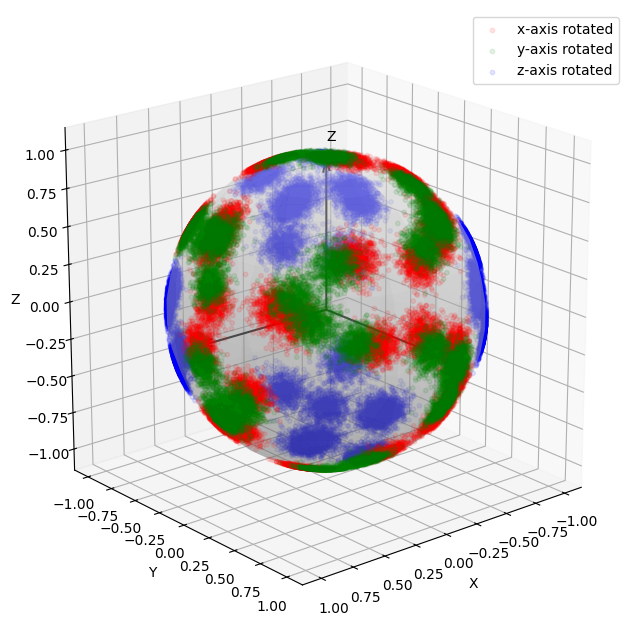}& \includegraphics[width=0.2\linewidth]{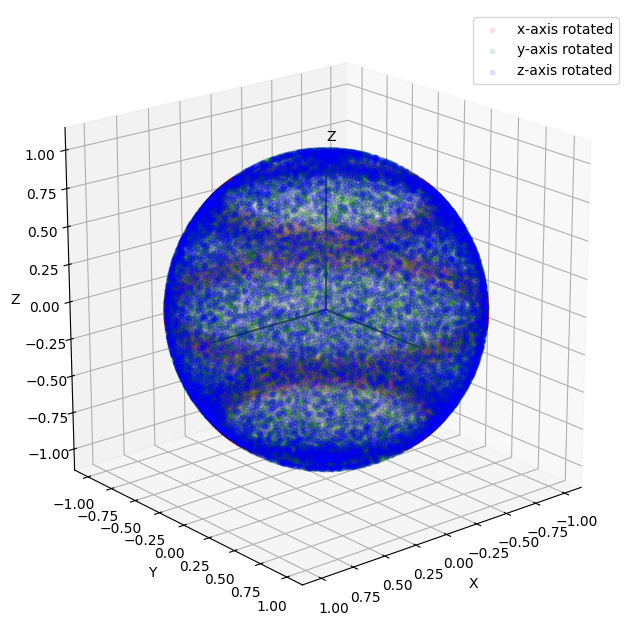}& \includegraphics[width=0.2\linewidth]{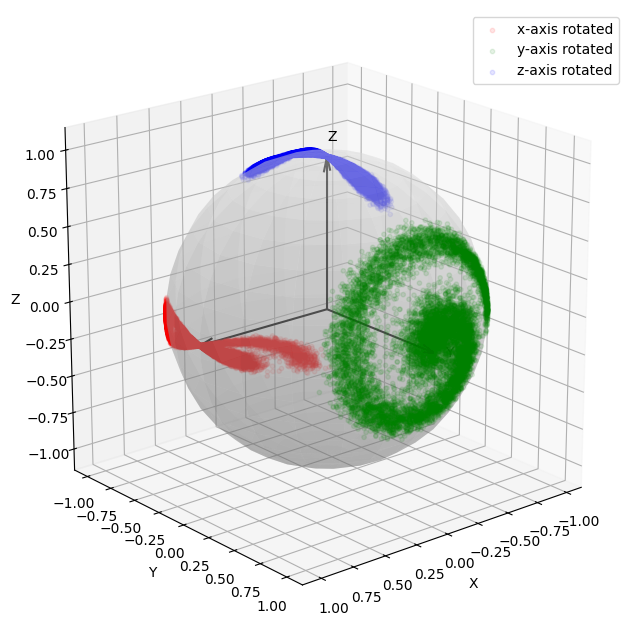} \\
     \includegraphics[width=0.1\linewidth]{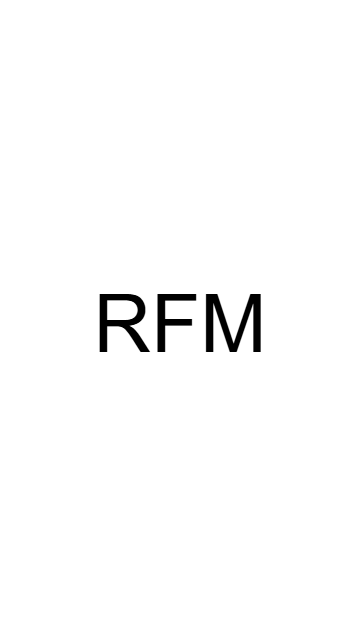}& \includegraphics[width=0.2\linewidth]{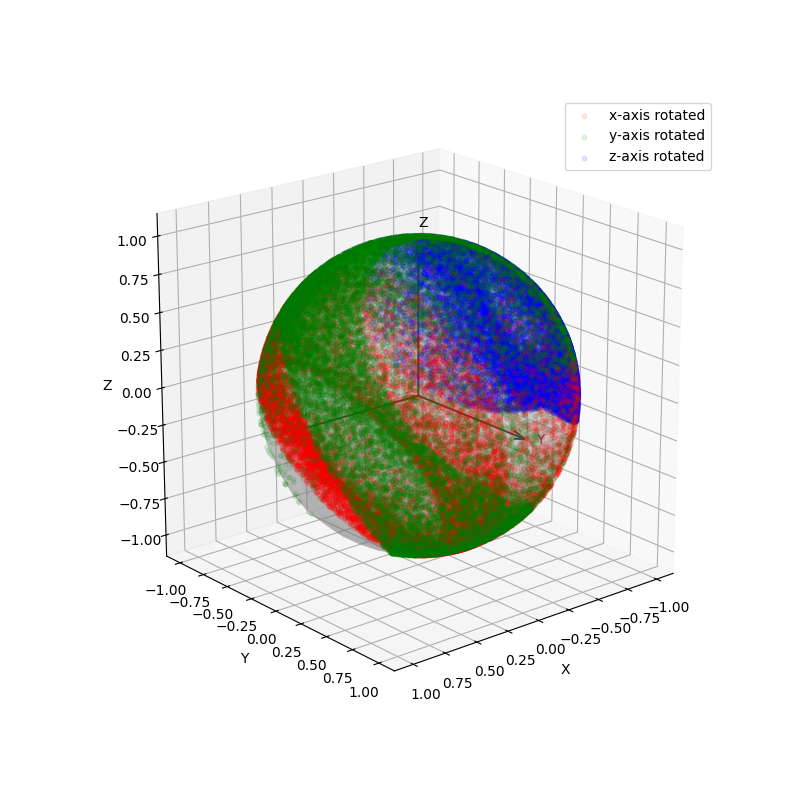} & \includegraphics[width=0.2\linewidth]{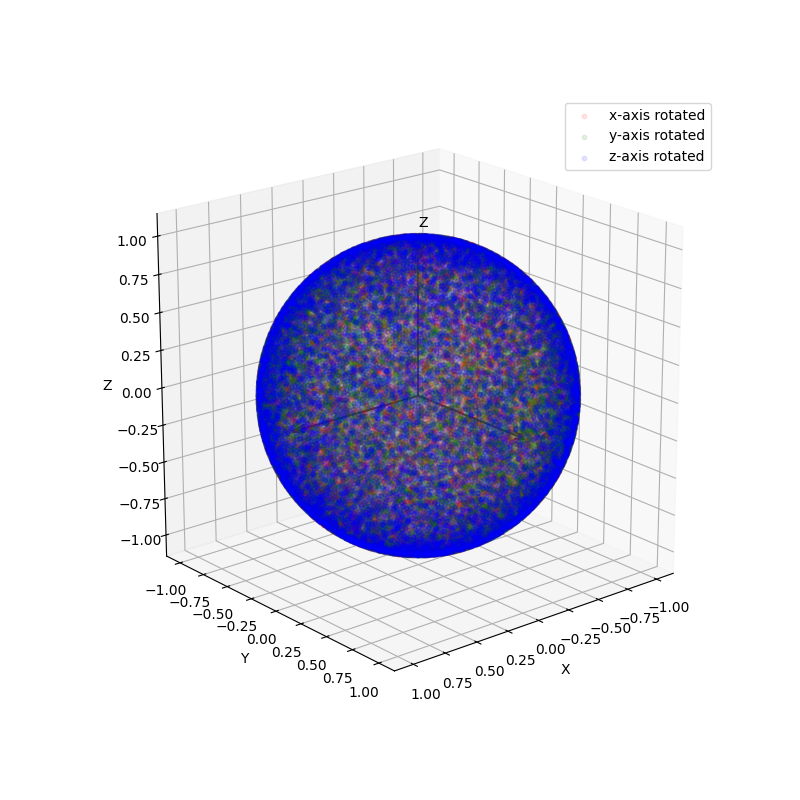}& \includegraphics[width=0.2\linewidth]{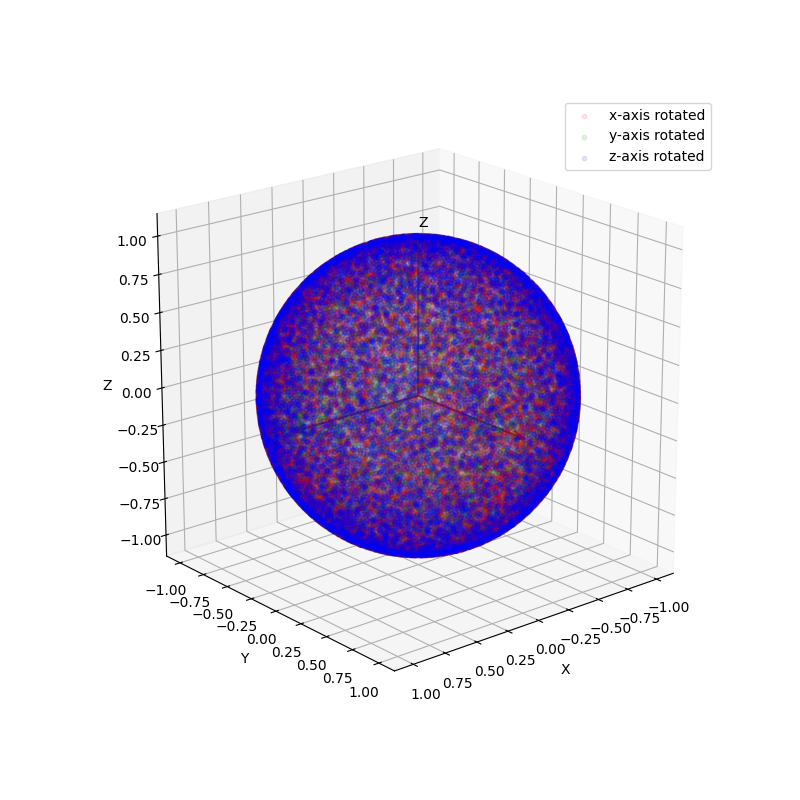}& \includegraphics[width=0.2\linewidth]{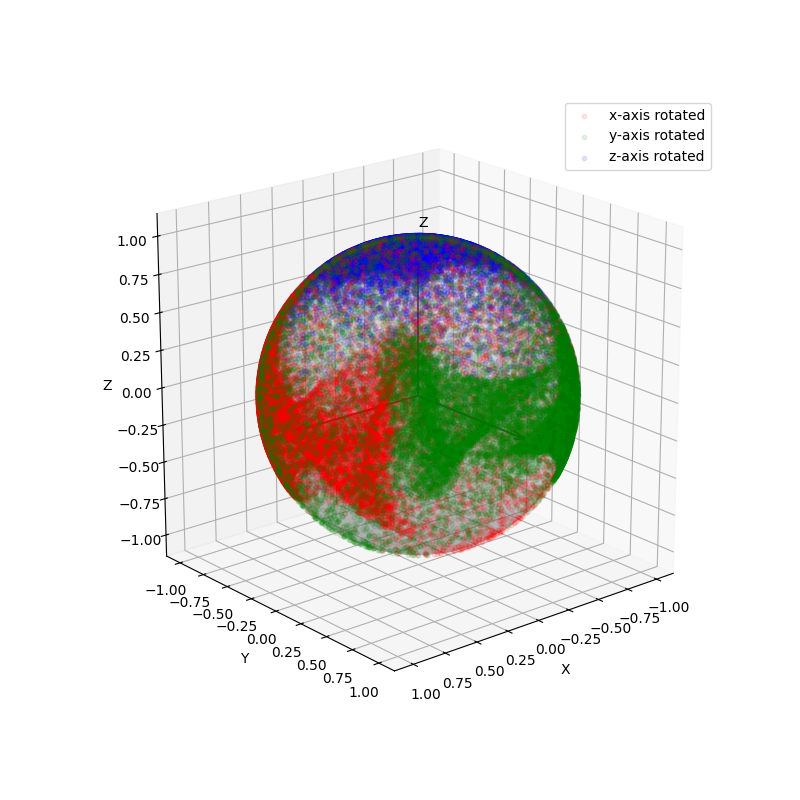}\\
     \includegraphics[width=0.1\linewidth]{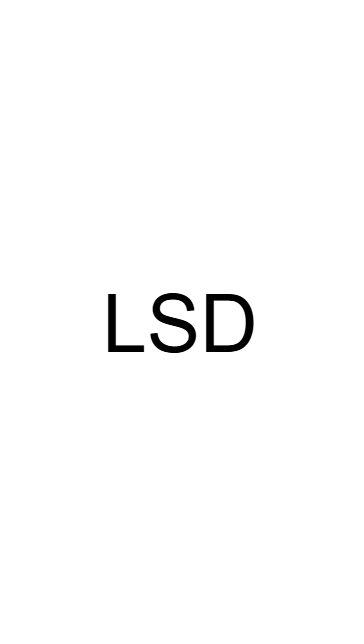}& \includegraphics[width=0.2\linewidth]{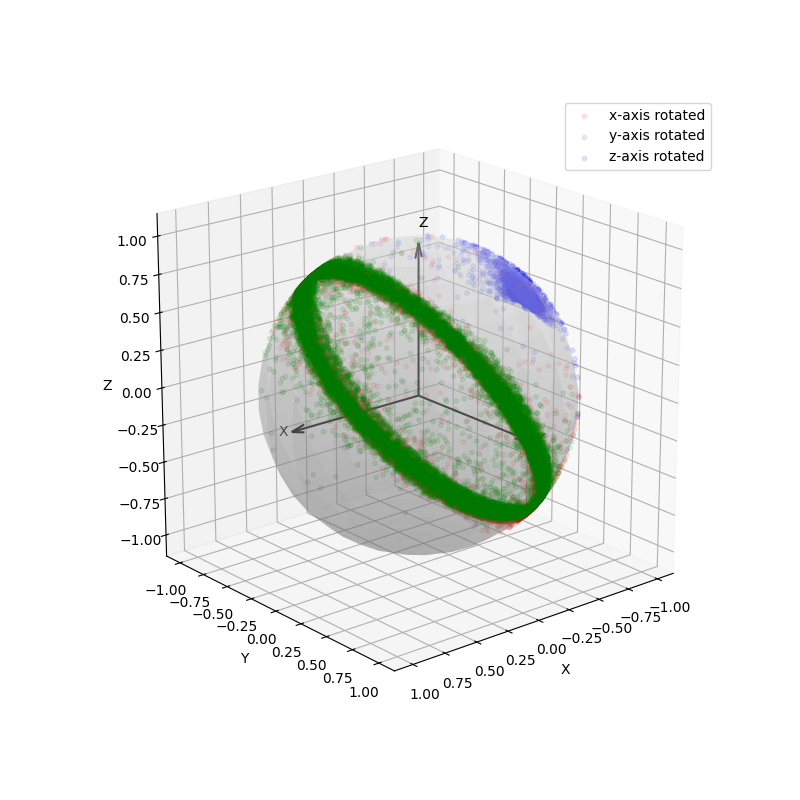} & \includegraphics[width=0.2\linewidth]{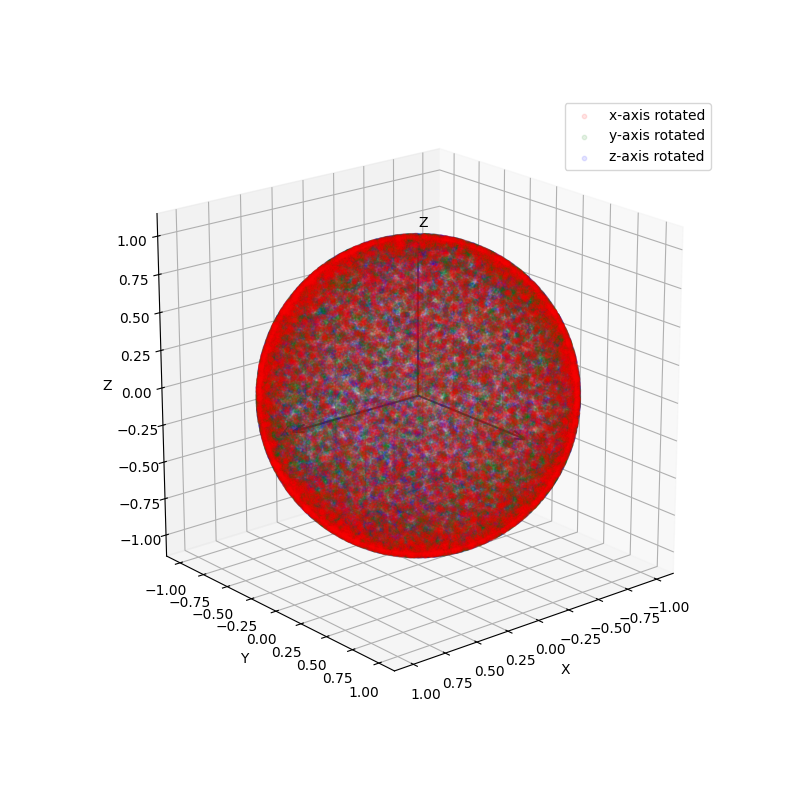}& \includegraphics[width=0.2\linewidth]{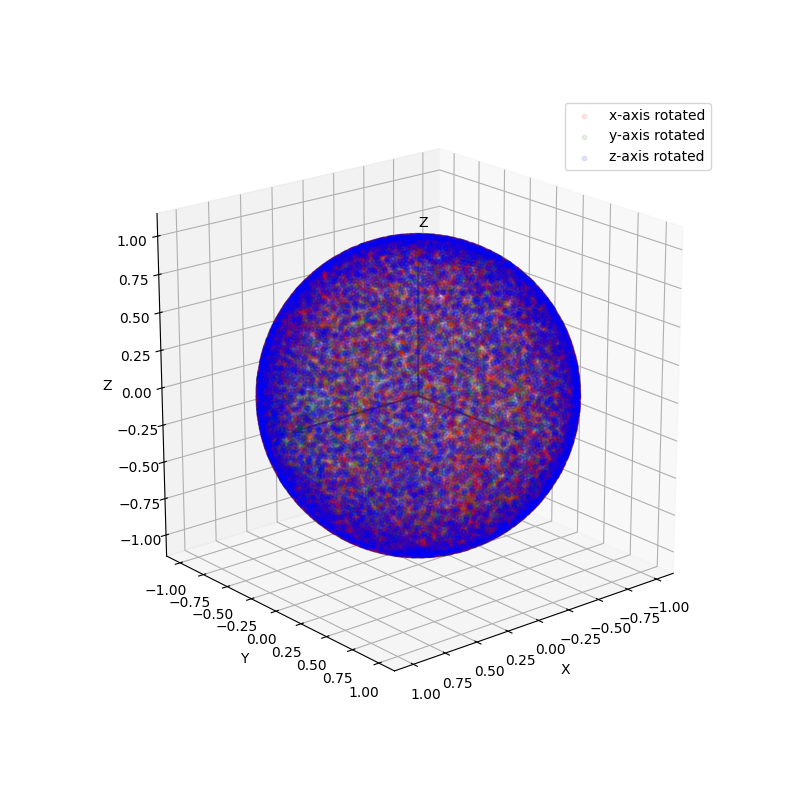}& \includegraphics[width=0.2\linewidth]{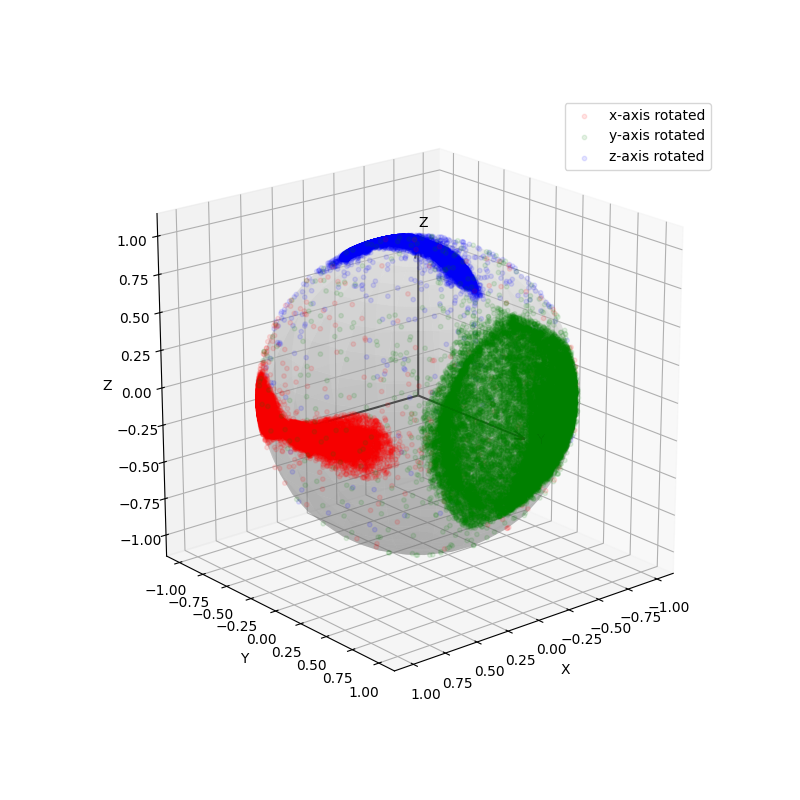}\\
     \includegraphics[width=0.1\linewidth]{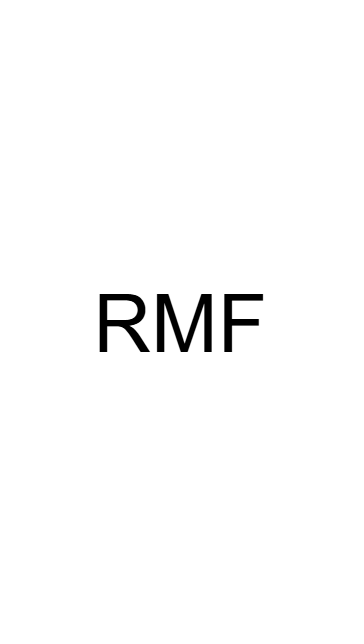}& \includegraphics[width=0.2\linewidth]{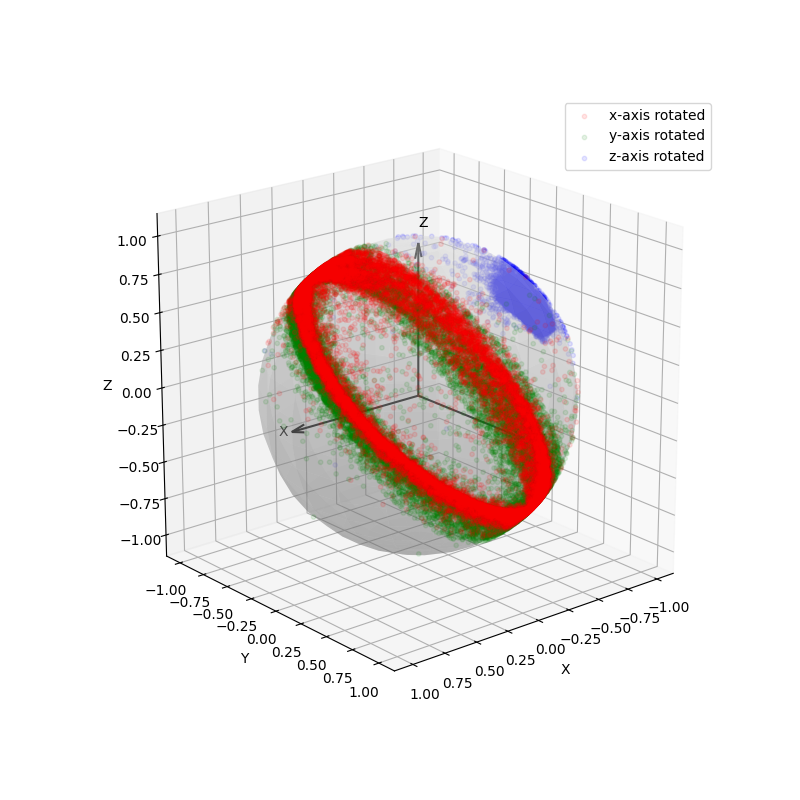}& \includegraphics[width=0.2\linewidth]{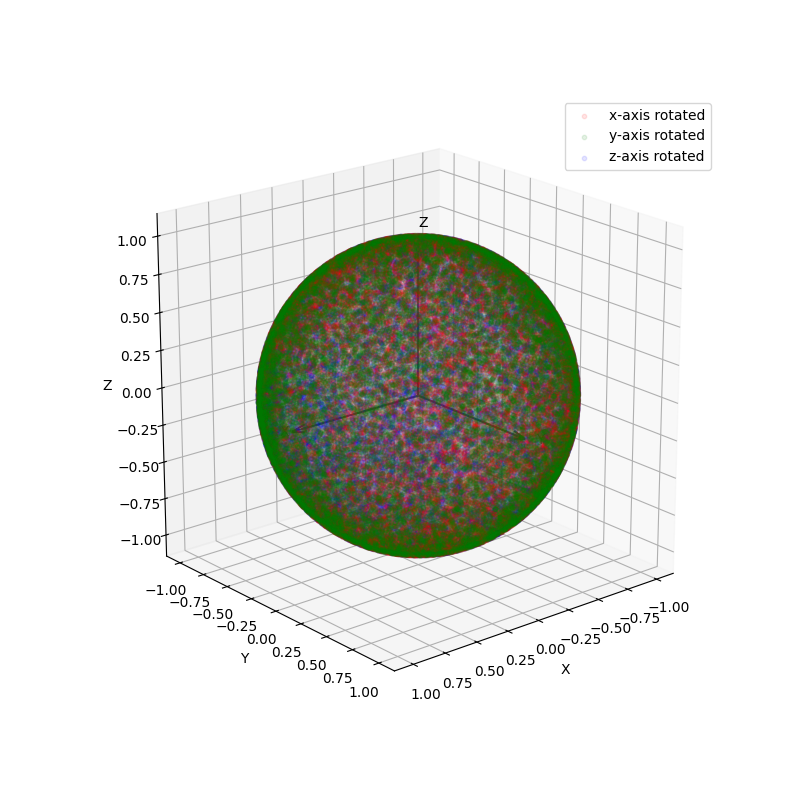}& \includegraphics[width=0.2\linewidth]{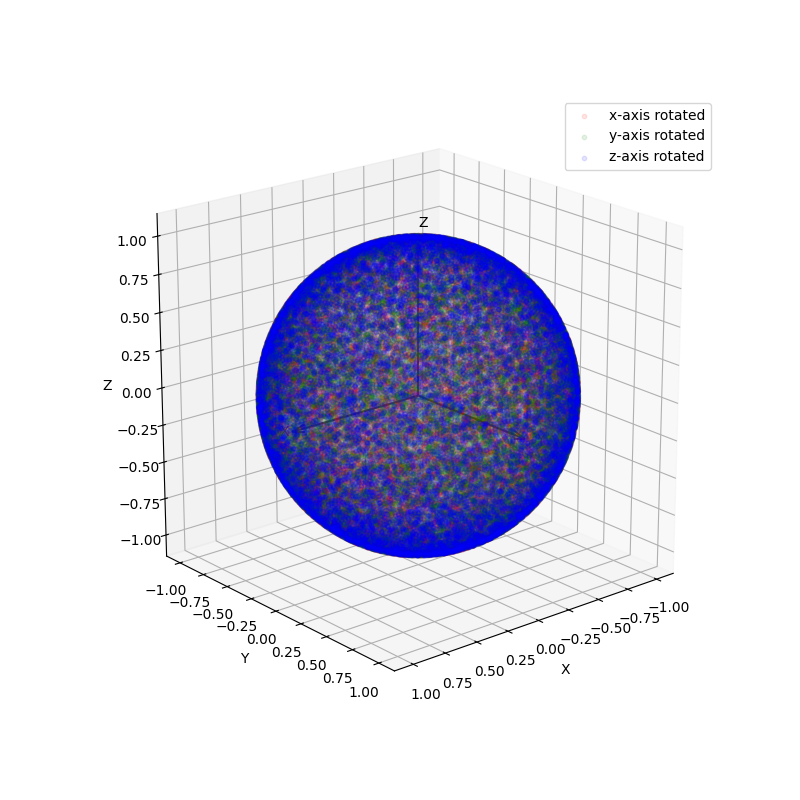}& \includegraphics[width=0.2\linewidth]{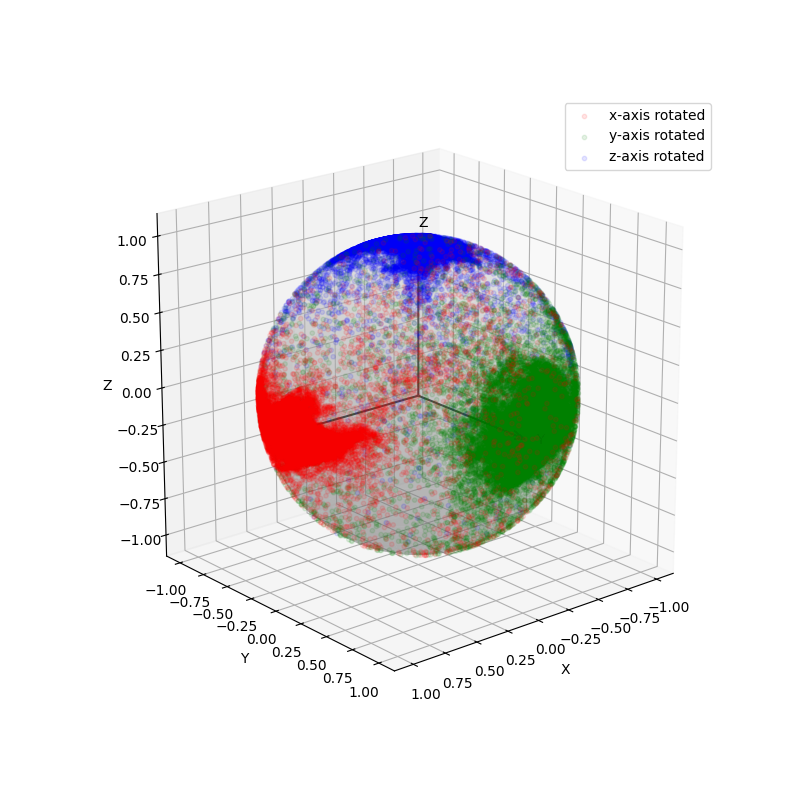} \\
     \includegraphics[width=0.1\linewidth]{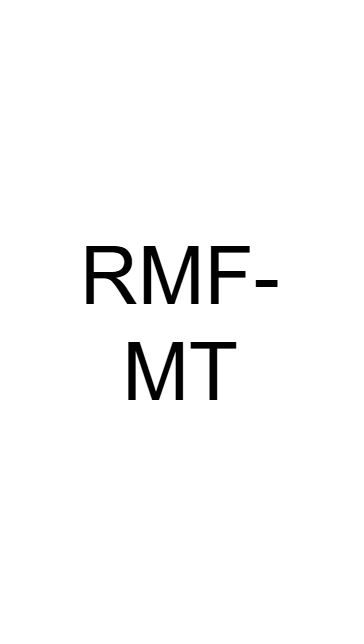}& \includegraphics[width=0.2\linewidth]{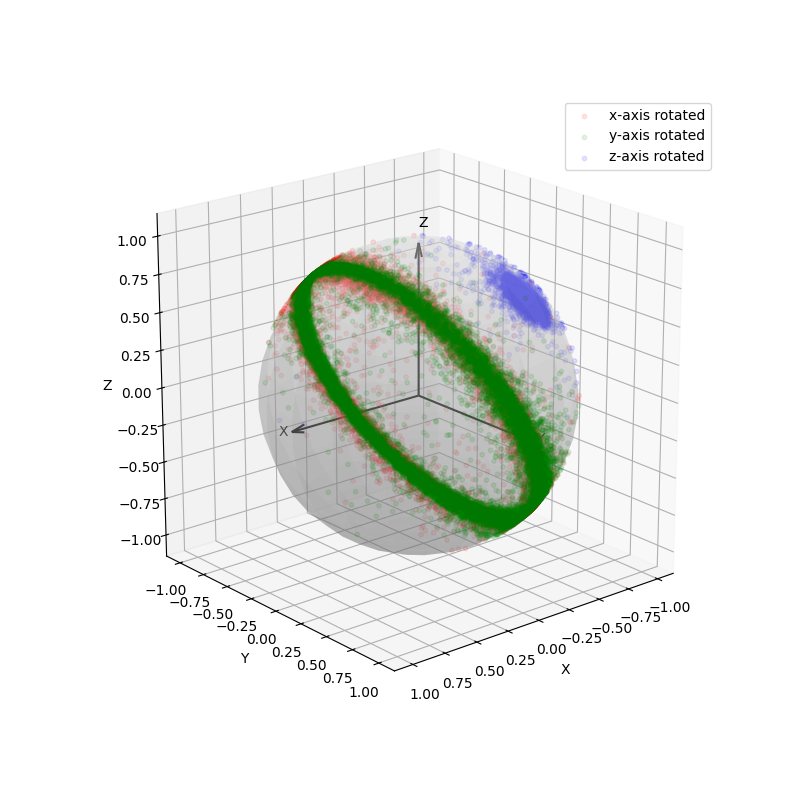} & \includegraphics[width=0.2\linewidth]{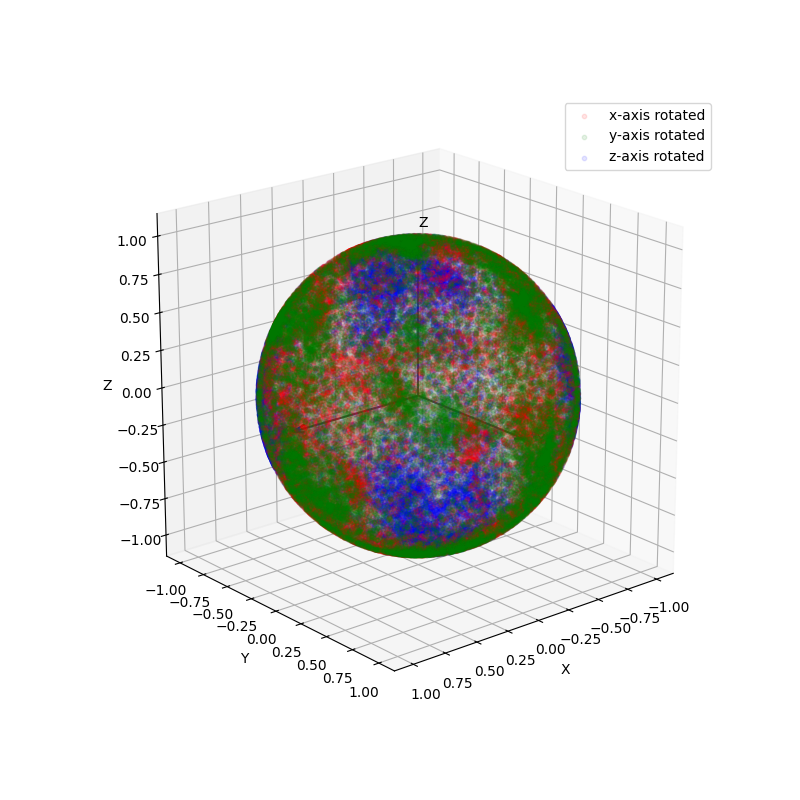}& \includegraphics[width=0.2\linewidth]{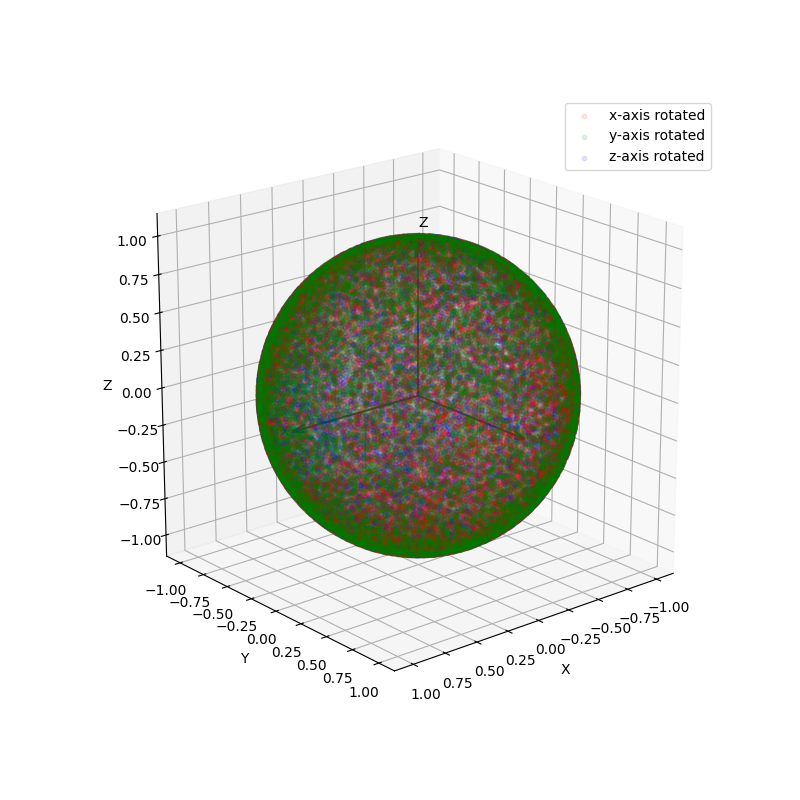}& \includegraphics[width=0.2\linewidth]{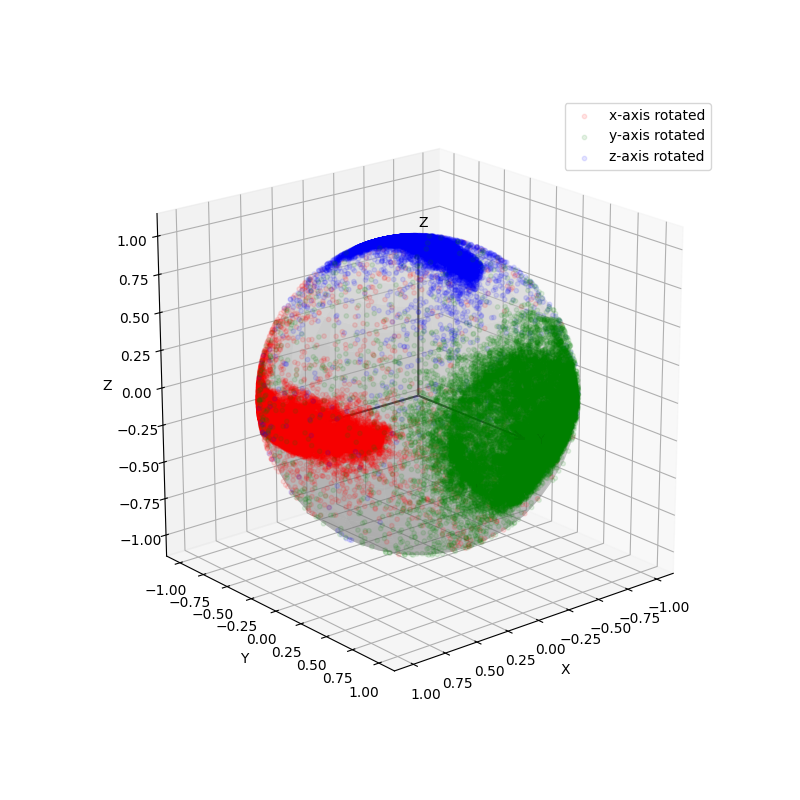}
  \end{tabular}
  \caption{SO(3) one-step sample figure }
  \label{fig:so3}
\end{figure}

\section{Additional Experimental Details}\label{appH}
In Table \ref{tab:experiment} and Table \ref{tab:experiment_so3}, we present the hyperparameter settings of our model. The spherical, torus, SO(3), and CFG experiments are all summarized in these tables. The ablation study reuses the parameter configuration of the spherical model, with modifications limited solely to the input data.
\begin{table}[ht]
\belowrulesep=0pt
\aboverulesep=0pt
\centering
\caption{The Spherical and Torus experimental hyperparameters are as follows.}
\begin{tabular}{l|cccc|ccccc}
\toprule
 &Volcano  &Earthquake  &Flood  &Fire  &General&Glycine&Proline&PrePro&RNA\\ \midrule
Dateset size &827  &6,120  &4,875  &12,809  &138,208&13,283&7,634&6.910&9,478\\ \midrule
learning rate & 5e-4& 5e-4& 5e-4& 5e-4& 5e-4& 5e-4& 5e-4& 5e-4& 5e-4\\
number of layers &12  &10  &7  &10  &4 &10 &4 &4 &6\\
hidden dimensions &2048& 2048 & 2048 & 2048 &512 & 512& 512& 512& 512\\ 
r=t $\%$ & 75$\%$& 75$\%$& 75$\%$& 75$\%$& 75$\%$& 75$\%$& 75$\%$& 75$\%$& 75$\%$\\
batch size &8192  &4096  &8192  &8192  &4096 &2048 &2048 &2048 & 2048\\
input dim &3 & 3 & 3 & 3 & 2 & 2 &2 & 2 & 7\\
epoch &2000 & 700 & 700 & 600 & 5000 & 5000 & 5000 & 5000 & 2000\\\midrule
optimizer& \multicolumn{9}{c}{AdamW}\\
lr schedule& \multicolumn{9}{c}{CosineAnnealingLR}\\
weight decay& \multicolumn{9}{c}{0.01}\\
(r,t) cond& \multicolumn{9}{c}{(r, t)}\\
\bottomrule
\end{tabular}
\label{tab:experiment}
\end{table}

\begin{table}[ht]
\belowrulesep=0pt
\aboverulesep=0pt
\centering
\caption{The SO(3) experimental hyperparameters are as follows.}
\begin{tabular}{l|ccccc}
\toprule
 &Cone  &Fisher  &Line  &Swiss Roll & CFG\\ \midrule
Dateset size &20K  &40K  &40K  &40K & 60k\\ \midrule
learning rate & 5e-4& 5e-4& 5e-4& 5e-4& 5e-4\\
number of layers &4  &4  &4  &8 & 4\\
hidden dimensions &512& 512 & 512 & 1024 & 512\\ 
r=t $\%$ & 10$\%$& 10$\%$& 10$\%$& 10$\%$ &$10\%$\\
batch size &1024  &1024  &1024  &1024  &1024\\
input dim &9 & 9 & 9 & 9& 9\\
epoch &200 & 200 & 200 & 200 & 200\\ \midrule
optimizer& \multicolumn{5}{c}{AdamW}\\
lr schedule& \multicolumn{5}{c}{CosineAnnealingLR}\\
weight decay& \multicolumn{5}{c}{0.01}\\
(r,t) cond& \multicolumn{5}{c}{(r, t)}\\
\bottomrule
\end{tabular}
\label{tab:experiment_so3}
\end{table}

\end{document}